\documentclass[11pt]{article}
\pdfoutput=1
\usepackage{fullpage}
\usepackage[margin=1in]{geometry}
\usepackage[utf8]{inputenc}

\usepackage{graphicx}

\usepackage{amsmath}
\usepackage{amssymb}
\usepackage{amsthm}

\usepackage{thmtools}
\usepackage{thm-restate}
\usepackage{bbm}
\usepackage{bm}
\usepackage{verbatim}
\usepackage[pagebackref]{hyperref}
\hypersetup{colorlinks=true,citecolor=blue,linkcolor=blue}
\usepackage{array}
\usepackage{caption}
\usepackage{subcaption}
\usepackage{float}
\usepackage{changepage}
\usepackage{enumitem}
\usepackage{cases}
\usepackage[dvipsnames]{xcolor}
\usepackage{algorithm}
\usepackage{algpseudocode}
\usepackage{cleveref}

\usepackage{xfrac}
\usepackage{tikz, pgfplots}

\usepackage[authoryear]{natbib}

\newtheorem{theorem}{Theorem}[section]
\newtheorem*{theorem*}{Theorem}
\newtheorem{lemma}[theorem]{Lemma}
\newtheorem{definition}[theorem]{Definition}

\newtheorem*{example*}{Example}

\let\Pr\undefined

\DeclareMathOperator*{\Pr}{\mathbb{P}}

\DeclareMathOperator*{\E}{\mathbb E}
\DeclareMathOperator*{\argmax}{argmax}

\DeclareMathOperator{\sign}{sign}

\newcommand{\ind}[1]{\mathbb{I}\left[#1\right]}

\newcommand{\expect}[2]{\mathbb{E}_{#1}\left[#2\right]}

\usepackage{amsmath,amsfonts,bm, amsthm}
\usepackage{enumitem}

\def\eqref#1{equation~\ref{#1}}

\def\1{\bm{1}}

\def\vtheta{{\bm{\theta}}}

\DeclareMathAlphabet{\mathsfit}{\encodingdefault}{\sfdefault}{m}{sl}
\SetMathAlphabet{\mathsfit}{bold}{\encodingdefault}{\sfdefault}{bx}{n}

\newcommand{\R}{\mathbb{R}}

\newcommand{\softmax}{\mathrm{softmax}}

\newcommand{\tr}{\mathsf{T}}
\newcommand{\sm}{\mathrm{softmax}}
\newcommand{\bos}{\texttt{BOS}}
\newcommand{\dout}{d_{\mathrm{out}}}

\newcommand{\dist}{D}

\newcommand{\statesp}{\Theta}

\newcommand{\state}{\theta}
\newcommand{\vstate}{\bm{\state}}
\newcommand{\mstate}{\mu}

\newcommand{\tokensp}{\statesp}
\newcommand{\token}{\state}
\newcommand{\vtoken}{\bm{\token}}

\newcommand{\act}{a}
\newcommand{\mact}{\pi}
\newcommand{\vact}{\bm{\act}}
\newcommand{\vmact}{\bm{\mact}}
\newcommand{\actsp}{A}

\newcommand{\BR}{\text{BR}}
\newcommand{\QBR}{\text{QBR}}

\newcommand{\extreg}{\textsc{ExtReg}}

\newcommand{\regret}{\textsc{Regret}}

\newcommand{\model}{\mathcal{M}}
\newcommand{\polya}{\mathcal{M}_{\text{Polya}}}

\newcommand{\dtv}{d_{\text{TV}}}

\newcommand{\loss}{U}

\newcommand{\DB}{\text{DB}}

\newcommand{\dtvnext}{d_{\text{NT}}}

\newcommand{\tightmargins}[1]{\vspace{#1}}

\title{Next-Token Prediction and Regret Minimization}
\author{Mehryar Mohri\\
Google Research\\
\texttt{mohri@google.com}
\and
Clayton Sanford\\
Google Research\\
\texttt{chsanford@google.com}
\and
Jon Schneider\\
Google Research\\
\texttt{jschnei@google.com}
\and
Kiran Vodrahalli\\
Google DeepMind\\
\texttt{kirannv@google.com}
\and
Yifan Wu\thanks{Work done when Yifan Wu was a student researcher with Google Research. }\\
Microsoft Research\\
\texttt{yifan.wu2357@gmail.com}
}
\date{}

\allowdisplaybreaks
\begin{document}

\maketitle




\begin{abstract}
We consider the question of how to employ next-token prediction algorithms in adversarial online decision making environments. Specifically, if we train a next-token prediction model on a distribution $\mathcal{D}$ over sequences of opponent actions, when is it the case that the induced online decision making algorithm (by approximately best responding to the model's predictions) has low adversarial regret (i.e., when is $\mathcal{D}$ a \emph{low-regret distribution})?

For unbounded context windows (where the prediction made by the model can depend on all the actions taken by the adversary thus far), we show that although not every distribution $\mathcal{D}$ is a low-regret distribution, every distribution $\mathcal{D}$ is exponentially close (in TV distance) to one low-regret distribution, and hence sublinear regret can always be achieved at negligible cost to the accuracy of the original next-token prediction model. In contrast to this, for bounded context windows (where the prediction made by the model can depend only on the past $w$ actions taken by the adversary, as may be the case in modern transformer architectures), we show that there are some distributions $\mathcal{D}$ of opponent play that are $\Theta(1)$-far from any low-regret distribution $\mathcal{D'}$ (even when $w = \Omega(T)$ and such distributions exist).  Finally, we complement these results by showing that the unbounded context robustification procedure can be implemented by layers of a standard transformer architecture, and provide empirical evidence that transformer models can be efficiently trained to represent these new low-regret distributions.
\end{abstract}

\tightmargins{-2mm}
\section{Introduction}

\tightmargins{-2mm}

Large language models are trained to perform well at the task of next-token prediction: given some substring of text, estimate the conditional distribution of the next word/token. Increasingly, there is a focus on using these models to perform a far broader set of tasks, including making strategic decisions on our behalf \citep{chen2021decisiontransformerreinforcementlearning, park2025llmagentsregretcase, krishnamurthy2024largelanguagemodelsexplore, nie2025evolveevaluatingoptimizingllms}.

In many real applications, online decision-making is assisted with a next-token predictor. In algorithmic trading, next-token predictors are trained to predict market moves (e.g., \citealt{zhang2019deeplob, kaggle_twosigma_financial_news_2018}). The ``tokens'' here are not words, but the next element of a market sequence, e.g., the next price move or order-book event, conditioned on the recent history. 
Decision makers are traders who decide policies with the assistance of the next-token predictor. 
In supply-chain and inventory management, decision-making is similarly assisted with demand predictors. 
Next-token predictors are trained for item-level demand trajectories across many related products and locations (e.g., \citealt{salinas2020deepar}). 
The ``tokens'' here are the next elements of an operational sequence, e.g., the next-period demand realization, replenishment arrival. 
Decision makers are inventory planners who choose replenishment policies, e.g., order quantities, transshipment decisions. The payoffs of the decision makers are jointly determined by their decisions and the predicted payoff-relevant state.

We formalize the problem as training such a next-token predictor to play a repeated game. Rock-paper-scissors is one simple example of a game. 
Like in next-token prediction, the decision maker has to consider the historical actions taken in the game so far (a subsequence of tokens) and, from this, come up with a new mixed action to take (a distribution over next tokens). If the opponent's actions are stochastically drawn by nature, then it makes sense that playing well in this game directly corresponds to how well our model can predict the next token. However, if we think of the tokens as the adversary's actions, where things differ is in how these tokens are generated -- instead of being stochastically sampled from a large data set, they are adversarially chosen by an opposing player who wants the model to fail. One basic property we might desire from these models in such settings is \emph{adversarial regret minimization}. That is, regardless of what actions the adversary takes, our model does at least as well as if it always played the best fixed action in hindsight. 

We would like a predictor to perform well when the states are stochastically drawn, and guarantees regret minimization when they are adversarially generated. This raises the question: can we achieve a best-of-both-worlds guarantee from a next-token predictor?  Are regret minimization and next-token prediction compatible goals? When is it the case that training a next-token predictor on a dataset (e.g., of game transcripts) will produce a low-regret learning algorithm? Are there ways to automatically augment a data set with more data so the resulting models have less regret? What alternatives to next-token prediction are there when training these models? 

\subsection{Our Results}

We study an online decision making setting where a decision maker needs to take actions in response to a changing state of nature (e.g. an adversary's action in a game, a current stock price, etc.). The decision maker has access to a next-token prediction model trained on some distribution of state sequences, and would like use the predictions from this model to help them make utility optimizing decisions.

Importantly, they would like to achieve the best-of-both-worlds guarantee. 
This leads to the question of whether it is possible to \emph{robustify} a decision-model: take a model $\model_0$ and produce a model $\model$ that represents a similar distribution over sequences as $\model_0$, while guaranteeing low regret against any adversary.

We prove the following results.

\begin{itemize}[leftmargin=*]

\item First, we remark that there exist next-token prediction models such that if a decision maker approximately best responds to these predictions (e.g., via a quantal best response), they guarantee sublinear regret. In particular, quantal best responses to the Polya urn process closely simulate the classical Hedge learning algorithm \textbf{(\Cref{lem:adversarial_qr_regret})}. 

\item Second, we positively answer the question of robustification by showing that given any next-token prediction model $\model_0$ it is possible to produce a model $\model$ such that i. quantal best responses to the predictions of $\model$ lead to sublinear regret, and ii. the TV distance between the distributions represented by $\model$ and $\model_0$ is arbitrarily small \textbf{(\Cref{thm: main})}. 

\item We then shift our attention to prediction models with bounded context length (i.e., prediction models whose outputs can only depend on the previous $L$ tokens). In contrast to the previous result, we show that such models are in general impossible to robustify \textbf{(\Cref{thm: same context length impossibility})}. However, if the robustified model is allowed to use a larger context length $L'$, it is possible to produce a robust model with $O(1/\sqrt{L' - L})$ per-round regret \textbf{(\Cref{thm: bounded context no regret})}.

\item Finally, we address the question of whether it is actually possible to \emph{train} robust models, with a focus on transformer models. We provide two pieces of evidence towards an affirmative answer to this question. First, we show that transformer models can effectively represent the robustified models of \Cref{thm: main} with a mild increase in size \textbf{(\Cref{thm:transformer})}. Second, we provide experimental evidence that it is possible to train small transformers to represent robustified versions of simple distributions \textbf{(\Cref{sec: experiments})}.

\end{itemize}


\subsection{Related Work}\label{apdx:relatedwork}
Our study is at the intersection of decision-making in online learning as well as modern transformer architectures in deep learning.

\paragraph{Online Decision-Making with Finite Automata} Classically, there have been many studies of online decision-making for model families defined by classes of finite automata \citep{rubinstein86, benporath90, lehrer09, piccione93}, though these earlier works are typically in the context of repeated games (which, while related, is distinct from the online learning setting we study in this work). We can view the connection to this earlier work by considering a transformer to implement a class of finite automata.

\paragraph{Language Models and Online Learning}\citet{park2025llmagentsregretcase} is a particularly relevant modern study that studies the behavior of large language models (LLMs) as game theoretic agents in both online learning and game theory, and is perhaps the first study to directly examine whether a transformer-based architecture can also be a no-regret agent. This work focuses more on the empirical behavior of existing LLMs and also defines a complex regret-based training objective by which to train transformers. Comparatively, we present distinct and simpler algorithms to achieve the goal of low-regret transformer models and present results in different settings, focusing on the relation between next-token-prediction and regret.

\citet{marsden2024provablelengthgeneralizationsequence} is another work investigating connections between length generalization in next-token-prediction and online sequence prediction, albeit in the specialized setting of online linear dynamical systems.

\paragraph{Transformers for Decision Making} The idea of using transformers for decision-making has also long been present in the deep learning literature. \citet{chen2021decisiontransformerreinforcementlearning} proposed to use transformer models for decision-making, and \citet{nie2025evolveevaluatingoptimizingllms} has built on this work in the era of large language models, exploring multiple algorithms for transforming an existing LLM into a model that can perform in-context exploration in online decision-making settings. \citet{krishnamurthy2024largelanguagemodelsexplore} also explores the connection between LLMs and decision-making settings, but again focuses on existing LLMs and investigating multi-arm bandit environments via online in-context learning. Finally, \citet{vallinder2024culturalevolutioncooperationllm} proposes another approach to modify the behavior of LLM agents in an online decision-making setting via evolving prompts.

\tightmargins{-2mm}
\section{Model and Preliminaries}

\tightmargins{-2mm}
\paragraph{Notation} We use $\ind{A}$ to denote the indicator function of expression $A$, which takes the value $1$ when $A$ is true, and $0$ otherwise. We generally denote sequences of elements in bolded letters (e.g., $\vtoken$), elements of these sequences with subscripts ($\token_t$), and subsegments of these sequences with superscripts ($\vtoken^{a:b} = (\token_a, \token_{a+1}, \dots, \token_{b})$, $\vtoken^{b}  = (\token_1, \dots, \token_{b})$). Full proofs are generally deferred to Appendix~\ref{app:omitted} for the sake of brevity.

\tightmargins{-2mm}
\subsection{Next-Token Prediction}

\tightmargins{-1mm}
The problem of \emph{next-token prediction} can be formally stated as follows. We are given a distribution $\dist \in \Delta(\tokensp^T)$ over sequences of $T$ tokens from an alphabet $\tokensp$. The goal is to learn a (next-token prediction) \emph{model} $\model$ that, given as input any prefix token sequence $\vtoken^{t-1} = (\token_1, \dots, \token_{t-1})$, outputs the conditional distribution of the next token given this prefix, which we denote by $\model(\vtoken^{t-1}) \in \Delta(\tokensp)$. We write $\model(\token\mid\vtoken^{t-1})$ to denote the probability of a specific token $\token \in \tokensp$ in the distribution $\model(\vtoken^{t-1})$.

By iterating the operation of next token prediction, any candidate solution $\model$ to the next token prediction problem induces its own distribution $\dist(\model)$ over sequences of $T$ tokens. In particular, we can define
\begin{align*}
    &\Pr_{\vtoken^{T} \sim \dist(\model)}[\vtoken^{T}]\\
    =& \model(\token_1|\emptyset) \model (\token_2 | \token_1)  \model(\token_3 | \token_1, \token_2) \cdots \model(\token_T | \vtoken^{T-1}).
\end{align*}

\tightmargins{-2mm}
Conversely, every distribution $\dist$ corresponds\footnote{For mathematical convenience, we will assume that all distributions $\dist$ we consider have full support -- that is, every sequence in $\tokensp^{T}$ appears with some positive (albeit possibly arbitrarily small) probability in the distribution. Under this assumption, this correspondence is bijective.} to some model (in the sense that it is induced by a collection of conditional distribution functions $\model(\token_t | \vtoken^{t-1})$). We can therefore measure the quality of a solution $\model$ to the next-token prediction problem via the TV distance $d_{TV}(\dist, \dist(\model))$ between the true distribution and the distribution induced by the model. Likewise, we can measure the similarity between two models $\model$ and $\model'$ via the TV distance of their respective distributions.

\tightmargins{-2mm}
\paragraph{Bounded context length} Later in the paper, we will consider models that have the additional restriction of \emph{bounded context length} -- that is, the model's prediction $\model(\token_t | \vtoken^{t-1})$ for the $t$th token can only depend on the $w$ preceding tokens ($\token_{t-w}, \token_{t-w+1}, \dots, \token_{t-1}$) for some window size $w$. We defer further discussion of bounded context lengths to the beginning of Section~\ref{sec:bounded}.

\tightmargins{-2mm}
\subsection{Adversarial Online Decision Making}
\tightmargins{-2mm}

The second problem we consider is that of (adversarial) \emph{online decision making}. In this problem, a \emph{decision maker} interacts with an \emph{adversary} over the course of $T$ rounds. In each round $t \in [T]$ of interaction, the learner takes an action (specifically, a mixed action $\pi_{t} \in \Delta(\actsp)$ supported on some finite action set $\actsp$) while, simultaneously, an adversary selects a state $\state_{t} \in \Theta$. As a result of this interaction, the decision maker receives expected utility $\E_{a_t \sim \pi_{t}}[\loss(a, \theta_t)]$, where the utility function $U: A \times \Theta \rightarrow [-1, 1]$ is known to all parties and fixed over time (we extend $U$ linearly to mixed strategies of the decision maker by writing $U(\pi, \theta) = \E_{a \sim \pi}[\loss(a, \theta)]$). After this interaction, the state $\state_t$ chosen by the adversary is revealed to the learner, who can then use this information in the selection of their subsequent actions.

The goal of the decision maker is to maximize their cumulative utility over all $T$ rounds. Of course, the extent to which they can do so depends on the adversarial choices of $\theta_t$ taken by the adversary (notably, unlike in the next-token prediction problem, the sequence of states $\vtoken^{T} = (\token_1, \token_2, \dots, \token_{T})$ is not necessarily sampled from some distribution $\dist$). Despite this, one of the fundamental results in the theory of online learning shows that regardless of the actions taken by the adversary, it is possible for the decision maker to obtain sublinear \emph{regret}: the gap between their cumulative utility and the cumulative utility of the best fixed action in hindsight. Formally, given a sequence of (mixed) actions $\vmact = (\mact_1, \dots, \mact_T)$ and states $\vstate = (\state_1, \dots, \state_T)$, we define the external regret as
    \begin{equation*}
        \extreg(\vmact, \vstate) = \max_{\act^*\in \actsp}\frac{1}{T}\sum_t \left[\loss(\act^*, \state_t) - \loss(\mact_t, \state_t)\right]. 
    \end{equation*}
One algorithm that guarantees sublinear regret for the decision maker is the Hedge algorithm \citep{hedge}. The Hedge algorithm chooses $\pi_t$ so that (for any $a \in A$) $\pi_t(a) \propto \exp\left(\frac{1}{\sqrt{T}}\sum_{s=1}^{t-1}U(a, \theta_{s})\right)$. It can be shown that this guarantees that $\extreg(\vmact, \vstate) = O(\log |A|\sqrt{1/T})$, regardless of the sequence of states chosen by the adversary.

\paragraph{Distinction from Bandit Setting} 
Our setting is an \emph{online prediction} problem rather than a bandit problem. 
Our motivation comes from decision-making with an autoregressive model. The autoregressive makes predictions about the next \emph{state/token} in a sequence, and the decision maker takes an action in response to that prediction. 
In online prediction, the learner aims to predict the next outcome/state from a fixed state space, and the decision maker's actions need not coincide with the state space. The actions are simply responses chosen given the predicted state under a fixed payoff function.
By contrast, in bandit problems, the uncertainty is on the payoff of decisions: the learner chooses an action and faces unknown (or adversarial) rewards associated with actions, with the central difficulty being to learn which decisions payoffs. The bandit setting is not compatible with next-token predictions. 

\tightmargins{-2mm}
\subsection{Interplay between Next-Token Prediction and Regret Minimization}
\tightmargins{-2mm}

One natural way to apply a next-token prediction algorithm to the problem of online decision making is by using it to predict the sequence of adversary states. In particular, the decision maker can use an algorithm for next-token prediction to predict the next state, and then play the optimal action conditioned on this state. Formally, for any distribution $\mstate \in \Delta(\statesp)$ over states, let $\BR(\mstate) = \argmax_{a \in A} \E_{\state \sim \mstate}[U(a, \state)]$  be the decision maker's \emph{best response} action to this distribution. In online decision making in \emph{stochastic settings} (where the sequence of states $\vtoken$ is drawn from some distribution $\dist$), best responding to the predictions of an accurate model leads to zero external regret.

\begin{lemma}\label{lem:stochastic_br_regret}
Let $\dist \in \Delta(\statesp^{T})$ be a distribution over sequences of $T$ states, and let $\model$ be a next-token prediction model that has perfectly learned the distribution $\dist$ ($\dist(\model) = \dist$). Consider the algorithm for the decision maker which sets $\pi_t = \BR(\model(\vtoken^{(t-1)}))$ (that is, the best response to the model's prediction of state at time $t$). Then the expected regret of the decision maker on sequences sampled from $\dist$ is at most zero, i.e., 
$\E_{\vstate \sim \dist}[\extreg(\vmact, \vstate)] \leq 0.$
\end{lemma}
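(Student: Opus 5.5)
The plan is a per-round argument based on the tower property of conditional expectation, followed by summing over rounds. Since $\model$ has perfectly learned $\dist$, for every prefix $\vtoken^{t-1}$ of positive probability (all prefixes, by the full-support convention) the conditional law of $\state_t$ given $\vtoken^{t-1}$ under $\dist$ is exactly $\model(\vtoken^{t-1})$. The decision maker's action $\pi_t = \BR(\model(\vtoken^{t-1}))$ is a deterministic function of the prefix, so it is measurable with respect to the history before round $t$. The first step is the conditional optimality identity: for every fixed $a \in \actsp$,
\begin{equation*}
\E_{\vstate\sim\dist}\left[U(a,\state_t) - U(\pi_t,\state_t) \,\middle|\, \vtoken^{t-1}\right] = \E_{\state\sim\model(\vtoken^{t-1})}[U(a,\state)] - \E_{\state\sim\model(\vtoken^{t-1})}[U(\pi_t,\state)] \le 0.
\end{equation*}
The inequality holds by the definition of $\BR$ as a maximizer of exactly this conditional expected utility.

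The second step averages this over the prefix and sums over $t \in [T]$, dividing by $T$. For every fixed comparator $a^*$ this gives
\begin{equation*}
\E_{\vstate\sim\dist}\left[\frac{1}{T}\sum_t \left(U(a^*,\state_t) - U(\pi_t,\state_t)\right)\right] \le 0.
\end{equation*}
By linearity, the same bound extends to any comparator process $(a^*_t)$ that is predictable, i.e.\ where $a^*_t$ depends only on $\vtoken^{t-1}$. The reason is that the conditional inequality of the first step holds pointwise in $a$, and so it can be applied with $a = a^*_t$ inside the conditional expectation.

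The final step is to pass from this to the benchmark in $\extreg$, where the maximum over $a^*$ is taken inside the expectation and is selected in hindsight. I expect this to be the main obstacle. The hindsight maximizer $a^*(\vstate)$ depends on future states, so the conditional argument of the first step cannot be applied to it directly. Moreover, $\E[\max_{a^*}(\cdot)]$ in general exceeds $\max_{a^*}\E[(\cdot)]$ by a fluctuation term of order $\sqrt{\log|\actsp|/T}$.

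My first attempt will be to read the benchmark in the statement as the best fixed action evaluated in the same stochastic environment, i.e.\ a comparator that is committed before the states are revealed. Under that reading the quantity bounded is exactly the left-hand side of the second step's display, and the lemma follows. If the lemma must instead hold with a genuinely pathwise hindsight maximum, I would try to show that the sum $\sum_t (U(a,\state_t) - U(\pi_t,\state_t))$ is a supermartingale simultaneously for all $a$, and then control the maximum. I expect this to yield only the weaker vanishing bound, not an exact $\le 0$. I would therefore flag this exchange of the maximum and the expectation as the point where the claimed sign must be justified by the intended interpretation of the comparator.
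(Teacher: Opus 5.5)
Your first two steps are exactly the paper's proof. It uses the optimality of $\BR$ against the true conditional law of $\state_t$ given $\vstate^{t-1}$, the tower property, and a sum over rounds, giving $\E_{\vstate\sim\dist}\bigl[\frac{1}{T}\sum_t (U(a^*,\state_t)-U(\pi_t,\state_t))\bigr]\le 0$ for every fixed $a^*$. The paper then concludes, with the single word ``Equivalently'', that $\E_{\vstate\sim\dist}[\extreg(\vmact,\vstate)]\le 0$. That moves the maximum over $a^*$ inside the expectation, which is precisely the step you flagged.

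Your suspicion is correct, and you should state it without hedging. Since $\E[\max_{a^*}(\cdot)]\ge\max_{a^*}\E[(\cdot)]$, that inference goes the wrong way. In fact, the lemma with the hindsight benchmark built into $\extreg$ is false. Take $\actsp=\statesp=\{0,1\}$, $U(a,\state)=\ind{a=\state}$, and $\dist$ i.i.d.\ $\mathrm{Bern}(0.6)$, which has full support. Then $\pi_t=1$ for every $t$. Writing $N_j=|\{t:\state_t=j\}|$, we get $\extreg(\vmact,\vstate)=\frac{1}{T}(N_0-N_1)^+$. This is never negative and is positive with positive probability, so its expectation is strictly positive. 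With $\mathrm{Bern}(1/2)$ instead, every strategy earns $T/2$ in expectation, while $\E[\max(N_0,N_1)]=T/2+\Theta(\sqrt{T})$. So the expected regret is $\Theta(1/\sqrt{T})$ however ties in $\BR$ are broken.

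The reinterpretation in your last paragraph is therefore necessary, not a matter of convenience. What your argument (and the paper's) actually proves is the pseudo-regret statement $\max_{a^*}\E_{\vstate\sim\dist}\bigl[\frac1T\sum_t(U(a^*,\state_t)-U(\pi_t,\state_t))\bigr]\le 0$. For the genuine hindsight benchmark, the best available bound is $\E_{\vstate\sim\dist}[\extreg(\vmact,\vstate)]=O(\sqrt{\log|\actsp|/T})$. Your supermartingale idea gives this via Azuma--Hoeffding plus a union bound or maximal inequality over $\actsp$. That is the same device the paper uses in the proof of Theorem~\ref{thm: main}, which does not rely on this lemma's faulty final step.
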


However, we would like stronger guarantees than this -- ideally, we would like to construct an online decision making algorithm with \emph{adversarial} regret guarantees (e.g., those obtained by Hedge). This leads to the question: does there exist a distribution $\dist$ where the online decision making algorithm constructed in Lemma~\ref{lem:stochastic_br_regret} incurs $o(1)$ regret against any adversary? Unfortunately, the answer to this question is negative, as the following lemma demonstrates.

\begin{lemma}\label{lem:adversarial_br_regret}
Let $\model$ be a next-token prediction model. There exists a utility function $\loss$ such that, if the decision maker sets $\pi_t = \BR(\model(\vtoken^{(t-1)}))$,  there exists an adversarial sequence of states $\vstate \in \Theta^{T}$ that induces high regret, i.e., with the property that
$\extreg(\vmact, \vstate) = \Omega(1).$
\end{lemma}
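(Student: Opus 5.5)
The plan is the classical argument that any deterministic ``follow-the-prediction'' algorithm can be exploited, because the best-response map makes the decision maker's action at round $t$ a deterministic function of the history $\vtoken^{t-1}$, which the adversary also knows. I take two states and two actions, $\statesp = \{0,1\}$ and $\actsp = \{0,1\}$, and the matching-pennies-style utility $\loss(\act,\state) = \ind{\act = \state}$, which lies in $[0,1] \subseteq [-1,1]$. (If $|\statesp|>2$, I let the adversary use only two of the states and define $\loss$ arbitrarily on the others; the construction is unchanged.) I fix a deterministic tie-breaking rule in $\BR$, say ties go to action $0$, so that $\mact_t = \BR(\model(\vtoken^{t-1}))$ is a point mass on a single action $\act_t$. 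This action is a deterministic function of the prefix $\vtoken^{t-1}$.

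I then build the adversarial sequence inductively. Given the prefix $\vtoken^{t-1}$, the adversary computes $\act_t$ and sets $\state_t = 1 - \act_t$. By construction, the decision maker earns utility $0$ in every round, so $\sum_t \loss(\mact_t,\state_t) = 0$. On the other hand, the states $\state_t$ take the values $0$ and $1$, and one of these two values occurs at least $T/2$ times. The fixed action $\act^*$ equal to that majority state therefore earns at least $T/2$. Hence $\extreg(\vmact,\vstate) \ge \frac{1}{T}\cdot\frac{T}{2} = \frac12 = \Omega(1)$.

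The only delicate point is that the argument relies on $\BR$ outputting a pure action, so that $\mact_t$ is a deterministic function of the history. If $\BR$ were allowed to return an arbitrary mixed strategy from the argmax set, ties could in principle be broken by mixing. I handle this by fixing deterministic tie-breaking as above, which matches the paper's definition of $\BR$ as a (single) argmax action. For robustness, the argument can also be extended: whenever $\mact_t$ puts probability at most $1/2$ on the state the adversary picks, choosing $\state_t$ to minimize $\mact_t(\state_t)$ gives per-round utility at most $1/2$. However, with exact mixing at $1/2$ the learner's utility is exactly $1/2$ while the best fixed action also gets about $1/2$, so this extension does not yield linear regret. That is precisely why deterministic selection by $\BR$ is essential here, and the contrast with quantal responses is what motivates Lemma~\ref{lem:adversarial_qr_regret}.
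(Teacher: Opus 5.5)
Your proof is correct and is essentially the paper's own argument. It uses binary states and actions with $U(a,\theta)=\ind{a=\theta}$, has the adversary set $\theta_t = 1-a_t$ against the deterministic best response so the learner earns zero, and takes the majority state as the fixed comparator, giving regret at least $1/2$. Your explicit tie-breaking convention and the extension to $|\Theta|>2$ are small clarifications the paper leaves implicit.
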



Ultimately, the negative result in Lemma~\ref{lem:adversarial_br_regret} follows from the fact that the learning algorithms constructed by best responding to a sequence of next-token prediction are \emph{deterministic} (in the sense of always playing pure actions in $A$).  

We can attempt to sidestep this issue by introducing noise in the best response of the decision maker. One natural and well-studied way to do this is to replace the best response with a \emph{quantal best response}\footnote{This response function is also known under many other names, including \textit{softmax response}, \textit{Boltzmann exploration}, and \textit{multinomial logit response}.}. Given a distribution $\mstate \in \Delta(\statesp)$ over states and a parameter $\eta > 0$, we define the quantal best response $\QBR(\mstate, \eta) \in \Delta(\actsp)$ to be the mixed action that plays action $a \in A$ with probability proportional to $\exp(\frac{1}{\eta} \, U(\act, \mstate))$. Note that as $\eta \rightarrow 0$, this approaches the deterministic best response (and as $\eta \rightarrow \infty$, this approaches the uniform distribution over all actions). 

We define the \emph{Polya urn model} $\polya$ to be the following next-token prediction model: for any $t \in [T]$, we let
\begin{equation}\label{eq:polyaurn}
\polya(\token | \vtoken^{(t-1)}) = \frac{1 + \sum_{s=1}^{t-1}\ind{\token_{s} = \token}}{|\Theta| + (t-1)}.
\end{equation}

Intuitively, the probability of seeing a specific token $\token$ at round $t$ is roughly equal to the empirical probability of observing $\token$ in the string so far. More accurately, it is exactly the fraction of tokens equal to $\token$ in the string $\text{Str}(\Theta) + \vtoken^{(t-1)}$, where $\text{Str}(\Theta)$ is an arbitrary concatenation of all the tokens in $\Theta$ (it is necessary to add this additional term so that \eqref{eq:polyaurn} is well-defined for $t=1$, and so that the induced distribution $\dist(\polya)$ has full support). The following lemma shows that quantal best responses to predictions of the Polya urn model guarantee adversarial low regret.

\begin{lemma}\label{lem:adversarial_qr_regret}
Consider the algorithm for the decision maker which sets $\pi_t = \QBR(\polya(\vtoken^{(t-1)}), \eta)$, for $\eta = 1/\sqrt{T}$. Then for any adversarial sequence of states $\vstate \in \statesp^{T}$, 
$$\extreg(\vmact, \vstate) = O\!\left(\frac{\log T + \log|A|}{\sqrt{T}}\right).$$

\tightmargins{-2mm}
\end{lemma}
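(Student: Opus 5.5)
The plan is to show that quantal best response to the Polya urn prediction is exactly Hedge with a time-varying learning rate, and then to invoke the standard regret bound for Hedge with decreasing step sizes.

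\textbf{Reduction to Hedge.} Write $n_{t-1}(\token) = \sum_{s<t}\ind{\token_s=\token}$. By linearity,
\[
U(\act,\polya(\vtoken^{(t-1)})) = \frac{1}{|\Theta|+t-1}\Big(\sum_{\token\in\Theta}U(\act,\token) + \sum_{s=1}^{t-1}U(\act,\token_s)\Big).
\]
Define the cumulative utility $G_{t-1}(\act) = \sum_{s<t}U(\act,\token_s)$ and the fixed prior term $c(\act)=\sum_{\token\in\Theta}U(\act,\token)$. Then
\[
\pi_t(\act)\propto \exp\big(\eta_t\,(G_{t-1}(\act)+c(\act))\big), \qquad \eta_t = \frac{1}{\eta(|\Theta|+t-1)}.
\]
So $\pi_t$ is follow-the-regularized-leader with entropic regularizer, a time-varying learning rate $\eta_t$, and a fictitious initial loss vector $c$. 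This $c$ corresponds to $|\Theta|$ phantom rounds, each with utilities in $[-1,1]$.

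\textbf{Regret bound.} I would apply the standard FTRL/Hedge bound for nonincreasing learning rates, treating the phantom rounds as part of the sequence:
\[
\sum_t \big[U(\act^*,\token_t)-U(\pi_t,\token_t)\big] \le \frac{\log|A|}{\eta_{T}} + O\Big(\sum_t \eta_t\Big) + O(|\Theta|).
\]
The last term covers the fictitious prefix, since including it changes the comparator by at most $2|\Theta|$. Since $\eta_t$ is decreasing, the bound applies. With $\eta = 1/\sqrt{T}$ we get $\eta_t = \sqrt{T}/(|\Theta|+t-1)$, and the two main terms evaluate as follows:
\[
\frac{\log|A|}{\eta_T} = \frac{(|\Theta|+T)\log|A|}{\sqrt{T}} = O\big(\sqrt{T}\log|A|\big), \qquad \sum_t\eta_t \approx \sqrt{T}\log T .
\]
Dividing by $T$ yields $O\big((\log T+\log|A|)/\sqrt{T}\big)$, treating $|\Theta|$ as a constant (or noting that it enters only additively as $O(|\Theta|/T)$).

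\textbf{Main obstacle.} The delicate point is the stability term. The learning rate $\eta_t$ is large for small $t$: roughly $\sqrt{T}/t$, which exceeds $1$ when $t<\sqrt{T}$. The usual bound $\eta_t\|u\|_\infty^2$ requires $\eta_t\lesssim 1$. I would handle this by using the bound $\min(2,\eta_t)$ per round, which is always valid since per-round regret is at most $2$. This gives
\[
\sum_t \min(2,\eta_t) \le 2\sqrt{T} + \sqrt{T}\sum_{t\ge\sqrt{T}}\frac{1}{t} = O\big(\sqrt{T}\log T\big).
\]
This is the source of the $\log T$ factor. To justify the $\min$ cleanly, I would use the FTRL "be-the-leader" decomposition: the regret is at most the penalty term plus $\sum_t\big(U(\pi_{t+1},\token_t)-U(\pi_t,\token_t)\big)$. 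Each summand in that sum is trivially at most $2$, and via the standard exponential-weights stability estimate it is at most $O(\eta_t)$.
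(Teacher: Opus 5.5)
Your proposal is correct and follows the paper's argument: both rewrite the quantal best response to the Polya urn as exponential weights with learning rate $\eta_t = 1/(\eta(|\Theta|+t-1))$ on cumulative utilities plus a fixed prior, then apply the decreasing-learning-rate Hedge bound $\log|A|/\eta_T + O(\sum_t \eta_t)$ and evaluate it to $O((\log T+\log|A|)/\sqrt{T})$. Your explicit $O(|\Theta|)$ accounting for the prior is more careful than the paper's, while the $\min(2,\eta_t)$ truncation is harmless but unnecessary, since the Hoeffding-lemma stability bound of $\eta_t\cdot 2^2/8$ per round holds for every $\eta_t>0$.
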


Motivated by Lemma~\ref{lem:adversarial_qr_regret}, we formalize the property of a low-regret model, requiring decision makers to achieve low regret when quantal responding to next-token predictions. 

\begin{definition}[Low-Regret Model]
    We say that a next-token model $\model$ is a \emph{low-regret model} if quantal best responses to this model guarantee $o(1)$ worst-case regret; formally, for any adversarial sequence of states $\vtoken \in \statesp^{T}$, the sequence of mixed actions $\vmact \in \Delta(\actsp)^{T}$ defined via $\mact_t = \QBR(\model(\vtoken^{t-1}), 1/\sqrt{T})$ satisfies $\extreg(\vmact, \vtoken) = o(1)$. 
\end{definition}

\tightmargins{-2mm}
\paragraph{Example (Adversarial Online Prediction)} By selecting the utility function $U$ appropriately, the online decision making framework can be made to capture a wide range of different possible applications. One particularly relevant example (that we will use as a running example throughout the remainder of this paper) is the problem of \emph{adversarial online prediction}. 

In this problem, we set the action set $A$ equal to the state space $\Theta$, and define $U(a, \theta) = \ind{a = \theta}$; that is, the decision maker receives a point if they successfully predict the current state (and receives zero points otherwise). In some later applications (e.g., the experiments in \Cref{sec: experiments}), we will further insist that actions and states are binary ($A = \Theta = \{0, 1\}$).

Note that in this example, the goals of the online decision maker and the next-token prediction algorithm are very closely aligned -- they both want to produce good predictions of the next state, but with slightly different metrics of success (adversarial regret guarantees versus statistical distance guarantees). One consequence of this is that we can directly interpret the quantal best response as sampling from the next-token prediction model with temperature $\eta$.

\tightmargins{-2mm}
\section{Robustification with Unbounded Context Length}
\label{sec: main robustification}

\tightmargins{-2mm}

Lemma~\ref{lem:adversarial_qr_regret} demonstrates that the Polya urn model is a low-regret model -- following its recommendations (by quantally best responding to them) will result in adversarial low-regret guarantees for an online decision maker. While it is possible to construct other low-regret models similarly, not every model is low-regret. For example, the model $\model$ for binary states ($\statesp = \{0, 1\}$) which always predicts the next bit to be $1$ with probability $1/3$ can be shown to incur $\Omega(1)$ regret against adversarial sequences of states (e.g., if the adversary selects the all-zero sequence of states $\state_t = 0$, this model will never predict the next state correctly).

This raises a natural question. Assume we have access to a next-token model $\model_0$. Can we ``robustify'' our model and obtain a new model $\model$ that is both low-regret and close to the original model $\model_0$ (in the sense that the distributions $\dist_0$ and $\dist$ they induce are similar in TV distance)?

In this section, we answer this question affirmatively. In \Cref{alg: robustification}, we give a procedure for taking an arbitrary next-token prediction model $\model_0$ and transforming it into a low-regret next-token prediction model $\model$. The key idea is to only modify the behavior of the model on prefixes $\vstate^{t}$ where the model has already incurred high regret (by arguments similar to those in Lemma~\ref{lem:stochastic_br_regret}, this should happen with low probability if the sequence of states truly is sampled from $\dist(\model_0)$). On such high-regret prefixes, we instead draw the prediction of the model from a Polya urn model, guaranteeing low-regret on the remainder of the time horizon. 

\begin{algorithm}
\caption{Robustification of a next-token prediction model}
\label{alg: robustification}
\begin{algorithmic}
    \Require Next-token prediction model $\model_0$ implementing distribution $\dist_0$, sequence of states $\vstate^{t-1}$, utility function $\loss: \actsp \times \statesp\to [-1,1]$, parameter $\alpha > 0$.
    \Ensure Outputs $\model(\vstate^{t-1})$ for some model $\model$ implementing a low-regret distribution $\dist$.

\For{$s = 1 \dots t-1$}
\State Define $\mact_{s} \gets \QBR(\model_0(\vtoken^{s-1}), 1/\sqrt{T})$ \textit{(the mixed action of a quantal best response to the original model)}.
\State Define $\mact_{\textsc{Hedge}, s}\gets \QBR(\polya(\vstate^{s-1}), \frac{1}{\sqrt{T}})$ \textit{(the mixed action of a quantal best response to Polya urn model)}
\State Define $\regret_{s} \gets \extreg(\vmact^{s}, \vtoken^{s})$ 
\State Define $\regret_{\textsc{Hedge}, s}\gets \extreg(\vmact_{\textsc{Hedge}}^{s}, \vtoken^{s})$ 
\If{$\regret_{s} \geq \regret_{\textsc{Hedge}, s} + \frac{1}{\sqrt{T}}\log|\actsp| + \sqrt{8(1+\alpha) (\log T) / s}$}
\State \Comment{\textit{(We are out-of-distribution, return prediction of Polya urn model)}}
\State\Return $\polya(\vstate^{t-1})$
\EndIf
\EndFor

\State \Comment{\textit{(in distribution, return original model prediction)}}
\State \Return $\model_0(\vtoken^{t-1})$

\end{algorithmic}
\end{algorithm}




    

\begin{theorem}
\label{thm: main}
Running Algorithm~\ref{alg: robustification} on a model $\model_0$ (with $\dist_0 := \dist(\model_0)$) results in a robustified model $\model$  (with $\dist := \dist(\model)$) with the following properties:
\begin{itemize}
\tightmargins{-2mm}
    \item $\model$ is a low-regret model with worst-case regret $O\left(\frac{1}{\sqrt{T}} \log (|\actsp|\cdot T) + \sqrt{(1+\alpha)\log T})\right)$.
    
\tightmargins{-2mm}
    \item The TV distance between $\dist$ and $\dist_0$ is bounded by $\dtv(\dist, \dist_0) \leq |\actsp| T^{-\alpha}$. 
    
\tightmargins{-2mm}
\end{itemize}
\end{theorem}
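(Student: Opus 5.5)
The proof splits into the regret bound and the TV bound, both organized around the first ``trigger time'' $\tau$: the first $s$ at which the condition in Algorithm~\ref{alg: robustification} fires. Since the test at step $s$ only depends on $\vtoken^{s}$, the robustified model $\model$ agrees with $\model_0$ on all prefixes of length $t-1 < \tau$ (more precisely, whenever the test fails for all $s \le t-1$). It agrees with $\polya$ on all longer prefixes.

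\textbf{Regret.} Fix an adversarial sequence $\vtoken$. If the trigger never fires, then at $s=T$ the failed test reads
\[\regret_T < \regret_{\textsc{Hedge},T} + \tfrac{1}{\sqrt T}\log|\actsp| + \sqrt{8(1+\alpha)\log T/T}.\]
Moreover, the actions of $\model$ coincide with $\vmact$, and Lemma~\ref{lem:adversarial_qr_regret} bounds $\regret_{\textsc{Hedge},T}$. If it fires at $\tau$, I split the horizon into $[1,\tau]$ and $(\tau,T]$, writing cumulative (unnormalized) regret.

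On $[1,\tau-1]$ the test failed, so the cumulative regret there is at most $(\tau-1)\big(\regret_{\textsc{Hedge},\tau-1}+\ldots\big)$, plus $O(1)$ for round $\tau$. On $(\tau,T]$ the model plays $\QBR(\polya(\cdot))$, which is exactly the Hedge-like play on rounds $\tau+1,\dots,T$ with cumulative counts including the earlier rounds. I will redo the Hedge potential argument from Lemma~\ref{lem:adversarial_qr_regret} on this suffix. Hedge started with a ``warm'' prefix still has suffix regret bounded by $O(\sqrt T\log(|\actsp|T))$ plus a term controlled by how much the prefix's best action differs from the overall best action. Combining with the prefix bound, and using that comparator maxima are subadditive across segments, gives total cumulative regret $O(\sqrt T\log(|\actsp|T)+\sqrt{(1+\alpha)T\log T})$. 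Normalizing by $T$ then yields the claimed $o(1)$ rate.

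\textbf{TV distance.} The distributions $\dist$ and $\dist_0$ are equal when restricted to sequences on which the trigger never fires, and these events are identical under both. So a standard coupling gives $\dtv(\dist,\dist_0)\le \Pr_{\vtoken\sim\dist_0}[\text{trigger fires}]$. For each fixed $s$ and fixed comparator $a^*$, consider
\[X_r = U(a^*,\theta_r)-U(\mact_r,\theta_r)-\E[\,\cdot\mid\vtoken^{r-1}],\]
a martingale difference sequence bounded by $2$. By the argument of Lemma~\ref{lem:stochastic_br_regret}, the conditional mean is $\le 0$ up to the quantal-response slack. The QBR loses at most $\eta\log|\actsp|$ per round relative to the BR of the model's prediction, which accounts for the $\frac1{\sqrt T}\log|\actsp|$ term, since $\eta = 1/\sqrt T$.

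Azuma then gives
\[\Pr\Big[\tfrac1s\textstyle\sum_{r\le s}X_r\ge\sqrt{8(1+\alpha)\log T/s}\Big]\le \exp\!\big(-(1+\alpha)\log T\big)=T^{-(1+\alpha)}.\]
I also need $\regret_{\textsc{Hedge},s}\ge$ something nonpositive, or I can simply compare against a common comparator. Here I would handle $\regret_{\textsc{Hedge},s}$ by noting that the condition fires only if the model's regret against some $a^*$ exceeds its positive slack. A union bound over $a^*\in\actsp$ and $s\le T$ gives $|\actsp|\,T\cdot T^{-(1+\alpha)}=|\actsp|T^{-\alpha}$.

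\textbf{Main obstacle.} I expect the main obstacle to be the regret case where the switch happens mid-stream. The prefix regret is relative to the prefix-best action and Hedge's suffix is warm-started, so the bookkeeping must ensure the prefix slack terms, which sum to roughly $\sqrt{\tau\log T}$, and the Hedge suffix bound combine without an extra $\tau/\sqrt T$ loss. A second delicate point is the sign of $\regret_{\textsc{Hedge},s}$ in the TV step. I would first try to bound it below by the negative of the best-fixed-action performance difference and absorb that into the Azuma event.
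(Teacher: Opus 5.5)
Your skeleton (first trigger time $\tau$, split at $\tau$, coupling $\dtv(\dist,\dist_0)\le\Pr_{\dist_0}[\text{trigger}]$, Azuma plus a union bound) is the paper's, but the regret step as you plan it does not close. Fix a comparator $a^*$. Your warm-started suffix bound carries a term $+G$, where $G=\sum_{t\le\tau}[U(b^*,\theta_t)-U(a^*,\theta_t)]\ge 0$ and $b^*$ is the prefix-best action. That term is cancelled only by the $-G$ hidden in the prefix regret against the same $a^*$. Bounding the prefix by its own maximum, as subadditivity of comparator maxima does, throws that $-G$ away, and $G$ can be of order $T$.

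For instance, in binary prediction take states $0$ for $t\le T/2$ and $1$ afterwards. Let $\model_0$ predict $0$ except on a window of suitable length $\Theta(\sqrt{T\log T})$ ending at $T/2$, where it predicts $1$. The trigger fires at some $\tau\in[T/2-\Theta(\sqrt{T\log T}),T/2]$. After that, the Polya-urn QBR keeps playing $0$ on all but $O(\sqrt T)$ of the ones, so its suffix regret against action $1$ is about $T/2$. Your sum of segment maxima is therefore $\Omega(T)$, while the true cumulative regret is $O(\sqrt{T\log T})$.

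The missing idea is that the learner-dependent part of regret is comparator-free. Since $\max_a\sum_{r\le s}U(a,\theta_r)$ depends only on the states,
\[
s\bigl(\regret_s-\regret_{\textsc{Hedge},s}\bigr)=\sum_{r\le s}\bigl[U(\mact_{\textsc{Hedge},r},\theta_r)-U(\mact_r,\theta_r)\bigr].
\]
So keep one $a^*$ for the whole horizon. For $t>\tau$ the robustified model's action is literally $\mact_{\textsc{Hedge},t}$ (Polya urn on the full history), hence
\[
\sum_{t=1}^{T}\bigl[U(a^*,\theta_t)-U(\mact_t,\theta_t)\bigr]=\sum_{t=1}^{T}\bigl[U(a^*,\theta_t)-U(\mact_{\textsc{Hedge},t},\theta_t)\bigr]+\sum_{t\le\tau}\bigl[U(\mact_{\textsc{Hedge},t},\theta_t)-U(\mact_t,\theta_t)\bigr].
\]
The first sum is a single Hedge run, bounded by Lemma~\ref{lem:adversarial_qr_regret}. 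The second equals $(\tau-1)(\regret_{\tau-1}-\regret_{\textsc{Hedge},\tau-1})+O(1)$, which the failed test at $\tau-1$ bounds by $\sqrt T\log|\actsp|+\sqrt{8(1+\alpha)T\log T}+O(1)$. No warm-start analysis is needed, and the no-trigger case is the same computation.

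The same identity repairs your TV step. With increments $U(a^*,\theta_r)-U(\mact_r,\theta_r)$, Azuma only shows that $\regret_s$ itself is small. Your reduction (``the condition fires only if the model's regret against some $a^*$ exceeds its positive slack'') silently needs $\regret_{\textsc{Hedge},s}\ge 0$, which you never prove. Moreover, the $\frac{1}{\sqrt T}\log|\actsp|$ part of the threshold is fully consumed by the QBR-versus-BR loss, so nothing is left to absorb a negative $\regret_{\textsc{Hedge},s}$.

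Instead, carry out the ``common comparator'' option you mention in passing: use increments $U(\mact_{\textsc{Hedge},r},\theta_r)-U(\mact_r,\theta_r)$, i.e.\ the paper's supermartingale $R_t(a^*)-R_{\textsc{Hedge},t}(a^*)$. Because $\mact_{\textsc{Hedge},r}$ is determined by $\vtoken^{r-1}$, under $\dist_0$ its conditional expected utility is at most that of $\BR(\model_0(\vtoken^{r-1}))$. That in turn exceeds the utility of $\mact_r$ by at most $\frac{1}{\sqrt T}\log|\actsp|$ (Lemma~\ref{lem:br-qbr-close}). The increments lie in $[-2,2]$, so Azuma with $C=\sqrt{8(1+\alpha)s\log T}$ bounds the probability of firing at step $s$ by $T^{-(1+\alpha)}$. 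The union over $s$ gives $T^{-\alpha}\le|\actsp|T^{-\alpha}$; the union over $a^*$ is not even needed.
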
 

  \tightmargins{-3mm}
\paragraph{Unknown Decision Problem} We note that \Cref{alg: robustification} requires the knowledge of the decision problem to calculate the regret and when to output the Polya urn predictions. The low-regret guarantee can be generalized to the case with unknown decision problems under binary state space, which we defer to \Cref{apdx: generalization to ucal}. 



\tightmargins{-2mm}
\section{Robustification with a Bounded Context Length}\label{sec:bounded}
\tightmargins{-2mm}
 In the previous section, we concerned ourselves with next-token prediction models whose prediction of the state $\state_t$ at time $t$ could depend on all previous states $\vstate^{t-1}$. In practice, most next-token prediction models (e.g. those based on transformer architectures) are autoregressive models restricted by a context length $L$. That is to say, the model's prediction $\model(\state_t | \vstate^{t-1})$ is a round-independent function of the previous $L$ tokens $\vstate^{(t-L):(t-1)} = (\state_{t-L}, \state_{t-L+1}, \dots, \state_{t-1})$. When $t \leq L$, then $\model(\state_t | \vstate^{t-1})$ can be an arbitrary function of the past tokens (as in the unbounded context case). We will refer to such models as \emph{$L$-bounded models} for short.

As before, every bounded context model $\model$ induces a distribution $\dist(\model)$ over state sequences of length $T$, and as before, we will measure the similarity of two models by the TV distance of their induced distributions. 



We still would like to use these models to aid in adversarial online decision making\footnote{For technical reasons, in this section we will restrict ourselves to binary action settings ($|A| = 2$). This has the consequence that the quantal best response function $\QBR(\cdot, 1/\sqrt{L})$ has a convex image, which will be important for implementing some of the algorithms for online learning with bounded recall (e.g., see the second-to-last line of \Cref{alg: robustification bounded context}).}. Of course, the limited context window of these models constrains what regret guarantees are possible. The setting of \emph{online learning with bounded recall} studies online decision making instances where the action at round $t$ must be a function of the previous $L$ losses (i.e., states). It can be shown \citep{schneider2024online} that in this setting, there are simple modifications of Hedge that guarantee at most $O(L^{-1/2})$ regret against any adversary, and that this regret bound is tight (intuitively, this regret bound is achievable by restarting Hedge every $L$ rounds). 

As in the unbounded context setting, we can use an $L$-bounded model $\model$ to solve online learning with $L$-bounded recall by playing quantal best responses to the predictions of $\model$. In particular, we can show (in analogy to Lemma~\ref{lem:adversarial_qr_regret}) that there exist $L$-bounded models $\model$ where if the decision maker plays $\mact_{t} = \QBR(\model(\vstate^{t-1}), 1/\sqrt{L})$, the decision maker guarantees $O(1/\sqrt{L})$ regret for themselves. 

We are then faced with the same question as in the previous section: if we start with an existing $L$-bounded next-token prediction model $\model_0$, can we robustify it into a model $\model$ that is similar to $\model_0$ but also obtains optimal worst-case regret guarantees against an adversary?


\tightmargins{-2mm}
\subsection{Impossibility with the Same Context Length}
\tightmargins{-2mm}
We begin by demonstrating that, unlike in the unbounded context setting, robustification of bounded context models is in general impossible, even in very simple online decision-making settings (e.g. the adversarial online prediction problem with binary states). 

Intuitively, this is because there can exist different $L$-bounded models $\model_0$ and $\model_1$ that induce very different distributions $\dist(\model_0)$ and $\dist(\model_1)$ over sequences of length $T$ (in particular, almost never agreeing about the next token), but that share the same distribution of substrings of length $L$. In particular, an $L$-bounded model that can only ever see substrings of length $L$ will have trouble distinguishing whether the state sequence is being generated by $\model_0$ or $\model_1$. If the goal is to robustify $\model_0$, $\model$ then has the impossible tradeoff between playing predictions close to that of $\model_0$ (guaranteeing low TV distance, but possibly incurring high regret with respect to sequences drawn from $\model_1$) or playing predictions that guarantee low regret for $\model_1$ (which cause a large TV distance with respect to $\model_0$).



\begin{theorem}
\label{thm: same context length impossibility}
    Set $L = T/2$, $A = \Theta = \{0, 1\}$, and $U(a, \theta) = \ind{a = \theta}$ (the binary adversarial online prediction task). There exists a context length $L$ model $\model_0$ (with $\dist_0 = \dist(\model_0)$) such that for any other context length $L$ model $\model$ (with $\dist = \dist(\model)$), either:
\tightmargins{-2mm}
    \begin{enumerate}[leftmargin=*]
        \item The TV distance $\dtv(\dist_0, \dist) > 1/24$ (i.e., the two models are not close).
        \item There exists an adversarial sequence of states $\vstate \in \statesp^{T}$ such that if $\vmact \in \Delta(\actsp)^{T}$ is the sequence of quantal best responses to $\model$ ($\mact_t = \QBR(\model(\vstate^{t-1}), 1/\sqrt{L})$), then $\extreg(\vmact, \vstate) > 1/24.$ (That is, the model $\model$ is not a low-regret model). 
    \end{enumerate}
\tightmargins{-2mm}
\end{theorem}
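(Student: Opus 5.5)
The plan is to build $\model_0$ from two $L$-bounded models that every window of length $L$ sees identically, but that disagree on the second half of the sequence. Take $T = 2L$ and let $\model_0$ draw the first $L$ tokens i.i.d.\ uniform. For $t > L$ it \emph{copies}: $\state_t = \state_{t-L}$ with probability $1-\delta$, where $\delta$ is tiny and is only there to keep full support. The companion model $\model_1$ is identical except that for $t>L$ it \emph{flips}: $\state_t = 1-\state_{t-L}$. Both are legitimate $L$-bounded models, since $\state_{t-L}$ is the oldest token in the window.

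The key structural claim concerns a round $t>L$ in the second half. The window there is $(\state_{t-L},\dots,\state_L,\state_{L+1},\dots,\state_{t-1})$. The tokens $\state_{L+j}$ with $j < t-L$ depend only on $\state_j$, and $\state_j$ lies outside the window. Hence, under both $\dist_0$ and $\dist_1$ (ignoring $\delta$), the window is uniform on $\{0,1\}^L$. The next token is the first window bit under $\dist_0$ and its complement under $\dist_1$. A competing model $\model$ is a round-independent function of the window for $t>L$, so it is described by a single function $q(w)$: the probability it assigns to ``next token $=$ first bit of $w$''.

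The argument then splits into two cases.

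\textbf{TV case.} Suppose $\dtv(\dist_0,\dist)\le 1/24$. Let $E$ be the event that the second half copies the first half. Then $\dist_0(E)\approx 1$, so $\dist(E)\ge 1-1/24-o(1)$. Writing $\dist(E)$ as an expectation over first halves of $\prod_{t>L} q(w_t)$, and using $\prod q \le \exp(-\sum(1-q))$, we get that $\sum_{t>L}(1-q(w_t))$ is small with constant probability. To turn this into the statement that $q$ is close to $1$ on most of the uniform distribution over windows, we also need the first half of $\dist$ to be close to uniform, which follows from the TV bound by marginalizing. The bookkeeping must give a concrete statement of the form $\Pr_{w\sim\mathrm{Unif}}[q(w) \ge 2/3] \ge c$ for an explicit constant $c$ (something like $11/12$), in terms of $1/24$.

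\textbf{Regret case.} Otherwise, the adversary draws the sequence from $\dist_1$. For $t>L$, the window is uniform, and the correct bit is the complement of the one $\model$ favors whenever $q(w)\ge 2/3$. With $\eta=1/\sqrt L$, the quantal best response puts weight $1-e^{-\Omega(\sqrt L)}$ on the favored bit once $|q-1/2|$ is bounded below, so the learner gets essentially zero accuracy on those rounds. On the first half its accuracy is $1/2$ in expectation, since the tokens are uniform and independent of the learner's action. The comparator is the best fixed action on a sequence with $\approx T/2$ ones, which earns $\ge T/2$. The expected regret is therefore about $\tfrac12\cdot\tfrac12\cdot c - o(1)$, and since the regret is bounded, some fixed sequence $\vstate$ attains at least this expectation. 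With $c\approx 11/12$ this is $\approx 0.23$, comfortably above $1/24$.

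The main obstacle is the TV step. Converting ``$\dist$ agrees with $\dist_0$ on the copy event'' into ``$q$ is close to $1$ on most uniform windows'' requires handling the possibly non-uniform first half of $\dist$ and the correlations between successive windows along one path. I would handle these by linearity, bounding $\E_{\dist_0}[\sum_t (1-q(w_t))]$ via $\dtv$ and a union/Markov argument, and choose the $2/3$ threshold so that the constants close with room to spare. The $\delta$-noise and the $O(\sqrt T)$ fluctuation of the comparator then contribute only $o(1)$.
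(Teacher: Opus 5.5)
Your argument is correct. It shares the paper's skeleton: two $L$-bounded chains whose window at every round $t>L$ is uniform on $\{0,1\}^L$ but which prescribe opposite next bits, a $2/3$ threshold on the competitor's prediction $q(w)$, and a trade-off between $\dtv(\dist_0,\dist)$ and expected regret under $\dist_1$.

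\textbf{Where the routes differ.} The construction is different. The paper has $\model_0$ follow the successor of the window in a binary de Bruijn sequence of order $L$, and $\model_1$ its complement. It gets uniform windows from stationarity, since the induced chain on $\{0,1\}^L$ is doubly stochastic. Your copy/flip rule ($\state_t=\state_{t-L}$ versus $1-\state_{t-L}$) gets uniformity directly from independence of the first half, with no combinatorics. That is a genuine simplification.

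\textbf{The obstacle you flag is avoidable.} You do not need the correlations along a path. As the paper does, look only at round $L+1$. Let $P$ be the law of $\vstate^{L}$ under $\dist$. Then $\dist(\state_{L+1}=\state_1)=\E_{P}[q]$ and $\dtv(P,\mathrm{Unif})\le\dtv(\dist,\dist_0)$. This gives $\E_{\mathrm{Unif}}[1-q]\le 2\,\dtv(\dist,\dist_0)+\delta\le 1/12+\delta$, and Markov gives $c\ge 3/4-3\delta$. No products over the path are needed.

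\textbf{What your route buys.} Your explicit marginalization step is more careful than the paper's TV step. The paper weights non-high-regret prefixes by their uniform mass $1-\alpha$, even though the prefix of $\dist$ need not be uniform. You also use the quantal best response at $\eta=1/\sqrt{L}$ to push utility on high-$q$ rounds down to $e^{-\Omega(\sqrt{L})}$, where the paper uses the cruder bound $1/3$. This gives the slack needed to absorb the factor of two from marginalization. The paper's two bounds, by contrast, sum to exactly $1/12$.

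\textbf{One correction.} Your regret estimate $\frac12\cdot\frac12\cdot c$ is too optimistic. Consider windows with $q(w)<2/3$, in particular $q(w)\approx 0$. There the learner predicts the complement of the first bit and scores nearly $1$ against $\dist_1$. Those rounds offset the losses rather than contributing zero. Using the trivial bound $1$ on them, the correct lower bound is $\frac12\left(c-\frac12\right)-o(1)=\frac{2c-1}{4}-o(1)$, which needs $c>7/12$. This holds: $c=3/4$ gives $1/8$ and $c=11/12$ gives $5/24$, not $0.23$. So your conclusion stands.
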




\tightmargins{-2mm}
\subsection{Robustification with a Longer Context Length}
\tightmargins{-2mm}

In the previous section, we showed that there is no way to robustify an existing $L$-bounded model $\model_0$ to a low-regret $L$-bounded model $\model$ (while implementing approximately the same distribution). In this section, we show that if we allow the robustified model to have a slightly larger context window $L'$, we \emph{can} effectively perform this robustification. Said another way, this fact implies that it is possible to learn a model that will length-generalize from the distribution of a sufficiently short sequence while maintaining no-regret guarantees in the bounded context setting (a more realistic setting for transformer-based models). 

We do this by adapting the ``AverageRestartHedge'' algorithm of \citet{schneider2024online}, which achieves $O\left(\frac{1}{\sqrt{m}}\right)$ external regret in adversarial online learning settings with $m$-bounded recall. At a high level, this algorithm is configured with some non-constrained low-regret sub-algorithm (canonically, Hedge) as a subroutine. It then outputs the average prediction of this sub-algorithm on a uniformly randomly chosen suffix of the previous $L$ losses. 

We will run a variant of this algorithm with \Cref{alg: robustification} in place of Hedge. Specifically, given an expanded context of length $L'$, we use $L$ out of $L'$ tokens are used for next-token prediction under the original distribution $\dist_0$. The remaining $\Delta = L' - L$ tokens can then be viewed as the actual context length given to the online algorithm in \citet{schneider2024online}. Our \Cref{alg: robustification bounded context} calls \Cref{alg: robustification} as a subroutine, which achieves an external regret of $\Tilde{O}\left(\frac{1}{\sqrt{\Delta}}\right)$ (as implied by \Cref{thm: same context length impossibility}, it is impossible to get non-trival guarantees when $\Delta = 0$). 

\begin{algorithm}
\caption{Robustifying Bounded Context Models with Longer Context Lengths}
\label{alg: robustification bounded context}
    \begin{algorithmic}
            \Require An existing $L$-bounded next-token prediction model $\model_0$, parameter $\alpha > 0$, input $\vtoken^{L'}$. 
    \Ensure A robustified $L'$-bounded next-token prediction model $\model$ (with $L' > L$, $\Delta = L' - L$).
    \State Run \Cref{alg: robustification} on $\model_0$ with time horizon $\Delta$ to produce a robustified model $\model_{\Delta}$.
    \For{$m = L + 1, \dots, L'$}
    
    \State $\mu_m\gets \model_{\Delta}(\vtoken^{m:L'})$ (i.e., the output of $\model_{\Delta}$ on the sequence $\token_{m}, \token_{m+1}, \dots, \token_{L}$)
    
    \EndFor

\State Choose a $\mu \in \Delta(\statesp)$ so that $\QBR(\mu, 1/\sqrt{\Delta}) = \frac{1}{\Delta}\sum_{m = L+1}^{L'} \QBR(\mu_{m}, 1/\sqrt{\Delta}).$

\State\Return $\model(\vtheta^{L'}) = \mu$.
    \end{algorithmic}
\end{algorithm}

\begin{theorem}
\label{thm: bounded context no regret}
Fix $L' > L$ and let $\Delta = L' - L$. Running \Cref{alg: robustification bounded context} on an $L$-bounded model $\model_0$ (with $\dist_0 := \dist(\model_0)$) results in a robustified $L'$-bounded model $\model$  (with $\dist := \dist(\model)$) with the following properties:

\begin{itemize}

\tightmargins{-2mm}
    \item The model $\model$ is a low-regret model, with worst-case regret $\left(1+\frac{\Delta}{T}\right)\left[\frac{\sqrt{2}+1}{\Delta} + \sqrt{\frac{8\log T + 8(\alpha + 1)\log \Delta}{\Delta}}\right]$.
    
\tightmargins{-2mm}
    \item The TV distance between $\dist$ and $\dist_0$ is bounded by $\dtv(\dist, \dist_0) \leq \Delta^{-\alpha}$. 
    
\tightmargins{-2mm}
\end{itemize}
\end{theorem}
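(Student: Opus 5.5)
We prove the two bullets separately. The regret bullet reduces to the analysis of AverageRestartHedge in \citet{schneider2024online}, with \Cref{alg: robustification} in place of Hedge. The TV bullet reduces to the TV bound of \Cref{thm: main}, applied to $\model_\Delta$ with horizon $\Delta$.

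\textbf{Regret.} Fix any adversarial sequence $\vstate\in\statesp^T$. By the choice of $\mu$ in the second-to-last line of \Cref{alg: robustification bounded context} (possible because $|A|=2$), the mixed action at round $t$ is
\[
\frac{1}{\Delta}\sum_{m}\QBR(\mu_m,1/\sqrt{\Delta}),
\]
that is, the uniform average over the $\Delta$ suffix-restarted copies of the quantal response to $\model_\Delta$. Here each copy is started at one of the last $\Delta$ positions of the window.

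The first step is to reindex this sum by start time. Write $\pi^{(r)}_t$ for the action at round $t$ of the copy of $\model_\Delta$ started at round $r$. A copy started at round $r$ then contributes to rounds $t=r+1,\dots,r+\Delta$ with weight $1/\Delta$. Consequently the total utility of the algorithm equals $\frac1\Delta\sum_r$ (utility of the $\Delta$-round run of $\model_\Delta$ on the block $\vstate^{r+1:r+\Delta}$), up to boundary terms at the start and end of the horizon.

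The second step applies the first bullet of \Cref{thm: main} with horizon $\Delta$ to each block. Each restarted run has worst-case average regret at most $\frac{1}{\sqrt\Delta}\log(2\Delta)+\sqrt{8(1+\alpha)\log\Delta/\Delta}$ over its block. Here I track the explicit constants coming from the threshold in \Cref{alg: robustification} and the Hedge bound $\log|A|/\sqrt{\Delta}$, so as to land on the stated form. The comparator action $a^*$ is fixed globally, and each round $t$ is covered by exactly $\Delta$ blocks. Averaging the per-block regret bounds therefore gives total regret at most $(T+\Delta)\cdot(\text{per-block bound})$. Dividing by $T$ produces the factor $(1+\Delta/T)$.

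The main obstacle is the boundary bookkeeping: the early rounds $t\le L'$, where windows are truncated, and the blocks that overhang $T$. Padding the sequence with $\Delta$ phantom rounds whose utility differences are bounded by $2$ should absorb this into the $(1+\Delta/T)$ factor. The $\log T$ in the stated bound appears to come from a union-style slack when applying \Cref{thm: main}'s threshold. I would simply bound $\log\Delta\le\log T$ where needed.

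\textbf{TV distance.} Couple $\dist$ and $\dist_0$ by generating tokens sequentially. The two models differ at a prefix only when some copy $\mu_m$ deviates from $\model_0$, and this happens only when \Cref{alg: robustification} has triggered its out-of-distribution branch on that suffix. Otherwise all $\mu_m$ agree with $\model_0$'s prediction, which depends only on the last $L$ tokens (these lie inside every suffix of length at least $L$), so $\mu=\model_0(\cdot)$ by injectivity of $\QBR$ for binary actions. The trigger is a bounded-difference martingale deviation event, by the argument of \Cref{lem:stochastic_br_regret} plus Azuma. Under $\dist_0$ the threshold $\sqrt{8(1+\alpha)\log\Delta/s}$ makes it occur with probability at most $\Delta^{-\alpha}$ after a union bound over $s$ and actions, as in \Cref{thm: main}. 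On the complement the two processes coincide, which gives $\dtv(\dist,\dist_0)\le\Delta^{-\alpha}$.
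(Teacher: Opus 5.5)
Your regret argument follows the paper's route. You reindex the averaged payoff by restart position, apply the worst-case per-copy bound of \Cref{alg: robustification} on each length-$\Delta$ block against a common comparator, and absorb the overhanging blocks into the factor $(T+\Delta)/T$.

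\textbf{The gap is in the TV bullet.} The union bound ``over $s$ and actions, as in \Cref{thm: main}'' controls the trigger event of a \emph{single} copy of \Cref{alg: robustification}. Your coupling keeps $\dist$ and $\dist_0$ together only if \emph{no} copy ever triggers. Since the model restarts a copy at every position, there are about $T$ distinct copies over the horizon, i.e.\ about $\Delta T$ (start, length) pairs at which the threshold test is run. With the horizon-$\Delta$ threshold $\sqrt{8(1+\alpha)\log\Delta/s}$, each copy may trigger with probability up to $|\actsp|\Delta^{-\alpha}$ under $\dist_0$. The union bound therefore only gives $O(T\Delta^{-\alpha})$, not $\Delta^{-\alpha}$. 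This is not mere slack in the bound. For fixed $\Delta$ and $T\to\infty$, copies on disjoint windows get independent chances to fire. For example, take i.i.d.\ uniform bits in the binary prediction task: a long all-zero window makes a copy's regret gap over Hedge about $1/2$, which exceeds the threshold once $\Delta$ is large. So some copy fires with probability tending to one, and your coupling cannot give a $T$-independent bound.

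\textbf{The missing ingredient is the one the paper uses.} Calibrate each copy's threshold to $\sqrt{(8\log T+8(1+\alpha)\log\Delta)/s}$, plus the $\log|\actsp|/\sqrt{\Delta}$ term. Then Azuma bounds the failure probability of each (start, length) pair by $1/(T\Delta^{1+\alpha})$, and a union bound over the at most $\Delta T$ pairs gives $\Delta^{-\alpha}$ (up to the factor $|\actsp|$ from comparators, which the paper also suppresses). This is exactly where the $8\log T$ in the stated regret bound comes from: the per-copy worst-case regret inherits the larger threshold. It is not slack from replacing $\log\Delta$ by $\log T$, as you suggest. With your threshold the regret bound would be smaller, but the TV bound would not follow.

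A minor point: with two actions, $\QBR(\cdot,1/\sqrt{\Delta})$ depends on $\mu$ only through $U(1,\mu)-U(0,\mu)$, so it is not injective when $|\statesp|>2$. You do not need injectivity. Just stipulate that $\mu$ is taken to be $\model_0$'s prediction whenever every $\mu_m$ equals it.
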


  \tightmargins{-3mm}
\section{Training Low-Regret Transformer Models}

On one hand, Theorem~\ref{thm: main} demonstrates that it is information theoretically possible to robustify \emph{any} next-token prediction model $\model$ with negligible changes to the underlying distribution. At the same time, this raises questions about whether we can actually \emph{train} low-regret models (after all, if $\dtv(\dist, \dist_0)$ is exponentially small, no training procedure can efficiently distinguish between samples drawn from $\dist$ and samples drawn from $\dist_0$

In this section we investigate this question for the special case of \emph{transformer models}, providing evidence that it is possible to directly robustify low-regret transformer models. In \Cref{sec:representational}, we show it is possible to implement the operations of \Cref{alg: robustification} in the logic of a standard transformer model (i.e., if $\model_0$ can be represented by a small transformer, so can $\model$). In \Cref{sec: experiments}, we provide experimental evidence showing that a simple masking procedure allows us to practically train low-regret transformer models.

  \tightmargins{-3mm}
\subsection{Representing Robustified Models}\label{sec:representational}

In this section, we show that the representational limitations of transformers pose no obstacle to robustification.
To that end, we construct a transformer that robustly predicts future states by adding a constant number of layers to a transformer that solves next-token prediction.

\begin{theorem}\label{thm:transformer}
    Suppose there exists a transformer $\model_0$ with $L$ layers and embedding dimension $m$ that exactly solves the next token prediction task over distribution $D_0$; that is, $\model_0(\token_t|\vtoken^{t-1}) = \Pr_{\dist_0}[\token_t | \vtoken^{t-1}]$).
    Then, there exists a transformer $\model'$ with $L' = L+4$ layers and embedding dimension $m' = m + O(1)$ that approximates the output of \Cref{alg: robustification}.
    \tightmargins{-2mm}
\end{theorem}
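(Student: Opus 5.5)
We build $\model'$ by keeping the $L$ layers of $\model_0$ unchanged and placing four extra layers on top. At position $t$, the residual stream of $\model_0$ already holds a representation of $\model_0(\vtoken^{t-1})$, namely the logits or probabilities over $\tokensp$. The extra $O(1)$ embedding coordinates are reserved for scalar running statistics. We take $|\tokensp|$ and $|\actsp|$ to be constants, so the utility table $U$ can be hard-coded into MLP weights. The new layers then carry out the three stages of \Cref{alg: robustification}: compute per-step utilities, aggregate them into regrets, and detect the first time the threshold is crossed.

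\emph{Layer 1 (local quantities).} An MLP at position $s$ reads $\model_0(\vtoken^{s-1})$ and produces the vector of expected utilities $U(a,\model_0(\vtoken^{s-1}))$ for each $a$. It then computes the softmax weights of $\QBR(\cdot,1/\sqrt T)$, for instance by a piecewise-linear approximation to $\exp$ and division, which is where ``approximates'' enters. This yields the scalar $U(\mact_s,\token_s)$ once the token $\token_s$ is available at that position. By a one-hot encoding of $\token_s$, the same MLP also writes $U(a,\token_s)$ for each $a$.

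\emph{Layer 2 (prefix sums).} A uniform causal attention head, with the standard \bos\ trick to recover $1/s$, computes prefix averages. These give the averages $\frac1s\sum_{r\le s}U(a,\token_r)$ for each $a$, the average $\frac1s\sum_{r\le s}U(\mact_r,\token_r)$, and the empirical token frequencies. The frequencies determine $\polya(\vtoken^{s})$ directly, since it is an affine function of the counts and $s$.

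\emph{Layer 3 (regrets and indicator).} From the prefix averages, a layer computes the regret of the original model as $\regret_s=\max_a \frac1s\sum U(a,\token_r)-\frac1s\sum U(\mact_r,\token_r)$; $\max$ over constantly many actions is implementable with ReLUs. The normalization to the full horizon versus $s$ is a fixed rescaling by $s/T$, obtainable from positional information.

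For $\regret_{\textsc{Hedge},s}$, the Hedge action at step $r$ depends on the prefix, so its utility is not a prefix average of locally computed quantities. We handle this by computing the QBR to $\polya(\vtoken^{r-1})$ position-wise, from the layer-2 frequencies at position $r-1$ or via a shift head. This requires an extra sub-step, folded into layers 2–3.

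A final MLP outputs a soft indicator $b_s\approx\ind{\regret_s-\regret_{\textsc{Hedge},s}\ge \frac{\log|\actsp|}{\sqrt T}+\sqrt{8(1+\alpha)\log T/s}}$. Its threshold term is a function of $s$ and is supplied through positional encodings.

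\emph{Layer 4 (detection and selection).} An attention head averages $b_s$ over $s\le t-1$; the result is positive iff some earlier step triggered. A ReLU gate then outputs $\polya(\vtoken^{t-1})$ when the flag is set and $\model_0(\vtoken^{t-1})$ otherwise. Gating uses the bounded-magnitude trick $x\cdot g = \mathrm{ReLU}(x-C(1-g))-\mathrm{ReLU}(-x-C(1-g))$.

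The main obstacle is that approximation errors near the threshold could flip the indicator. An exact step function is not representable, and a mistaken early switch costs TV distance. We would handle this by showing that each approximation (the exponential, the division by $s$, the threshold step) has error at most $\epsilon$, and by defining ``approximates'' to mean that $\model'$ coincides with \Cref{alg: robustification} run with threshold shifted by $O(\epsilon)$. The proof of \Cref{thm: main} is robust to such a shift, since only the concentration slack changes by a constant, so both guarantees survive. A second delicate point is computing the Hedge comparator's cumulative utility within the stated layer budget. If a separate shift head is needed for it, we will count it against the four layers by merging it with the layer-2 attention.
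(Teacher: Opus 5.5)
Your construction is correct in outline and shares the paper's skeleton. Both use expected utilities computed from $\model_0$'s output by an MLP, a \bos-anchored uniform attention head for the Polya counts and the running utility averages, an MLP that forms $\regret_s-\regret_{\textsc{Hedge},s}$ and thresholds it, an attention head that ORs the indicators over earlier positions, and a gated output.

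Where you genuinely differ is in computing the quantal best responses. You approximate $\exp$ and the normalization inside an MLP. The paper instead lets the attention softmax compute $\QBR(\cdot,1/\sqrt{T})$ exactly. One layer of look-up heads copies the $k=|\actsp|$ expected utilities $\ell^{t-k}_{a'}$ into positions $t-k+a'$. A second layer of $O(k^2)$ heads, using a width-$k$ interval positional encoding, then attends over those positions with query $\sqrt{T}$ and keys $\ell^{t-k}_{a'}$.

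The paper's route gets exponentiation for free, so its MLPs stay small and its only errors are inverse-polynomial attention leakage. The price is $O(|\actsp|^2)$ heads, $O(|\actsp|^3)$ extra coordinates, two layers spent on the softmax, and an $|\actsp|$-step lag (its precise version only tests $s\le t-|\actsp|$). Your route has no lag, needs only $O(|\actsp|+|\statesp|)$ extra coordinates beyond the positional encoding, and should even fit in $L+3$ layers. Its price is the MLP. To keep the Azuma argument of \Cref{thm: main} intact, the error in the regret statistic must be at most a small constant times $\sqrt{\log T/T}$. A one-hidden-layer ReLU approximation of $e^{y}$ to that accuracy needs width polynomial in $T$; deeper MLPs do much better. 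The statement you were given constrains only depth and embedding dimension, so this is allowed, but it is exactly the cost the paper's attention trick avoids.

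Two small repairs are needed. First, in a causal transformer $\model_0(\vtoken^{s-1})$ and $\token_s$ sit at adjacent positions. Pairing $\mact_s$ with $\token_s$ therefore needs a shift head for the original model too, not only for Hedge. The unused attention sublayer of your first new layer can host it. If the Polya counts are produced by an extra head in layer $L$ (as in the paper), the Hedge comparator needs no further aggregation before the prefix averages, so the layer budget closes.

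Second, ``coincides with \Cref{alg: robustification} under a threshold shifted by $O(\epsilon)$'' cannot be guaranteed as stated. The error in the statistic depends on the input, and whenever the statistic lands in the $O(\epsilon)$ band your soft indicator and gate output a mixture of $\polya$ and $\model_0$. State the guarantee as the paper's precise version does, with a two-sided margin $\delta$. The network outputs $\polya(\vtoken^{t-1})$ if the condition holds with margin $\delta$ at some earlier $s$. It outputs $\model_0(\vtoken^{t-1})$ if the condition fails with margin $\delta$ at every earlier $s$. It makes no claim in between.
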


We state the theorem rigorously and present its proof in \Cref{app:transformer}.
At a high-level, the argument relies on constructing four layers that use the outputs of $\model_0$ to simulate \Cref{alg: robustification}.
Self-attention plays an essential role in the construction. Identifying the distribution induced by the Polya Urn strategy and calculating the two regret quantities involve computing aggregations over sequences of tokens, which are naturally simulated with self-attention layers. 
Our construction reflects a realistic class of transformers by maintaining tight bounds on embedding dimension and depth and employing multi-layer perceptrons that can be compactly represented as shallow ReLU networks.

\tightmargins{-2mm}
\subsection{Empirically Robustifying Simple Transformers}
\label{sec: experiments}

\tightmargins{-2mm}
In the previous sections, we demonstrated the existence of a procedure for learning specialized low-regret online learning algorithms by carefully perturbing the original statistical training data. In this section, we also demonstrate that in simple settings, it is also practically efficient to train small transformers with this algorithm, suggesting that robustification procedures may be practically plausible for modifying LLM behavior for decision-making while retaining good statistical performance.

We consider the special case of a decision problem to match the state. Both the action space and the state space are binary $\actsp = \statesp = \{0, 1\}$, and the utility function $\loss(\act, \state) = \ind{\act = \state}$. We conduct experiments with the minimal decoder-only transformer, NanoDO \citep{nanodo}. The transformer predicts a binary sequence. We adopt the default parameters of NanoDO, with a context length of $T = 1024$, $256$ embedding dimensions, $4$ attention heads, $3$ transformer block layers, and $1024$ inner dimensions. 

We train the transformer on three datasets with a batch size of $128$. The three training processes all converge and stop after $500$ steps. 
\tightmargins{-2mm}

\paragraph{\texttt{BERNOULLI}} is the in-distribution and non-robust transformer. The dataset is generated from the distribution where the first half of $512$ bits are from Ber$(1/3)$ and the second half are from Ber$(2/3)$. 
\tightmargins{-2mm}

\paragraph{\texttt{POLYAURN}} is the robust transformer without distributional information. The transformer is trained on Polya Urn sequences, where the next bit is generated from the empirical distribution in history: $\Pr[\state_{t+1} = 1 | \state_1, \dots, \state_t] = \frac{\sum_{i\in [t]}\state_{i}}{t}.$ 
    By setting the temperature to $\frac{1}{\sqrt{T}}$, \texttt{POLYAURN} plays the same strategy as the Hedge algorithm. 
\tightmargins{-2mm}



\paragraph{\texttt{ROBUST\_BERNOULLI}} is
trained on the robustified distribution of \texttt{BERNOULLI}. We do this in the following way. We sample training data from the same distribution as \texttt{BERNOULLI}. We also sample an equal number of Polya Urn sequences. For a Polya Urn sequence, we keep it only if transformer \texttt{BERNOULLI} has a regret higher than $\frac{\alpha}{\sqrt{t}}$ for some $t \leq T$, with $\alpha = 1.5$. In other cases, we discard the sequence. To keep the TV-distance unchanged in the training process, we mask out the loss calculation over the prefix of a Polya urn sequence, up to the first position $t$ where there is a regret higher than $\frac{\alpha}{\sqrt{t}}$. By masking out the prefix, the transformer does not learn the distribution that generates a high-regret prefix.

\tightmargins{-3mm}
\paragraph{Computational Cost Analysis.}
The overhead of robustification of \texttt{ROBUST\_BERNOULLI} consists of:
\begin{itemize}
\tightmargins{-3mm}
    \item \textbf{Data generation:} Sampling from both the original distribution and $M_{\text{Polya}}$ is efficient, requiring $O(T|\Theta|)$ per sequence. For $N$ training sequences, total cost is $O(NT|\Theta|)$. 

    \tightmargins{-2mm}
    \item \textbf{Filtering:} Checking the regret condition requires computing $\text{REGRET}_s$ and $\text{REGRET}_{\text{HEDGE},s}$ for each prefix, which costs $O(T|A||\Theta|)$ per sequence.
    
\end{itemize}

\tightmargins{-3mm}
\subsubsection{Regret Evaluation}
\tightmargins{-1mm}
We evaluate the regret of the three transformers on eight ground truth distributions over sequences. The bits are drawn independently from each other. The first four are static distributions where each bit is drawn from either Ber$(1/3)$ or Ber$(2/3)$. In the other four simulations, we adopt the same simulation setup as in \citet{schneider2024online}. The bits are generated from a periodically drifting distribution with $\Pr[\state_t = 1] = \big|\sin(\pi/6 + t\cdot \pi / \phi)\big|$, for period $\phi\in \{\frac{T}{2}, \frac{T}{5}, \frac{T}{10}, \frac{T}{20}\}$. We evaluate the regret of quantal best-response by applying a soft-max layer and setting the temperature to $\frac{1}{\sqrt{T}}$. We estimate from 128 independent sequences sampled from the ground truth distribution.

We plot the regret of the three transformers in \Cref{fig: regret}. The following observations validate that NanoDO learns the dataset constructed by \Cref{alg: robustification}. First, the transformers effectively learn to play the robust strategy. \texttt{ROBUST\_BERNOULLI} and \texttt{POLYAURN} both have vanishing regret on all eight ground truth data-generating processes. Second, \texttt{ROBUST\_BERNOULLI} learns the switching policy of \Cref{alg: robustification}. \texttt{ROBUST\_BERNOULLI} preserves the same in-distribution regret of \texttt{BERNOULLI} in plot $(1, 2)$, which is negative around $-0.16$. 

\begin{figure}
    \centering
    \includegraphics[width=\linewidth]{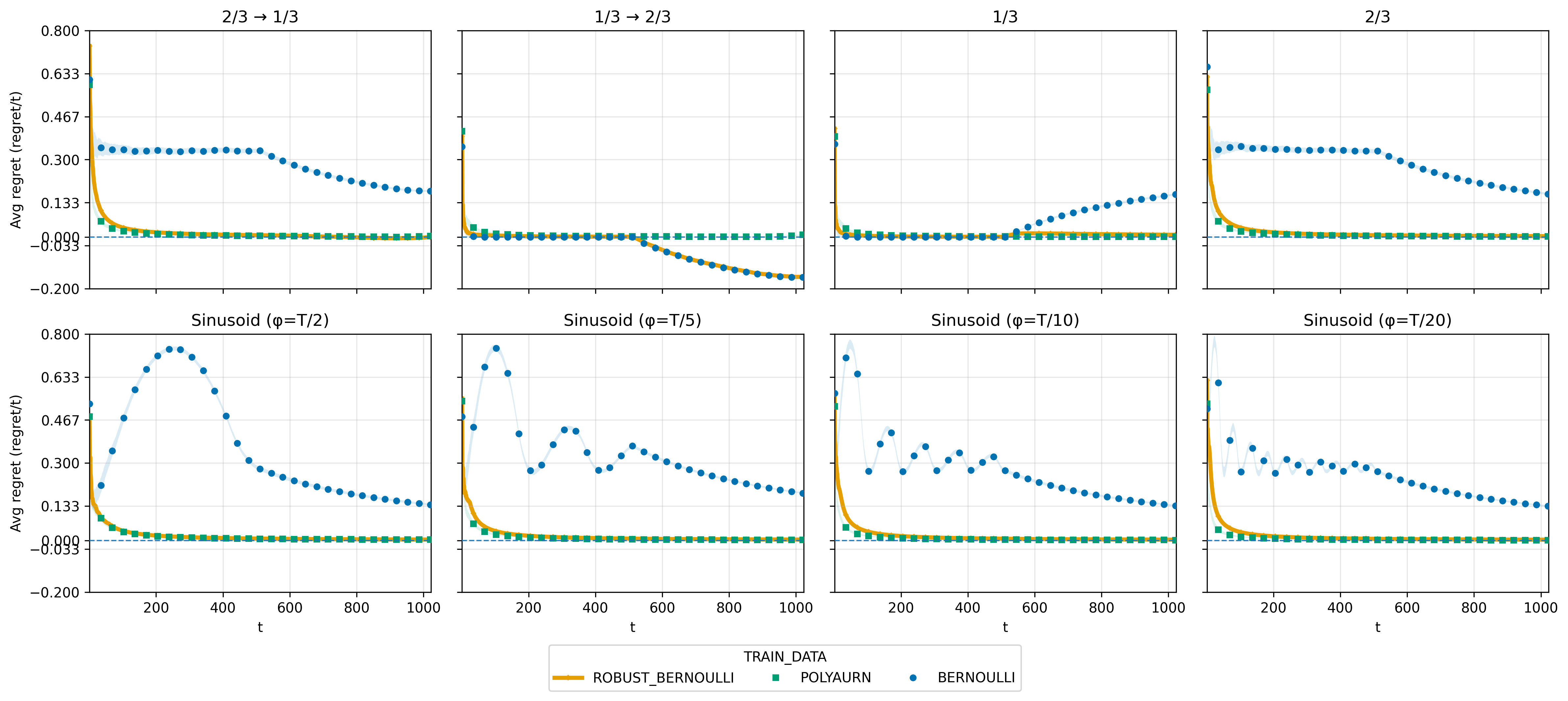}
    \caption{The regret of three transformers over $8$ ground truth distributions. The three transformers are 1) \texttt{ROBUST\_BERNOULLI}, robustified \texttt{BERNOULLI}, 2) \texttt{POLYAURN}, and 3) \texttt{BERNOULLI}. The eight ground truth distributions are: a) half Ber$(2/3)$ and then half Ber$(1/3)$; b) half Ber$(1/3)$ and half Ber$(2/3)$, the same distribution that \texttt{ROBUST\_BERNOULLI} and \texttt{BERNOULLI} were trained on; c) Ber$(1/3)$; d) Ber$(2/3)$; and four periodically changing distributions on the second row. The plot shows (very narrow) confidence intervals in light color.  }
    \label{fig: regret}
\end{figure}

\tightmargins{-2mm}
\subsubsection{TV-distance}
\tightmargins{-1mm}
We report the estimated TV-distance between the models in \Cref{tab:dtv}. We estimate from $128$ independent sample of sequences and report the $95\%$ confidence interval. We also test the TV-distance between two models trained on the same \texttt{BERNOULLI} distribution, but with different random seeds. 
\texttt{ROBUST\_BERNOULLI} achieves a lower TV-distance than \texttt{POLYAURN}, where \texttt{POLYAURN} has a TV-distance estimated as high as $1.0000$ from the original distribution \texttt{BERNOULLI}. 

\begin{table}[]
    \centering
    
\tightmargins{-3mm}
    \begin{tabular}{c|c|c}
         & \texttt{BERNOULLI}$_1$ & \texttt{BERNOULLI}$_2$ \\
         \hline
         \texttt{BERNOULLI}$_1$ & $-$ & $0.4193\pm 0.0232$ \\
         \texttt{ROBUST} & $0.7602\pm 0.0267$ & $0.6869\pm 0.0295$ \\
         \texttt{POLYAURN} & $1.0000\pm 0$ & $1.0000\pm 0$ \\
         \hline
    \end{tabular}
    \caption{The TV-distance between transformers. \texttt{BERNOULLI}$_i$ are two models trained on the same $\mathrm{Ber}(1/3)\!\to\!\mathrm{Ber}(2/3)$ process with different random seeds. We refer to \texttt{ROBUST\_BERNOULLI} as \texttt{ROBUST}. }
    \label{tab:dtv}
    \tightmargins{-5mm}
\end{table}

In addition to the TV-distance, we report the Next-Token TV-distance here. As shown in \Cref{tab:dtv}, the full-sequence TV-distance is brutally strict and even high for two models trained on the same distribution. Tiny per-token differences are calculated as a difference across the entire sequence.  Even models that behave similarly at a token level can have a high TV-distance on whole sequences. 
Per-step TV instead measures the local difference of the two predictive models at each prefix.

We define the following Next-Token TV-distance. For each prefix $\vstate^s$, we can calculate the TV-distance of the next-token prediction, $\dtv(\model_1(\cdot|\vstate^s), \model_2(\cdot|\vstate^s))$.  The Next-Token TV-distance $\dtvnext$ takes the expectation of the prefix from the distribution of \texttt{BERNOULLI}, i.e., with the first $T/2$ drawn from Ber($1/3$) and the second $T/2$ tokens from Ber($2/3$): $\dtvnext = \expect{\vstate\sim \texttt{BERNOULLI}}{\frac{1}{T}\sum_{s\in [T]}\dtv(\model_1(\cdot|\vstate^s), \model_2(\cdot|\vstate^s))}. $ 

We report the Next-Token TV-distance in \Cref{tab: dtv next token}. The results are calculated with $128$ independent draws of a sequence. 


\begin{table}[h]
    \centering
    \begin{tabular}{c|c|c}
         & \texttt{BERNOULLI}$_1$ & \texttt{BERNOULLI}$_2$ \\
         \hline
         \texttt{BERNOULLI}$_1$ & $-$ & $0.0156$ \\
         \texttt{ROBUST} & $0.0199\pm 0.0001$ & $0.0299\pm 0.0001$ \\
         \texttt{POLYAURN} & $0.1529\pm 0.0003$ & $0.1655\pm 0.0004$ \\
         \hline
    \end{tabular}
    \caption{Next-Token TV distance between transformers.  
    \texttt{BERNOULLI}$_i$ are two models trained on the same $\mathrm{Ber}(1/3)\!\to\!\mathrm{Ber}(2/3)$ process with different random seeds. We refer to \texttt{ROBUST\_BERNOULLI} as \texttt{ROBUST}. }
    \label{tab: dtv next token}
    \tightmargins{-5mm}
\end{table}

\newpage
\bibliographystyle{abbrvnat}
\bibliography{ref}

\newpage

\section{Omitted Proofs}\label{app:omitted}

\subsection{Proof of Lemma~\ref{lem:stochastic_br_regret}}
\label{apdx: proof lem stochastic br regret}
\begin{proof}
Let $h_{t-1}:=(\theta_1,\ldots,\theta_{t-1})$ denote the history up to time $t-1$.
By assumption, the model's next token prediction is the true conditional probability at every history:
\[
\model(h_{t-1}) \;=\; \dist(\,\cdot \mid h_{t-1}\,).
\]
At round $t$, the decision maker plays the best
response to this next-token prediction:
\[
\pi_t \in \BR\!\big(\model(h_{t-1})\big)
\;\in\; \arg\max_{\pi\in\Delta(A)} \;\E_{\theta_t\sim \dist(\cdot \mid h_{t-1})}\!\big[\,U(\pi,\theta_t)\,\big].
\]
For any other action $a^*\in A$. By the
optimality of $\pi_t$ under the correct conditional,
\[
\E\!\left[\,U(\pi_t,\theta_t)\,\big|\,h_{t-1}\right]
\;\ge\;
\E\!\left[\,U(a^*,\theta_t)\,\big|\,h_{t-1}\right].
\]
Taking expectations over $h_{t-1}$ and using the tower property yields
\[
\E\big[\,U(\pi_t,\theta_t)\,\big] \;\ge\; \E\big[\,U(a^*,\theta_t)\,\big].
\]
Summing over $t=1,\ldots,T$ gives
\[
\E_{\vtheta\sim\dist}\!\Big[\sum_{t=1}^T U(\pi_t,\theta_t)\Big]
\;\ge\;
\E_{\vtheta\sim\dist}\!\Big[\sum_{t=1}^T U(a^*,\theta_t)\Big]
\quad\text{for every } a^*\in A.
\]
Equivalently, the expected (average) \emph{utility regret} versus the best fixed action
in hindsight is $\le 0$:
\[
\E_{\vtheta\sim\dist}\!\left[\,
\max_{a^*\in A}\frac{1}{T}\sum_{t=1}^T\Big(U(a^*,\theta_t)-U(\pi_t,\theta_t)\Big)
\right] \;\le\; 0,
\]
which is precisely $\E_{\vtheta\sim\dist}[\extreg(\vmact,\vtheta)]\le 0$.
\end{proof}

\subsection{Proof of Lemma~\ref{lem:adversarial_br_regret}}
\label{apdx: proof lemma adversarial br regret}

\begin{proof}
Consider the binary token space and action space $\tokensp = \actsp = \{0, 1\}$, with utility function $\loss(\act, \token) = \ind{\act = \token}$. 
Fix any next-token model $\model$ and the induced (deterministic) decision rule
$\pi_t=\BR(\model(\vtheta^{(t-1)}))$, which is a deterministic function of the history
$h_{t-1}=(\theta_1,\ldots,\theta_{t-1})$.

Define an adversary that, after observing $a_t$ (or equivalently inferring $a_t$ from the history), sets $\theta_t = 1-a_t$.
Then the learner obtains zero utility each round:
\[
U(a_t,\theta_t) = 0 \quad \text{for all } t,
\]
so $\sum_{t=1}^T U(a_t,\theta_t) = 0$.

On the other hand, for the realized state sequence $\theta_{1:T}$, the best
fixed action in hindsight is the majority element of the sequence, which
achieves utility at least $T/2$. Therefore,
\[
\extreg(\vmact,\vstate)
= \frac{1}{T}\left(\max_{a\in\{0,1\}}\sum_{t=1}^T U(a,\theta_t)
- \sum_{t=1}^T U(a_t,\theta_t)\right)
\;\ge\; \frac{1}{2}.
\]

Thus, the regret is bounded below by a constant, i.e.\ $\Omega(1)$.
\end{proof}

\subsection{Proof of Lemma~\ref{lem:adversarial_qr_regret}}

\label{apdx: proof lemma adversarial qr regret}

\begin{proof}[Proof of Lemma~\ref{lem:adversarial_qr_regret}]
Fix a finite action set $A$ and utilities $U:A\times\Theta\to[-1,1]$. At round $t$, the
Polya urn predictor is
\[
\polya(\theta\mid \theta_{1:t-1})
=\frac{1+\sum_{s=1}^{t-1}\ind{\theta_s=\theta}}{|\Theta|+(t-1)}.
\]

The QBR with parameter $\eta>0$ plays
\begin{align*}
    \pi_t(a)
\;\propto\;
\exp\!\Big(\tfrac{1}{\eta}\,U\big(a,\polya(\cdot \mid \theta_{1:t-1})\big)\Big),
\end{align*}
where
\begin{align*}
\exp\!\Big(\tfrac{1}{\eta}\,U\big(a,\polya(\cdot \mid \theta_{1:t-1})\big)\Big)\\
=\exp\!\Big(\frac{1}{\eta\,(|\Theta|+t-1)}
\Big[\sum_{\theta\in\Theta} U(a,\theta)
\;+\; \sum_{s=1}^{t-1} U(a,\theta_s)\Big]\Big).
\end{align*}
Thus, QBR plays
\[
\pi_t(a)
\ \propto\
{\exp\!\Big(\tfrac{C(a)}{\eta\,(|\Theta|+t-1)}\Big)}
\cdot
\exp\!\Big(\tfrac{1}{\eta\,(|\Theta|+t-1)}\sum_{s=1}^{t-1}U(a,\theta_s)\Big).
\]
Thus $\pi_t$ is an \emph{exponential-weights} distribution over actions with
round-$t$ learning rate
$\lambda_t:=\frac{1}{\eta\,(|\Theta|+t-1)}$ applied to the realized utilities $U(a,\theta_s)$,
and with an action-dependent prior factor that only changes by a common
(rescaling-invariant) temperature at each $t$. Standard analysis of Hedge with
time-varying learning rates (apply, e.g., the potential argument round by round)
gives, for any adversarial sequence $\theta_{1:T}$,
\begin{align*}
\sum_{t=1}^T\Big(U(\pi_t,\theta_t)-U(a^\star,\theta_t)\Big)
\ \ge\
-\frac{\ln|A|}{\lambda_T}\;-\;\frac{1}{2}\sum_{t=1}^T \lambda_t,
\\
\quad\text{for all }a^\star\in A,
\end{align*}

using $U\in[-1,1]$. Choosing $\eta = T^{-1/2}$ (as in the statement) yields
$\lambda_t=\frac{\sqrt{T}}{|\Theta|+t-1}$, so
\begin{align*}
    \frac{1}{T}\sum_{t=1}^T\Big(U(a^\star,\theta_t)-U(\pi_t,\theta_t)\Big)
\ \le\
\frac{\ln|A|}{T\,\lambda_T}\;+\;\frac{1}{2T}\sum_{t=1}^T \lambda_t
\ 
\\=\
\frac{\ln|A|}{\sqrt{T}}\;+\;O\!\Big(\frac{\log T}{\sqrt{T}}\Big)
\ =\ O\!\left(\frac{\log T + \ln|A|}{\sqrt{T}}\right).
\end{align*}

which implies the regret bound
$\extreg(\vmact,\vtheta)=O\!\left(\frac{\log T + \ln|A|}{\sqrt{T}}\right)$.
\end{proof}


\subsection{Proof of \Cref{thm: main}}
\label{apdx: proof of robustification}
\begin{proof}

Let $\mathcal{E}$ be the event that for some $s \in [T-1]$, $\regret_{s} \geq \regret_{\textsc{Hedge}, s} + \frac{1}{\sqrt{T}}\log|\actsp| + \sqrt{8(1+\alpha) (\log T) / s}$ (i.e., we return $\polya(\vtoken^{t-1})$ for all rounds $t > s$). Let $\tau$ be the random variable representing the minimum such $s$; if the event $\mathcal{E}$ does not occur, let $\tau = T$. 

We begin by proving that the new model $\model$ implements a low-regret distribution $\dist$. Fix any adversarial sequence of states $\vstate$ and define $\mact_t = \QBR(\model(\vstate^{t-1}), 1/\sqrt{T})$. We can decompose the external regret $\extreg(\vmact, \vstate)$ via

\begin{align*}
    &\extreg(\vmact, \vstate) \\
    = &\max_{a^{*} \in \actsp}\frac{1}{T}\bigg(\sum_{t=1}^{\tau} [U(a^{*}, \theta_t) - U(\pi_t, \theta_t)] \\
    &+ \sum_{t=\tau+1}^{T} [U(a^{*}, \theta_t) - U(\pi_t, \theta_t)] \bigg)
\end{align*}

By assumption, for $t \leq \tau$, $M(\vstate^{t-1}) = M_{0}(\vstate^{t-1})$, and so $\sum_{t=1}^{\tau} [U(a^{*}, \theta_t) - U(\pi_t, \theta_t)] \leq \tau \regret^{\tau}$. By the definition of $\tau$, $\regret^{\tau} < \regret_{\textsc{Hedge}, s} + \frac{1}{\sqrt{T}}\log|\statesp| + \sqrt{8(1+\alpha) (\log T) / \tau}$, and so we in turn have that $\sum_{t=1}^{\tau} [U(a^{*}, \theta_t) - U(\pi_t, \theta_t)] \leq \tau \left( \regret_{\textsc{Hedge}, s} + \frac{1}{\sqrt{T}}\log|\statesp| + \sqrt{8(1+\alpha) (\log T) / \tau} \right) = \tau \regret_{\textsc{Hedge}, s} + O\left(\sqrt{T}\left(\log |\statesp| + \sqrt{(1+\alpha)\log T}\right)\right)$.

For $t > \tau$, we have that $\model(\vstate^{t-1}) = \polya(\vstate^{t-1})$. By Lemma~\ref{lem:adversarial_qr_regret}, we therefore have that $\tau\regret_{\textsc{Hedge}, s} + \sum_{t=\tau+1}^{T} [U(a^{*}, \theta_t) - U(\pi_t, \theta_t)] = O\left(T\cdot \frac{\log (T) + \log |\actsp|}{\sqrt{T}}\right) = O(\sqrt{T} \log (|\actsp| \cdot T))$. Combining these two terms, we have that $\extreg(\vmact, \vstate) = O\left(\frac{1}{\sqrt{T}} (\log(|\actsp|\cdot T) + \sqrt{(1+\alpha)\log T})\right) = o(1)$.


We next bound the TV-distance between $\dist$ and $\dist_0$. Note that because we play the recommendation of $\model_0$ (and sample from $D_0$) until event $\mathcal{E}$ occurs, the TV distance $\dtv(\dist, \dist_0)$ is upper bounded by the probability $\Pr_{\vstate\sim D_0}\left[\mathcal{E}\right]$ of this event. 

To do this, we begin by defining $\vact$ to be the sequence of pure action best responses to the recommendations of $\model_0$; i.e., $\act_{t} = \BR(\model_0(\vstate^{t-1}))$. We argue that if $\vstate$ is truly sampled from $\dist_0$, then $\vact$ and $\vmact$ obtain similar utilities and hence similar regrets. We can quantitatively bound this through the following lemma.

\begin{lemma}\label{lem:br-qbr-close}
Let $\mu \in \Delta(\statesp)$ be a distribution over states $\theta$. Let $\act = \BR(\mu)$ and $\mact = \QBR(\mu, \eta)$. Then

$$U(\act, \mu) - U(\mact, \mu) \leq \eta\log |\actsp|.$$
\end{lemma}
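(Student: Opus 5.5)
The plan is to use the Gibbs variational characterization of the softmax. Write $u_a := U(a,\mu)$ for each $a\in\actsp$, so the quantal best response is $\mact(a) = \exp(u_a/\eta)/Z$ with $Z = \sum_{b}\exp(u_b/\eta)$, and the best response $\act$ attains $\max_a u_a$. By linearity, $U(\mact,\mu) = \sum_a \mact(a) u_a$. The key identity, valid for every distribution $p\in\Delta(\actsp)$, is
\[
\sum_a p(a)\,u_a + \eta H(p) \;=\; \eta \log Z - \eta\, \mathrm{KL}\!\left(p \,\|\, \mact\right),
\]
where $H$ is Shannon entropy (natural log). I would verify it by expanding $\mathrm{KL}(p\|\mact) = \sum_a p(a)\log p(a) - \sum_a p(a)\,(u_a/\eta - \log Z)$. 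This is a routine one-line computation.

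I would then apply the identity twice. First take $p=\mact$: the KL term vanishes, giving $U(\mact,\mu) + \eta H(\mact) = \eta\log Z$. Next take $p = \delta_{\act}$, the point mass on the best response: its entropy is $0$ and its KL is nonnegative, so $U(\act,\mu) \le \eta\log Z$. Subtracting yields
\[
U(\act,\mu) - U(\mact,\mu) \;\le\; \eta H(\mact) \;\le\; \eta\log|\actsp|,
\]
using the standard bound that entropy is at most the log of the support size.

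As a fallback that avoids the variational identity, I would bound directly $\eta\log Z \ge \max_a u_a = U(\act,\mu)$, since $Z \ge \exp(\max_a u_a/\eta)$. Then I would note that $\eta\log Z - U(\mact,\mu) = \eta H(\mact)$, which follows from $\log\mact(a) = u_a/\eta - \log Z$ after averaging against $\mact$.

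None of the steps is genuinely hard. The only care point is the base of the logarithm: the bound $H \le \log|\actsp|$ must use the same (natural) log as the exponential in the definition of $\QBR$. I would also remark that the result is independent of $\mu$ beyond its role in defining the $u_a$, so it applies pointwise to each round's prediction $\model_0(\vstate^{t-1})$ in the subsequent argument.
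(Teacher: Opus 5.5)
Your proposal is correct and follows the paper's proof: both compare the entropy-regularized objective $U(\cdot,\mu)+\eta H(\cdot)$ at the quantal best response against its value at the point mass on $\act$, then use $H(\act)=0$ and $H(\mact)\le\log|\actsp|$. The only difference is that you derive the variational characterization of the softmax via the KL identity, whereas the paper simply asserts it.
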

\begin{proof}
Note that we can equivalently define the quantal best response $\mact$ as the mixed action that maximizes the regularized utility $V(\mact) = U(\mact, \mu) + \eta H(\mact)$ (where $H$ is the entropy function). We therefore have that $V(\act) \leq V(\mact)$; expanding this out (and using the fact that $H(\act) = 0$), we find that $U(\act, \mu) \leq U(\mact, \mu) + \eta H(\mact) \leq U(\mact, \mu) + \eta\log |A|$, from which the conclusion follows.
\end{proof}

From Lemma~\ref{lem:br-qbr-close}, it follows that when $\vstate \sim \dist_0$, $\E_{\state_t}[U(\act_t, \state_t) - U(\mact_t, \state_t)] \leq (\log |A|)/\sqrt{T}$. Secondly, since $\act_t$ is the best response to the distribution of $\state_t$, for any action $\act^{*}_t$ we have that $\E_{\state_t}[U(\act^{*}_t, \state_t) - U(\act_t, \state_t)] \leq 0$. Combining these expressions, we have that $\E_{\state_t}[U(\act^{*}_t, \state_t) - U(\mact_t, \state_t)] \leq (\log |A|)/\sqrt{T}$.

Let $R_{t}(a^*) = \sum_{t}(U(\act^{*}, \state_t) - U(\mact_t, \state_t))$ be the unnormalized regret at time $t$ with respect to $a^{*}$, and similarly $R_{\textsc{Hedge}, t}(a^*) = \sum_{t}(U(\act^{*}, \state_t) - U(\mact_{\textsc{Hedge}, t}, \state_{t}))$. By the previous observation, $R_{t}(a^{*}) - R_{\textsc{Hedge}, t}(a^*) - t(\log |A|)/\sqrt{T}$ is a super-martingale, so by Azuma's inequality, we have that

\begin{equation*}
\Pr\left[R_{t}(a^{*}) \geq R_{\textsc{Hedge}, t}(a^*) + \frac{t\log |A|}{\sqrt{T}} + C\right] \leq \exp\left(-\frac{C^2}{8t}\right).
\end{equation*}

Substituting $C = \sqrt{8(1+\alpha) (\log T) t}$ (and normalizing by $t$), we find that 

\begin{align*}
\Pr\bigg[\frac{1}{t}R_{t}(a^{*}) \geq R_{\textsc{Hedge}, t}(a^*)\\
+ \frac{\log |A|}{\sqrt{T}} + \sqrt{\frac{8(1+\alpha) (\log T)}{t}}\bigg] \leq T^{-(1 + \alpha)}.
\end{align*}

Now, $\regret_{t} = \max_{a^{*}}\frac{1}{t}R_{t}(a^{*})$. Applying a union bound over all $t \in [T]$ and $a^{*} \in |A|$, we have that $\Pr[\mathcal{E}] \leq |\actsp| \cdot T^{-\alpha}$, as desired.
\end{proof}

\subsection{Proof of \Cref{thm: same context length impossibility}}
\label{apdx: proof impossibility with same context length}

To formally construct these two models $\model_0$ and $\model_1$, we will make use of the theory of \emph{de Bruijn sequences}. The \emph{de Bruijn graph} $\hat{G}_{\sigma, k}$ of order $k$ on an alphabet $\statesp = \{s_1, \dots, s_{|\statesp|}\}$ of size $|\statesp|$ is a directed graph whose vertices represent all distinct sequences of length $k-1$. For every sequence $s_{i_1}s_{i_2}\dots s_{i_k}$ of length $k$, there is a directed edge from the vertex $s_{i_1}s_{i_2}\dots s_{i_{k-1}}$ to the vertex $s_{i_2}s_{i_3}\dots s_{i_k}$. A \emph{de Bruijn sequence} of order $k$ is a cyclic sequence of characters in $\statesp$ where each of the possible $|\statesp|^{k}$ substrings of length $k$ appears exactly once. Note that a de Bruijn sequence of order $k$ corresponds to a (loop-removed) Eulerian cycle in a de Bruijn graph of order $k$ (which in turn must exist since every node in $\hat{G}_{\sigma, k}$ has equal indegree and outdegree $|\statesp|$). 

Given a fixed de Bruijn sequence of order $L$ and over a binary alphabet $\statesp = \{0, 1\}$, we can use it to construct two nearly deterministic $L$-bounded models $\model_0$ and $\model_1$. $\model_0$ and $\model_1$ induce the same uniform marginal distribution over length $L$ substrings, but the next-token predictions are different. We construct in the following way: we define $\model_0(\vtoken^{L})$ specified by the deterministic next-token of the de Bruijn sequence, and $\model_1(\vtoken^{L}) = 1-\model_0(\vtoken^{L})$ as the deterministic opposite of $\model_1$. The first $L$ tokens in the Markov process are seeded uniformly so that both processes remain in the same stationary distribution: the marginal distribution over any $t>L$ substring is uniform. Thus, any context length $L$ model $\model$ will not be able to distinguish $\model_0$ from $\model_1$. Such a model $\model$ makes predictions very differently from the deterministic next-token of either $\model_0$ or $\model_1$, leading to \Cref{thm: same context length impossibility}. 

\begin{proof}
We will prove this by constructing two $L$-bounded models $\model_0$ and $\model_1$ with the following guarantee: for any other $L$-bounded model,

$$d_{TV}(\dist_0, \dist) + \E_{\vtoken \sim D_1}[\extreg(\vmact, \vtoken)] \geq 1/12.$$

The theorem statement then follows from this guarantee (if $\E_{\vtoken \sim D_1}[\extreg(\vmact, \vtoken)] \geq 1/24$, there exists some sequence in the support of $\dist_1$ that realizes this).

We first describe the two models. These models will be (nearly) deterministic Markov processes of order $L$. For both $\model_0$ and $\model_1$, we set probabilities for the first $L$ tokens so that each token is equally likely to be $0$ or $1$ (i.e., for $t \leq w$, $\model_0(\state_{t} | \state^{t-1}) = \model_1(\state_{t} | \state^{t-1}) = \text{Unif}(\{0, 1\})$). 

We then use a de Bruijn sequence to set the transition probabilities of $\model_0$ and $\model_1$ as follows. Pick an arbitrary binary de Bruijn sequence of order $L$. For an $L$-tuple of states $\vtoken^{L} = (\theta_1, \dots, \theta_L)$, let $\DB(\vtoken^{L}) \in \{0, 1\}$ be the token immediately following $\vtoken^{L}$ in this de Bruijn sequence. Then:

\begin{itemize}
    \item For $\model_0$, set $\model_0(\token_{t} | \vtoken^{(t-L):(t-1)}) = \ind{\token_{t} = \DB(\vtoken^{(t-L):(t-1)})}$.
    \item For $\model_1$, set
    $\model_1(\token_{t} | \vtoken^{(t-L):(t-1)}) = \ind{\token_{t} = 1 - \DB(\vtoken^{(t-L):(t-1)})}$.
\end{itemize}

That is, we deterministically\footnote{If we want to ensure $\dist_0$ and $\dist_1$ have full support, we can add infinitesimal mass on the other option (i.e. follow the de Bruijn sequence with probability $1-\epsilon$, follow the opposite with probability $\epsilon$). This does not affect any of the subsequent logic.} set $\model_0$ to generate the next bit by following the de Bruijn sequence, and set $\model_1$ to generate the next bit by deterministically following the opposite of the de Bruijn sequence.

We begin by making the following observation: for both Markov processes $\model_0$ and $\model_1$, the uniform distribution over $L$-bit strings is a stationary distribution for the process. This follows because the induced Markov chain over $L$-bit strings is doubly stochastic for both $\model_0$ and $\model_1$ (for any state $\vtoken^{L}$, there are exactly two predecessor states that can lead to it, one from the de Bruijn sequence, and one not from it). This means that for all $t > L$, the distribution of $\vtoken^{(t-L):(t-1)}$ is uniform over $\statesp^{L}$.

Now let us consider the candidate robust $L$-bounded model $\model$ (with distribution $\dist(\model) = \dist$). We will call an $L$-tuple of states $\vtoken^{L} \in \statesp^{L}$ \emph{high-regret} if $\model(\DB(\vtoken^{L}) | \vtoken^{L}) \geq 2/3$, and let $\alpha \in [0, 1]$ equal the fraction of tuples in $\statesp^{L}$ that are high-regret. Note that on a high-regret sequence the prediction of $\model$ disagrees with that of $\model_1$, and will cause $\model$ to incur external regret on sequences drawn from $\dist_1$.

In particular, we first claim that $\E_{\vtoken \sim D_1}[\extreg(\vmact, \vtoken)] \geq \frac{1}{3}\alpha - (1-\alpha)$. To see this, we will compare the expected utility of following the baseline strategy $\mact^{*} = (1/2, 1/2)$ to the utility of following the sequence of recommendations $\mact_{t} = \QBR(\model(\vtoken^{t-1}), 1/\sqrt{L})$. The baseline strategy has the property that $U(\mact^{*}, \theta) = 1/2$ regardless of $\theta$, so the cumulative utility of the baseline is always $T/2$. If $\vtoken^{(t-L):(t-1)}$ is a high-regret tuple, then $\mact_{t}$ will equal $\DB(\vtoken^{(t-L):(t-1)})$ with probability at least $2/3$ (this probability only gets amplified by the quantal best response), and therefore $U(\mact_{t}, \theta_{t}) \leq 1/3$. On the other hand, if $\vtoken^{(t-L):(t-1)}$ is not a high-regret tuple, then we only have the trivial bound $U(\mact_{t}, \theta_{t}) \leq 1$. Finally, for any $t < L$, $\theta_{t}$ will be drawn uniformly from $\{0, 1\}$, so the expected utility $\E[U(\mact_{t}, \theta_t)] = 1/2$. 

Combining these facts (and using the fact that for each $t > L$, $\vtoken^{(t-L):(t-1)}$ is drawn uniformly from $\statesp^{L}$ and therefore has an $\alpha$ probability of being high-regret), we find that

\begin{align*}
    \E_{\vtoken\sim \dist_1}[\extreg(\vmact, \vtoken)] &\geq \E_{\vtoken\sim \dist_1}\left[\frac{1}{T}\sum_{t=1}^{T}U(\mact^{*}, \theta_t) - U(\mact_{t}, \theta_t) \right] \\
    &\geq \frac{(T-L)}{T} \cdot \left(\frac{1}{2} - \frac{\alpha}{3} - (1-\alpha)\right) \\
    &= \frac{\alpha}{3} - \frac{1}{4}.
\end{align*}

On the other hand, we will show that if $\alpha$ is too small, then the TV distance between $\dist_0$ and $\dist$ is necessarily large. Indeed, let $\tilde{\dist_{0}}$ and $\tilde{\dist}$ be the distributions of the first $L+1$ states from $\dist_0$ and $\dist$ respectively -- by the data-processing inequality, $\dtv(\dist_0, \dist) \geq \dtv(\tilde{\dist_0}, \tilde{\dist})$. But we can directly bound $\dtv(\tilde{\dist_0}, \tilde{\dist}) \geq (1-\alpha)/3$, since if $\vtoken^{1:L}$ is not high-regret (which happens with probability $1-\alpha$), with probability at least $1/3$ $\model(\vtoken^{1:L})$ will not equal $\DB(\vtoken^{1:L})$ and thus generate a sequence lying outside the support of $\model_0$. It follows that $d_{TV}(\dist_0, \dist) + \E_{\vtoken \sim D_1}[\extreg(\vmact, \vtoken)] \geq 1/12$, as desired.

\end{proof}

\subsection{Proof of \Cref{thm: bounded context no regret}}

\begin{proof}[Proof of \Cref{thm: bounded context no regret}]
    First, we bound the TV distance between $D$ and $D_0$. Following the proof of \Cref{thm: main}, for any substring of length $m\leq \Delta$, the probability that \Cref{alg: robustification} plays the out-of-distribution prediction is bounded by $\frac{1}{T}\cdot \frac{1}{\Delta^{\alpha + 1}}$. There are at most $\Delta T$ substrings of length bounded by $\Delta$. Applying a union bound we prove the TV distance result. 

    The external regret bound follows from the same proof in \citet{schneider2024online}. We write the proof here. Given a context of length $L$, the output of \Cref{alg: robustification bounded context} can be viewed as the uniform combination of $\Delta$ copies of \Cref{alg: robustification}, each starting at a time $m = L + 1, \dots, L'$. Intuitively, \Cref{alg: robustification bounded context} inherits the regret of \Cref{alg: robustification} over length $\Delta$ strings with the given parameters, which is bounded by $\frac{\sqrt{2}+1}{\Delta} + \sqrt{\frac{8\log T + 8(\alpha + 1)\log \Delta}{\Delta}}$. We denote \Cref{alg: robustification} by $\mathcal{A}$ and its regret by $\regret_{\mathcal{A}}$. 
    First, we introduce notation related to offsets. For any $t\in -(\Delta-1), T-1$ and $m\in [\Delta]$, we write 
    \begin{equation*}
        \tilde{\act}_t^m = \act_{t+m}^m = \mathcal{A}(\state_t, \dots, \state_{t + m - 1}),
    \end{equation*}
    which is the prediction by the $m$-th copy of $\mathcal{A}$ about state $\state_{t+m}$. 

    Now we rearrange the total payoff of \Cref{alg: robustification bounded context} and write it in copies of $\mathcal{A}$:
    \begin{align*}
        \sum_{t}\ind{\act_t = \state_t} = &\sum_{t = 1}^T\frac{1}{\Delta}\sum_{m = 1}^{\Delta} \ind{\act_t^m = \state_t}\\
    =&\sum_{t = 1}^T\frac{1}{\Delta}\sum_{m = 1}^{\Delta} \ind{\tilde{\act}_{t-m}^m = \state_t}\\
    = &\sum_{t = -\Delta+1}^{T-1}\frac{1}{\Delta}\sum_{m = 1}^{\Delta} \ind{\tilde{\act}_{t}^m = \state_{t+m}}\\
    \geq & \sum_{t = -\Delta+1}^{T-1}\frac{1}{\Delta}\left[\max_{\state\in\{0, 1\}}{\sum_{m = t}^{t +\Delta}\ind{\state = \state_m}} - \Delta\regret_{\mathcal{A}}\right]\\
    \geq & \max_{\state\in \{0, 1\}}\sum_{t = 1}^T\ind{\state = \state_t}  - (T+M)\regret_{\mathcal{A}}.
    \end{align*}
    Normalizing both sides by $\frac{1}{T}$, we prove the theorem. 
\end{proof}

\section{Transformer Robustification Construction}\label{app:transformer}

\subsection{Transformer Preliminaries}

We introduce a formal model of a transformer, drawing heavily from \citet{sht24}.

For a sequence of 
queries, keys, and values $Q, K, V \in \mathbb{R}^{T \times m}$,
an \textit{autoregressive self-attention head} of embedding dimension $m$ with softmax attention 
is defined by
\[f(Q, K, V) = \sm(Q K^\tr) V,\]
where the softmax operator \[\sm(v) = \frac1{\sum_{i=1}^T \exp(v_i)}(\exp(v_1), \dots, \exp(v_T))\] is applied row-wise, and mask $M \in \R^{T\times T}$ satisfies \[M_{i,j} = \begin{cases}0 & \text{if} \ i \geq j, \\ -\infty & \text{otherwise.} \end{cases}\] 

\textit{Multi-headed attention} concatenates the outputs for multiple attention heads. 
For some sequential input $X \in \R^{T \times m}$, an $H$-headed attention unit computes $H$ queries, keys, and values of embedding dimension $\frac{m}{H}$ as
\[Q^h = X W_Q^h, \ K^h = XW_K^k , \ V^h = X W_V^h,\]
for projections $W_Q^h, W_K^h, W_V^h \in \R^{m \times m/H}$, for every $h \in [H]$.
The output of the resulting $H$-headed attention layer is the following:
\[X \mapsto [f(Q^1, K^1, V^1) \dots f(Q^H, K^H, V^H)],\] 
for parameters $(W_Q^h, W_K^h, W_V^h)_{h \in [H]}$.

We define a \textit{transformer} of depth $L$ as a function of the form \[g = \phi_L \circ f_L \circ \dots \circ \phi_1 \circ f_1 \circ \phi_0,\]
where $f_1, \dots, f_L$ are multi-headed attention layers of embedding dimension $m$, and $\phi_1, \dots, \phi_{L-1}: \R^{m} \to \R^m$ are \textit{multi-layer perceptrons} applied element-wise, i.e.
\[\phi_\ell(X) = (\phi_\ell(X_1), \dots \phi_\ell (X_T)),\]
$\phi_0: \Sigma \to \mathbb{R}^m$ is an embedding layer from some alphabet $\Sigma$, and $\phi_L: \R^m \to \R^{\dout}$ is an output MLP layer. 
In the subsequent proof, we argue informally that our MLP units can be efficiently constructed as a shallow ReLU network with bounded width (typically, logarithmic in sequence length $T$) and bit-precision ($\log(T)$ as well).

We assume that the alphabet $\Sigma$ encodes a positional encoding. That is, in the proof of \Cref{thm:transformer}, we let $\Sigma = \Theta \times [T]$ and encode the input sequence $\theta^T \in \Theta^T$ as $((\theta_1, 1), \dots, (\theta_T, T))$. We assume that there exists a constant ``beginning-of-sequence token'' $X_{\bos}$ that produces constant key and value vectors and can be attended to.

\subsection{Proof of \Cref{thm:transformer}}

We restate \Cref{thm:transformer} precisely.

\begin{theorem}\label{thm:transformer-app}
    Suppose there exists a transformer $g_{\mathcal{M}_0}$ of depth $L$ and embedding dimension $m$ that exactly computes the next-token probabilities over some distribution $D_0$ (i.e. for any $\vstate^T \in \Theta^T$, $g_{\mathcal{M}_0}(\vstate^T)_{t,i} = \Pr_{D_0}[\theta_t = i \mid \vstate^{t-1}]$).
    Suppose the loss function $U$ is Lipschitz and can be exactly represented by a multi-layer perceptron with width independent of $T$.
    Then, there exists a transformer $g'$ of depth $L' = L+4$, heads $H' = O(|A|^2)$, and embedding dimension $m' = m + O(|A|^3 + |\Theta|)$ such that the following is true (in the notation of \Cref{alg: robustification}) for some error term $\delta \leq \frac1{T^c}$ (for any fixed $c > 0$), for all $t \leq T$: 
    \begin{enumerate}
        \item If there exists $s \leq t - |\actsp|$ such that \[\regret_{s} \geq \regret_{\textsc{Hedge}, s} + \frac{1}{\sqrt{T}}\log|\actsp| + \sqrt{8(1+\alpha) (\log T) / s} + \delta,\] then $g'(\theta^T)_{t} = \polya(\vstate^{(t-1)})$.
        \item If every $s \leq t - |\actsp|$ satisfies \[\regret_{s} < \regret_{\textsc{Hedge}, s} + \frac{1}{\sqrt{T}}\log|\actsp| + \sqrt{8(1+\alpha) (\log T) / s} - \delta,\] then $g'(\theta^T)_t = \model_0(\vtoken^{t-1}) = g_{\mathcal{M}_0}(\theta^T)$. 
    \end{enumerate}
\end{theorem}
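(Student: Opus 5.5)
We build $g'$ on top of $g_{\model_0}$: run its $L$ layers unchanged, then add four layers that simulate \Cref{alg: robustification}. We reserve $O(|\actsp|^3+|\statesp|)$ extra coordinates in the residual stream. The embedding $\phi_0$ writes into them a one-hot encoding of $\theta_t$, the position $t$ together with scalar features such as $1/t$, $1/\sqrt{t}$ and $\sqrt{8(1+\alpha)\log T/t}$, and the vector $(U(a,\theta_t))_{a\in\actsp}$. This vector is exact, since $U$ is representable by a constant-width MLP.

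\textbf{Layer $L+1$ (both strategies at every position).} At each position $s$ we already have $\model_0(\vtoken^{s-1})$ from $g_{\model_0}$. Because each position must later use the prediction made at the \emph{previous} position, we either shift with a single previous-token attention head, or use the fact that the model's output at $s-1$ is what is needed at $s$. An MLP then computes $U(a,\model_0(\vtoken^{s-1}))$ for each $a$ and applies a softmax with temperature $\sqrt{T}$, approximating $\mact_s$ to $T^{-c}$ precision by a shallow ReLU approximation of $\exp$ and of division. In the same layer, one attention head with uniform keys and a \bos{} token computes the prefix average of the one-hot vectors $e_{\theta_{s'}}$ for $s'<s$; rescaling by the position features gives $\polya(\vtoken^{s-1})$. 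The MLP then forms $\mact_{\textsc{Hedge},s}$ in the same way. Finally the MLP writes the per-round quantities $U(a^*,\theta_s)-U(\mact_s,\theta_s)$ and $U(a^*,\theta_s)-U(\mact_{\textsc{Hedge},s},\theta_s)$ for every $a^*\in\actsp$.

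\textbf{Layers $L+2$ and $L+3$ (regret comparison).} Layer $L+2 has $O(|\actsp|)$ uniform-attention heads, which produce prefix averages, i.e.\ $\frac1s R_s(a^*)$ and $\frac1s R_{\textsc{Hedge},s}(a^*)$. Its MLP takes the maximum over $a^*$; a max of $|\actsp|$ numbers is a small ReLU network, and pairwise comparisons account for the $|\actsp|^2$ and $|\actsp|^3$ budget. This yields $\regret_s$ and $\regret_{\textsc{Hedge},s}$. The MLP then computes the gap $\regret_s-\regret_{\textsc{Hedge},s}-\frac{\log|\actsp|}{\sqrt T}-\sqrt{8(1+\alpha)\log T/s}$. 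It converts the gap into a flag $b_s\in\{0,1\}$ with a steep ReLU ramp of width $\delta$. This ramp is exactly why the theorem has the $\pm\delta$ margins: in the margin region the behaviour is left unspecified. Layer $L+3$ detects whether any earlier flag is set. An attention head with keys proportional to $\lambda b_s$, with $\lambda\approx T$-scale so that hard-max behaviour holds up to $T^{-c}$, together with a \bos{} fallback, outputs approximately $\max_{s'\le t} b_{s'}$. The restriction $s\le t-|\actsp|$ in the statement leaves slack for the shifts introduced by computing the quantities of round $s$ at position $s$ or $s+1$. I would handle it by masking the last few positions via the position features, possibly using $|\actsp|$ extra lag.

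\textbf{Layer $L+4$ and the output.} The final MLP selects the output: it returns $\polya(\vtoken^{t-1})$ if the flag is set and $\model_0(\vtoken^{t-1})$ otherwise. This is a gated mixture, implementable by ReLUs as $\min(x, Mb)$-type gates; since $b$ is exactly $0$ or $1$ outside the margin, the output is exact in cases 1 and 2. Layer $L+4$ may be needed only to realign positions or to renormalise.

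The main obstacle is the error accounting. Softmax-based QBR values, divisions by $s$, and square roots are all computed only approximately by finite-precision ReLU MLPs. I would show that each such value is accurate to $T^{-c'}$ with width $O(\log T)$, and that these errors accumulate additively into the regret estimate. The total error then stays below $\delta=T^{-c}$, so the thresholding is correct whenever the gap exceeds $\delta$ in absolute value, which is exactly the dichotomy of cases 1 and 2.
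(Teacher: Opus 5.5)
Your construction is correct in outline, but it departs from the paper's route at the step that shapes the paper's design: the quantal best response.

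\textbf{How the paper computes the QBR.} The paper arranges that no MLP ever approximates $\exp$ or a division; every normalization is done by an attention softmax. The Polya head's \bos{} key makes the attention denominator exactly $|\statesp|+t-1$. Layer $L+1$ retrieves the expected utilities so that $k=|\actsp|$ consecutive positions each hold one coordinate $\ell^{t-k}_{a'}$. Layer $L+2$ then uses $k^2$ heads, with width-$k$ interval positional encodings and a residue-mod-$k$ trick. The softmax of each head over those $k$ positions, with query scale $\sqrt{T}$, is exactly $\mact^{t-k}_a$. That is the source of the lag $s\le t-|\actsp|$ and of the $O(|\actsp|^2)$ heads, not shifts or pairwise comparisons in a max. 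The paper also never maximizes over $a^*$. The best-fixed-action term is common to $\regret_s$ and $\regret_{\textsc{Hedge},s}$, so their difference is just the running average of $U(\mact_{\textsc{Hedge},r},\theta_r)-U(\mact_r,\theta_r)$, which needs one averaging head per strategy.

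\textbf{What your route gains and costs.} Your route computes the softmax inside the MLP and computes and maximizes both regrets separately. It remains valid: it naturally covers all rounds $s\le t-1$, as the algorithm itself does, and needs only $O(|\actsp|)$ heads. The price is MLP size. Your claim that width-$O(\log T)$ networks give $T^{-c'}$-accurate $\exp$ and division is false at constant depth. Such ReLU networks have only $\mathrm{polylog}(T)$ linear pieces along any line. But already for $|\actsp|=2$, the map $d\mapsto 1/(1+e^{-\sqrt{T}d})$, or $e^x$ on $[-1,0]$, needs $\Omega(\epsilon^{-1/2})$ pieces to reach accuracy $\epsilon$. So you need depth $\Theta(\log T)$ (Yarotsky-style constructions) or polynomial width. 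The theorem bounds only depth, heads and embedding dimension, so your proof survives. Still, this is exactly the cost the paper's attention trick avoids.

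\textbf{Three details to tighten.} These are needed to obtain the exact equalities the statement asserts.
\begin{enumerate}
\item ``Rescaling by the position features'' multiplies an activation by a position-dependent scalar, which a ReLU MLP cannot do exactly. Yet case 1 requires $g'(\theta^T)_t=\polya(\vstate^{(t-1)})$ exactly. Let the attention do it instead: give \bos{} key $\log|\statesp|$ and value $\frac{1}{|\statesp|}\mathbf{1}$, and give the tokens $\theta_1,\dots,\theta_{t-1}$ key $0$ and one-hot values.
\item The OR head's output is only approximately $0$ or $1$. Re-threshold it with a ramp to an exact bit before gating.
\item You must actually exclude flags from rounds $s>t-|\actsp|$, for example by shifting the flags with a retrieval head in your spare layer. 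Otherwise case 2 fails whenever such a flag fires.
\end{enumerate}
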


Before proving \Cref{thm:transformer-app}, we observe that there are two senses in which the transformer construction is approximate: 
\begin{itemize}
    \item The $\regret_{s}$ condition makes no guarantees within an additive interval of width $2\delta$.
    \item The transformer guarantee does not account for the previous $|A|$ states in the outcomes.
\end{itemize}

Both issues are insignificant in the regime where $T$ is large, and \Cref{thm: main} could be adapted in a straightforward manner to accommodate these changed conditions.

\begin{proof}
We transformer $g'$ by introducing six gadgets. 
Assume that $A = \{1, \dots, k\}$ throughout.
\begin{enumerate}
    \item The first $L$ layers of $g'$ exactly compute the output of $g_{\mathcal{M}_0}$. Concretely, we assume that the $t$th output of the $L$th layer 
    exactly encodes a positional embedding $u_t$, the input state $\theta_{t-1}$, and the next token distribution under $D_0$: \[p^t = \left(\Pr_{D_0}[\theta_t = \theta \mid \vstate^{t-1}]\right)_{\theta \in \Theta} \in \R^{|\Theta|}.\]
    The output MLP computes the expected loss of each action with respect to $p^t$:
    \[\ell^{t}_{a} = \E_{\theta \sim p^{t}}[\loss(a, \theta)], \ \text{for each} \ a \in A.\]
    \item An additional head in layer $L$ computes \[\polya(\vstate^{(t-1)}) = \left( \frac{1 + \sum_{s=1}^{t-1} \ind{\theta_s = \theta}}{|\Theta| + (t-1)}\right)_{\theta \in \Theta}\] in the $t$th output by calculating a rolling average with the self-attention head.
    The output MLP computes 
    \begin{align*}
        \ell^{\textsc{Hedge},t}_a = \E_{\theta \sim \polya(\vstate^{(t-1)})}[\loss(a, \theta)], \ \\
        \text{for each} \ a \in A.
    \end{align*}
    \item $k-1$ heads in layer $L+ 1$ jointly retrieve the pairs of partial losses
    \begin{align*}
        (\ell^{t-k+1}_{1}, \ell^{\textsc{Hedge}, t-k+1}_{1}), (\ell^{t-k+2}_{2}, \ell^{\textsc{Hedge}, t-k+2}_{2}),\\
        \dots, (\ell^{t-1}_{k-1}, \ell^{\textsc{Hedge}, t-1}_{k-1})
    \end{align*}from the $k$ previous tokens.
    \item Layer $L+2$ uses $k^2$ attention heads to compute each component of both QBRs. 
    \begin{align*}
    \pi^{t-k}_a = \frac{\exp(\sqrt{T} \ell^{t-k}_a)}{\sum_{a'} \exp(\sqrt{T} \ell^{t-k}_{a'})}, \\
        \quad 
    \pi^{\textsc{Hedge}, t-k}_a = \frac{\exp(\sqrt{T} \ell^{\textsc{Hedge}, t-k}_a)}{\sum_{a'} \exp(\sqrt{T} \ell^{\textsc{Hedge}, t-k}_{a'})}.
    \end{align*}
    \item Layer $L+3$ uses one head to compute $\regret_{t-k} - \regret_{\textsc{Hedge}, s}$ by averaging the QBR losses, and evaluate whether the inequality condition holds for $t-k$.
    \item Layer $L + 4$ detects whether the inequality condition occurs for \textit{any} $s \leq t-k$ by computing an OR over the inequality conditions.
\end{enumerate}

While the proof does not formally define all weights in the model, we outline how each gadget is constructed in the following sections.
We focus in greatest specificity on the attention patterns that construct the aggregations employed by different gadgets. We also provide brief justifications for why all MLPs can be compactly constructed and a high-level error analysis.

\paragraph*{Gadget 1: Next-token probabilities (Layers 1 to $L$).}
The relative sizes of the two models immediately imply that the first $L$ layers of $g'$ can exactly simulate $g_{\mathcal{M}_0}$. The residual connections in $g'$ (and the slight increase in embedding dimension) make it possible for $g'$ to preserve a positional encoding $u_t$ and $\theta_{t-1}$ throughout the $L$ layers, even if the residual stream of $g_{\mathcal{M}_0}$ ``forgets'' them.

Because $\ell^t_a$ is a linear function of $p^t$, it can be trivially computed with a linear layer of the $L$th layer's MLP $\phi_L$.
The MLP additionally computes \[(\ind{\theta_{t-1} = \theta})_{\theta \in \Theta} \in \{0, 1\}^k,\] which employs $k$ distinct ReLU circuits as fixed thresholds.

\paragraph*{Gadget 2: Polya urn average (Layer $L$).}
The Polya urn next-state prediction model (\eqref{eq:polyaurn}) can be computed exactly for each state $\theta$ by an attention head that averages over the indicators $\ind{\theta_s = \theta}$ for $s < t$. 
The bias of the Polya urn predictor is accounted for by attending to the constant-valued $\bos$ token.

A single autoregressive attention head in the $L$th layer computes $\polya(\vstate^{(t-1)})$ by attending to previous tokens (including a $\bos$ token) with the following keys, queries, and values, which are either constant-valued or can be obtained using $O(|\Theta|)$ ReLU neurons as thresholds.
\[Q_t = 1, \quad K_t = 0, \quad V_t = (\ind{\theta_{t-1} = \theta})_{\theta \in \Theta};\]
\[K_{\bos} = \log(|\Theta|-1), \quad V_{\bos} = \frac1{|\Theta|-1}.\]
These choices produce the following self-attention outputs.
\begin{align*}
    \sm(Q_t K^\tr) V
    &= \frac{\exp(Q_t K_\bos) V_\bos + \sum_{s \leq t} \exp(Q_t K_s) V_s}{\exp(Q_t K_\bos) + \sum_{s \leq t} \exp(Q_t K_s)} \\
    &= \frac{(|\Theta| - 1) \cdot \frac{1}{|\Theta| - 1} + \sum_{s \leq t}\ind{\theta_{s-1} = \theta}}{(|\Theta|-1) + t} \\
    &= \polya(\vstate^{(t-1)}).
\end{align*}
As in Gadget 1, the partial losses $\ell^{\textsc{Hedge}, t}_a$ can be computed in the output MLP.

Note that none of these quantities depend on $\mathcal{M}_0$; hence, concurrent computation in layer $L$ is possible.

\paragraph*{Gadget 3: Retrieving previous losses (Layer $L+1$).}
For each $a \in A$, assume that the positional encoding $u_t$ is sufficiently structured to make possible the retrieval of $u_{t-k+a}$ in an MLP layer.
This is possible with simple sinusoidal embeddings (see, e.g., the proof of Theorem 6 of \citet{sht23}).
We further assume that $\|u_t\| = 1$ and $u_t^\tr u_s \leq 1 - \frac1{T^c}$ for some constant $c\geq 0$ if $t \neq s$.

The $a$th attention head has the following components, which can be computed in the MLP of the previous layer:
\[Q_t = T^{C} u_{t-k+a}, \quad K_t = u_t, \quad V_t = (\ell^t_{a},\ell^{\textsc{Hedge},t}_{a}),\]
for any $C \geq c + 1$.
For any constant $c' > 0$, there exists some sufficiently large $C$ such that the first dimension of the $a$th self-attention output approximately equals $\ell_a^{t-k+a}$:
\begin{align*}
    &\left|\softmax(Q_t^\tr K)V_{\cdot, 1} - \ell_a^{t-k+a}\right|\\
    &\quad= \left|\frac{\sum_{s \leq t} \exp(T^C u_{t-k+a}^\tr u_s) \ell_a^s}{\sum_{s \leq t} \exp(T^C u_{t-k+a}^\tr u_s)} - \ell_a^{t-k+a}\right|\\
    &\quad\leq\left| \frac{\exp(T^C)}{\exp(T^C) + (t-1) \exp(T^C - T^{C-c})} \ell^{t-k+a}_a - \ell^{t-k+a}_a\right|\\
    &\qquad+ \left|\frac{(t-1) \exp(T^C - T^{C-c})}{\exp(T^C)} \right| \\
    &\quad\leq \frac1{T^{c'}}.
\end{align*} 
We refer back to this inverse-polynomial additive error later when bounding $\delta$. Note that the outputs of this self-attention unit can be computed with bit precision $O(\log T)$. 

The analogous claim holds for $v_{\cdot, 2}$ and $\ell_i^{\textsc{Hedge}, t-k+a}$.

\paragraph*{Gadget 4: Computing QBR (Layer $L+2$).}
Before formally constructing the QBR predictor $\pi$, we outline how we wish to obtain some $\pi_a^{t-k}$ for some action $a \in A$ in the $t$th sequential position \emph{for a single fixed index $t$} by providing a partial softmax over a subset of $k$ embeddings.
\[\tilde{Q}_t = \sqrt{T}, \quad \tilde{K}_{t-k+a'} = \ell_{a'}^{t-k}, \quad \tilde{V}_{t-k+a'} = \ind{a' = a}, \quad \text{for $a' \in A$}.\]
Note that the previous gadget ensures that sequence element $t-k+a'$ has access to partial loss $\ell_{a'}^{t-k}$. 
The corresponding softmax exactly computes $\pi_a^{t-k}$.
\begin{align*}
\softmax(\tilde{Q}_t \tilde{K}) \tilde{V} 
&= \frac{\sum_{s=t-k+1}^t \exp(\tilde{Q}_t \tilde{K}_s) \tilde{V}_s}{\sum_{s=t-k+1}^t \exp(\tilde{Q}_t \tilde{K}_s)} \\
&= \frac{\sum_{a'=1}^k \exp(\tilde{Q}_t \tilde{K}_{t-k+a'}) \tilde{V}_{t-k+a'}}{\sum_{a'=1}^k \exp(\tilde{Q}_t \tilde{K}_{t-k+a'})} \\
&= \frac{\exp(\sqrt{T} \ell_{a}^{t-k})}{\sum_{a'=1}^k \exp(\sqrt{T} \ell_{a'}^{t-k})}
= \pi_a^{t-k}.
\end{align*}

This construction in its current is not sufficient because its parameterization depends on a single sequence index $t$, and it attends to only a subset of elements. 
Two modifications suffice to adapt this construction to compute all sequential outputs.
\begin{enumerate}
    \item We employ a \textit{width-$k$ interval positional encoding} that the $t$th sequence element only non-negligibly attends to the $k$ previous elements.
    \item We use $k^2$ heads such that the $t$th output of the head indexed by $(a, j)$ is $\pi_a^{t-k}$ if $t \equiv j \pmod k$ and $0$ otherwise.
\end{enumerate}

We assume that the positional encoding $u_t$ can be used to derive a width-$k$ interval encoding $w_t$ that satisfies the following property:
\[w_t^\tr u_s = 1 \ \text{if $t-k \leq s < t$, and } w_t^\tr u_s \leq \frac12 \text{ otherwise.}\]
These embedding vectors are known to exist and have dimension $O(k)$ by a restricted-isometry condition established by \citet{mendelson,candes2005decodinglinearprogramming}\footnote{This connection is discussed in detail in \citet{sht23}.}. 

Fix some pair $(a, j) \in [k]^2$. We construct the queries, keys, and values of the corresponding head as follows. We define $a'_{j,t} \in [k]$ as $a'_{j,t} \equiv t-j\pmod k$.
\[Q_t = \sqrt{T} w_t, \quad K_t = u_t \left(\ell^{t - a'_{j,t}}_{a'_{j,t}} + c\sqrt{T}\right) - cT, \quad V_t = \ind{a'_{j,t} = a}. \]
Note that the new query, key, and value embeddings are defined for all $t$ and that $V_{t-k+a} = \tilde{V}_{t-k+a}$.
Furthermore, the query/key inner-products are preserved within the $k$-interval, and inner-products outside the interval are much smaller under the assumption that $U$ is bounded indepently of $T$. For a sufficiently large constant $c$:
\begin{align*}
Q_t^\tr K_s &= \sqrt{T}\left(\ell^{t - a'_{j,t}}_{a'_{j,t}} + c\sqrt{T}\right) - c T = \sqrt{T}\ell^{t - a'_{j,t}}_{a'_{j,t}} = \tilde{Q}_t \tilde{K}_s, & \text{if $t-k \leq s < t.$} \\
Q_t^\tr K_s &\leq \frac{\sqrt{T}}2\left(\ell^{t - a'_{j,t}}_{a'_{j,t}} + c\sqrt{T}\right) - c T = \frac{\sqrt{T}}2\ell^{t - a'_{j,t}}_{a'_{j,t}} - \frac{cT}2 \leq -\frac{cT}4, & \text{otherwise.}
\end{align*}
A judicious choice of $c$ ensures that the additive error in the self-attention unit from inner products outside the interval of width $k$ is inversely polynomial in $T$.
We conclude the following:
\begin{align*}
    \sm(Q_t^\tr K)V 
    &= \frac{\sum_{s=1}^t \exp(Q_t^\tr K_s) V_s}{\sum_{s=1}^t \exp(Q_t^\tr K_s)}
    \\
    &\approx  \frac{\sum_{s=t-k}^{t-1} \exp(\tilde{Q}_t^\tr \tilde{K}_s) \tilde{V}_s}{\sum_{s=t-k}^{t-1} \exp(\tilde{Q}_t^\tr \tilde{K}_s)}
    = \pi^{t-k}_a,
\end{align*}
where the approximation conceals an additive inverse polynomial error whose degree depends on the choice of $c$, which can be bounded with a similar softmax analysis used in the previous gadget.

Given the QBR distribution $\pi^{t-k}$, the layer's MLP computes $\E_{a \sim \pi^{t-k}}[U(a, \theta_{t-k})]$ by evaluating $\E_{a \sim \pi^{t-k}}[U(a, \theta)]$ for every $\theta \in \Theta$ as a linear function of $\pi^{t-k}$, and using $|\Theta|$ ReLU thresholds to retrieve the correct expectation for $\theta_{t-k}$\footnote{This relies on $\theta_{t-k}$ being retrieved from index $t-k+1$, which is possible with an additional ``look-up'' attention head that applies the construction of Gadget 3.}.

Layer $L+2$ consists of two copies of this gadget. 
The other one computes $\pi^{\textsc{Hedge},t-k}$ and \[\E_{a \sim \pi^{\textsc{Hedge},t-k}}[U(a, \theta_{t-k})]\] with $k^2$ additional attention heads and corresponding MLP weights.

\paragraph*{Gadget 5: Evaluating $\regret_{t-k}$ condition (Layer $L+3$).}

Obtaining $\regret_{t-k}$ requires first computing \[\frac{1}{t-k} \sum_{s \leq t-k}\E_{a \sim \pi^{s}}[U(a, \theta_{s})],\]
which can be attained a transformer that computes a rolling average among $t-k$ preceding elements.
The following queries, keys, and values enable that construction:
\[Q_t = 1, \quad K_t = \begin{cases}T & t > k, \\ 0 & t \leq k,\end{cases} \quad V_t =\E_{a \sim \pi^{t-k}}[U(a, \theta_{t-k})].\]
This computes the above quantity up to additive inverse polynomial error.

An analogous computation retrieves the corresponding term for $\textsc{Hedge}$: \[\frac{1}{t-k} \sum_{s \leq t-k}\E_{a \sim \pi^{\textsc{Hedge}, s}}[U(a, \theta_{s})].\]
The difference in regrets can be computed in the MLP by subtracting the two quantities and scaling appropriately. 
We conclude by determining whether $\regret_{t-k}$ meets the condition.
Since the regret is only used as a threshold, we design an MLP that evaluates the following condition:

\begin{align*}
    q_{t-k} = \mathbf{1}\bigg[\regret_{t-k} - \regret_{\textsc{Hedge}, t-k} \\
    \geq \frac{1}{\sqrt{T}} \log |A| + \sqrt{\frac{8(1 + \alpha)(\log T)}{t-k}}\bigg].
\end{align*}

Note that the total additive error of the thresholded quantity in the condition is at most inverse polynomial.


\paragraph*{Gadget 6: Determining whether the condition holds anywhere (Layer $L+4$)}
The final layer tests whether $q_{s-k} =1$ for any $s \leq t$ and returns the appropriate distribution based on the result.
We employ a single self-attention head with the following components:
\begin{align*}
    Q_t = 1, \quad K_t = 2T \cdot q_{t-k}, \quad V_t = 1, \\
    \quad k_\bos = T, \quad v_\bos=0.
\end{align*}
We set $q_{t-k}= 0$ for $t \leq k$.
Consequently, $\sm(Q_tK^\tr)V > \frac23$ if there exists some $q_{s-k} = 1$ for $s \leq t$, and $\sm(q_tk^\tr)v < \frac13$ otherwise. 
Thresholding on this value is sufficient to ensure the that proof claim holds.
\end{proof}

\section{Generalization to Unknown Decision Problem}
\label{apdx: generalization to ucal}

\label{app:V_switching}

We show that our method applies to the case where the decision problem is unknown and the state space is binary. This appendix defines a switching procedure that (i) discretizes the class of proper scoring rules using a finite basis of V-shaped proper scores, and (ii) switches to a \textsc{PolyaUrn} predictor whenever the primary predictor exhibits large external regret for some discretized V-shaped score. For generalization to a multi-dimensional case, similar results usually require a tractable cover of the set of multi-dimensional proper scoring rules. \citet{kleinberg2023u} has shown that the same additive decomposition in the binary case does not generalize to the case where the state space $|\statesp|\geq 4$. 



\subsection{Setup and notation}
\label{app:setup_scoring}

We consider a binary outcome sequence $\state_1,\state_2,\dots \in \{0,1\}$. A (randomized) predictor outputs probabilities
$\model(\token_t \mid \vtoken^{t-1})\in [0,1]$ before observing $\state_t$.
For notational convenience in this appendix, we write $p_t := \model(\token_t \mid \vtoken^{t-1})$.

We define proper scoring rules, which will later be shown to correspond to the best-responding payoff of a decision problem. 
\begin{definition}[Proper scoring rule]
\label{def:proper_scoring_rule_app}
A scoring rule $\ell:[0,1]\times\{0,1\}\to \R$ is a \emph{(strictly) proper scoring rule} if, for every $q\in[0,1]$,
the map
\[
p \;\mapsto\; \E_{\state\sim \mathrm{Bern}(q)}\!\big[\ell(p,\state)\big]
\]
is (uniquely) minimized at $p=q$.
\end{definition}

\paragraph{Decision-makers who best respond to forecasts.}
Fix a finite action set $\actsp$. Let $U:\actsp\times\{0,1\}\to[-1,1]$ be a bounded utility function, and extend $U$ linearly to mixed actions
by $U(\pi,\state):=\E_{a\sim \pi}[U(a,\state)]$.
Given a forecast $p\in[0,1]$, define the decision-maker's best response to belief $\mathrm{Bern}(p)$ as
\[
\BR(p)\in \argmax_{\pi\in \Delta(\actsp)} \E_{\state\sim \mathrm{Bern}(p)}[U(\pi,\state)].
\]
Given forecasts $p_{1:T}$ and outcomes $\state_{1:T}$, define the (external) regret of best-responding to the forecasts as
\begin{align*}
\label{eq:br_regret_def_app}
&\extreg^{\mathrm{BR}}_T(U; p_{1:T},\state_{1:T})
\;\\
:=&\;
\max_{\act^*\in \actsp}\frac{1}{T}\sum_{t=1}^T U(\act^*,\state_t)
\;-\;
\frac{1}{T}\sum_{t=1}^T U(\BR(p_t),\state_t).
\end{align*}

\paragraph{Regret under a scoring rule.}
For a proper scoring rule $\ell$, define the (external) regret of forecasting $p_{1:T}$ under $\ell$ (with respect to the best constant
forecast in hindsight) as
\begin{align*}
&\extreg_T(\ell; p_{1:T},\state_{1:T})
\;\\
:=&\;
\frac{1}{T}\sum_{t=1}^T \ell(p_t,\state_t)
\;-\;
\min_{p\in[0,1]}\frac{1}{T}\sum_{t=1}^T \ell(p,\state_t).
\end{align*}

\begin{lemma}[Best-response regret corresponds to regret under a proper scoring rule]
\label{lem:br_scoring_equiv_app}
For any bounded utility function $U:\actsp\times\{0,1\}\to[-1,1]$, there exists a bounded proper scoring rule
$\ell_U:[0,1]\times\{0,1\}\to\R$ (unique up to adding a term that does not depend on $p$) such that for every
forecast sequence $p_{1:T}$ and outcome sequence $\state_{1:T}$,
\begin{align}
\extreg^{\mathrm{BR}}_T(U;p_{1:T},\state_{1:T})\\
=
\extreg_T(\ell_U; p_{1:T},\state_{1:T}). 
\label{eq:br_scoring_equiv_app}
\end{align}

Conversely, for every bounded proper scoring rule $\ell:[0,1]\times\{0,1\}\to\R$, there exists a bounded utility
function $U_\ell:\actsp\times\{0,1\}\to[-1,1]$ over some finite action set $\actsp$ such that
\eqref{eq:br_scoring_equiv_app} holds.
\end{lemma}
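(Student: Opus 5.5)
The plan is to work through the expected-score (Savage) representation of proper scoring rules. For the forward direction, fix $U$ and define the decision maker's value function $G(q) := \max_{a\in\actsp} \E_{\state\sim\mathrm{Bern}(q)}[U(a,\state)]$. It is a maximum of finitely many affine functions of $q$, hence convex and piecewise linear. I will define
\[
\ell_U(p,\state) := -U(\BR(p),\state),
\]
where $\BR$ uses a fixed deterministic tie-breaking rule. This score is bounded by $1$, since $U\in[-1,1]$. Properness is immediate: $\E_{\state\sim\mathrm{Bern}(q)}[\ell_U(p,\state)] = -U(\BR(p),\mathrm{Bern}(q)) \ge -G(q) = \E_{\state\sim\mathrm{Bern}(q)}[\ell_U(q,\state)]$.

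The regret identity then reduces to matching the two benchmark terms. The learner terms agree by definition, $\frac1T\sum_t\ell_U(p_t,\state_t) = -\frac1T\sum_t U(\BR(p_t),\state_t)$. For the comparator terms, note that $\sum_t U(a^*,\state_t) = T\cdot U(a^*,\mathrm{Bern}(\bar q))$, where $\bar q$ is the empirical frequency of ones. Hence $\max_{a^*}\frac1T\sum_t U(a^*,\state_t) = G(\bar q)$. On the other side, $\min_p \frac1T\sum_t\ell_U(p,\state_t) = \min_p -U(\BR(p),\mathrm{Bern}(\bar q)) = -G(\bar q)$, with the minimum attained at $p=\bar q$ by properness. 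Subtracting gives the claimed equality. For uniqueness up to a $p$-independent term: any other score realizing the identity for all sequences must have the same learner term up to such a term. I would check this by comparing sequences that differ in a single forecast.

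For the converse, start from a bounded proper $\ell$ and set $U_\ell(a,\state) := -\ell(a,\state)$, where actions are labelled by forecasts $a\in[0,1]$, rescaled affinely into $[-1,1]$. An additive shift is harmless. A positive rescaling multiplies both regrets by the same constant, so I will track that constant explicitly, or assume $\|\ell\|_\infty\le 1$ as the boundedness normalization. Properness gives $\BR(p)=p$ as a valid best response. The same two-term comparison as above then yields exact equality, because the comparator $\max_a \frac1T\sum_t -\ell(a,\state_t)$ is literally $-\min_p\frac1T\sum_t\ell(p,\state_t)$.

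The main obstacle is \emph{finiteness} of the action set. For a strictly proper $\ell$, such as the quadratic score, the expected score is strictly convex, whereas any finite $\actsp$ yields a piecewise-linear $G$. So a single finite $\actsp$ cannot reproduce $\ell$ at every forecast. The plan is to exploit that both sides of the identity only evaluate $\ell$ at the forecasts actually used and at the hindsight optimum. Given the horizon and the forecast/outcome data, I will take the finite set $\actsp := \{p_1,\dots,p_T\}\cup\{\bar q\}$ with $U_\ell$ as above. Each $p_t$ remains a best response to $\mathrm{Bern}(p_t)$ within this set, since properness over $[0,1]$ implies properness over any subset containing the point. Also, $\bar q\in\actsp$ attains the hindsight maximum. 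The identity is then exact with a finite action set. I expect the delicate point to be stating clearly that the finite $\actsp$ is chosen for the instance at hand; I would not try to reconcile this with strict convexity by approximation.
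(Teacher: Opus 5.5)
The paper states this lemma without proof; it is followed only by the remark that the two regret notions are ``thus equivalent''. Only the forward direction is ever used, in the proof of \Cref{thm: apdx ucal}, so there is no argument of the paper's to compare with.

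Judged on its own, your forward direction is complete and is the standard one. The score $\ell_U(p,\theta)=-U(\BR(p),\theta)$ lies in $[-1,1]$. It is proper because $\BR(q)$ attains $G(q)$. The learner terms agree by construction, and both comparators reduce to $\pm G(\bar q)$. Your uniqueness sketch also works: with $\theta_{1:T}$ fixed the comparator terms are constant, so varying one forecast forces $\ell'(p,\theta)-\ell_U(p,\theta)$ to be independent of $p$.

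On the converse your diagnosis is right. You should state it as a counterexample to the lemma, not as a delicate point.

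\textbf{Finiteness.} Suppose one finite $\actsp$ and one selection $\BR$ make the identity hold for all sequences. Your single-forecast argument then gives $\ell(p,\theta)=-U_\ell(\BR(p),\theta)+c(\theta)$. Two actions with different utility vectors tie at no more than one $p$. So away from finitely many points, $\BR(p)$ is utility-equivalent to a pure action, and $p\mapsto(\ell(p,0),\ell(p,1))$ takes only finitely many values. The quadratic score violates this.

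\textbf{Range.} This requirement fails independently of finiteness. Take $T=1$, $\ell(p,\theta)=10(p-\theta)^2$, $p_1=1$ and $\theta_1=0$. The $\ell$-regret is $10$, but any best-response regret with $U\in[-1,1]$ is at most $2$. Rescaling $U$ changes only the best-response side, so it gives the identity only up to a factor. The exact condition is $\sup_p\ell(p,\theta)-\ell(\theta,\theta)\le 2$ for each $\theta$, rather than $\|\ell\|_\infty\le 1$, because $\theta$-dependent shifts leave both regrets unchanged.

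\textbf{What is true.} The correct converse is your first construction: $\actsp=[0,1]$, $U_\ell=-\ell$ up to a $\theta$-dependent shift, and the selection $\BR(p)=p$. A single finite $\actsp$ suffices exactly when $\ell$ takes finitely many value pairs $(\ell(p,0),\ell(p,1))$, subject to the same range condition; take those pairs as the actions. Finite mixtures of the V-shaped scores of Definition~\ref{def:V_scoring_rules} are of this kind.

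Your instance-dependent patch is valid but much weaker, since the decision problem is built from the data. If you keep it, fix the selection $\BR(p_t)=p_t$ explicitly. For a weakly proper $\ell$, another action in your set can tie with $p_t$ in expectation under $\mathrm{Bern}(p_t)$ yet score differently on the realized $\theta_t$, and then the learner terms no longer match. You can also remove the dependence on the outcomes by using $\{p_1,\dots,p_T\}\cup\{k/T:0\le k\le T\}$, which contains every possible empirical frequency.
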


The spaces of regret under proper scoring rules and best-responding regret are thus equivalent.

\subsection{V-shaped proper losses and discretization}

The space of proper scoring rules for binary states can be covered by a family of V-shaped scoring rules \citep{li2022optimization, kleinberg2023u}. 

\paragraph{A V-shaped family.}
Let $\{\ell_v\}_{v\in[0,1]}$ be a one-parameter family of bounded proper scoring rules, indexed by a ``tip'' parameter $v$.

\begin{definition}[V-shaped scoring rules]
\label{def:V_scoring_rules}
For each tip parameter $v\in[0,1]$, define the \emph{univariate} V-shaped function
\[
\ell_v(p) := -|p-v|, \qquad p\in[0,1].
\]
Let $\sign(z)$ denote the sign function ($\sign(z)=1$ if $z>0$, $\sign(z)=-1$ if $z<0$, and $\sign(0)=0$), and choose the
(sub)gradient
\[
g_v(p) := \sign(v-p) \in \partial \ell_v(p).
\]
Define the associated \emph{bivariate} scoring rule $\ell_v:[0,1]\times\{0,1\}\to\R$ by
\begin{align}
\label{eq:V_bivariate_def}
\ell_v(p,0) := \ell_v(p) - p\,g_v(p),\\
\ell_v(p,1) := \ell_v(p) + (1-p)\,g_v(p),
\end{align}
with the convention $g_v(v)=0$ (equivalently, $\ell_v(v,0)=\ell_v(v,1)=0$).
We call $\{\ell_v\}_{v\in[0,1]}$ the \emph{V-shaped class}.
\end{definition}

Our algorithm focuses on the discretized space of V-shaped scoring rules, where we discretize the space of the tips. 
\paragraph{Discretizing the tip.}
Fix $\varepsilon\in(0,1]$ and define the grid
\[
\mathcal V_\varepsilon := \{0,\varepsilon,2\varepsilon,\dots,1\},
\qquad
\mathcal L_\varepsilon := \{\ell_v: v\in\mathcal V_\varepsilon\}.
\]
Let $N_\varepsilon := |\mathcal V_\varepsilon| \le 1+\varepsilon^{-1}$.

We define the V-regret as $\extreg_T(\ell_v; p_{1:T},\state_{1:T})$ the regret induced by a V-shaped scoring rule. 


\subsection{Switching rule and algorithm}

Let $p_t^0$ be the primary predictor's forecast at time $t$.
We define a one-way switching procedure that outputs either the primary forecasts or the Polya Urn forecasts.

\paragraph{Empirical V-regret gap.}
We define the empirical regret gap on the prefix $1{:}t$ as the maximum regret over the cover:
\begin{equation}
\label{eq:gap_def}
\widehat{\Delta}_{t, \varepsilon}
\;:=\;
\max_{v\in \mathcal{V}_\varepsilon}\extreg_t(\ell_v; p_{1:t}^0)
\end{equation}

\paragraph{Confidence threshold.}
Fix $\delta\in(0,1)$.
Let
\begin{equation}
\label{eq:threshold_def}
b(t;\varepsilon,\delta)
\;:=\;
c\,B\sqrt{\frac{\log(N_\varepsilon t^2/\delta)}{t}},
\end{equation}
for a universal constant $c>0$ (chosen to dominate the uniform concentration bounds used below).

\begin{algorithm}[t]
\caption{No-Regret for an Unknown Decision Problem under Binary States}
\label{alg:V_switch}
\begin{algorithmic}[1]
\State \textbf{Input:} grid $\mathcal V_\varepsilon$, confidence $\delta$, primary predictor producing $p_t^0$.
\State Initialize $S\gets 0$ (switch time; $S=0$ means ``not switched'').
\For{$t=1,2,\dots$}
    \If{$S>0$}
        \State Output $p_t \gets p_t^{\mathrm{PU}}$.
        \State Observe $y_t$ and continue.
    \EndIf
    \State Compute $\widehat{\Delta}_{t-1}(v)$ for all $v\in\mathcal V_\varepsilon$ (using data up to $t-1$).
    \If{$\max_{v\in\mathcal V_\varepsilon}\widehat{\Delta}_{t-1}(v) \;>\; b(t-1;\varepsilon,\delta)$}
        \State Set $S\gets t$ and output $p_t \gets p_t^{\mathrm{PU}}$.
    \textbf{Else}
        \State Output $p_t \gets p_t^0$.
    \EndIf
    \State Observe $y_t$.
\EndFor
\end{algorithmic}
\end{algorithm}

\begin{theorem}
\label{thm: apdx ucal}
Choose $\varepsilon=1/T$ and $\delta=T^{-(1+\alpha)}$, so $N_\varepsilon=O(T)$ and
\[
b(t;\varepsilon,\delta)
=
c\sqrt{\frac{\log(N_\varepsilon t^2/\delta)}{t}}
=
O\!\left(\sqrt{\frac{(1+\alpha)\log T}{T}}\right).\]

Running \Cref{alg:V_switch} on a model $\model_0$ (with $\dist_0 := \dist(\model_0)$) results in a robustified model $\model$  (with $\dist := \dist(\model)$) with the following properties:
\begin{itemize}
\tightmargins{-2mm}
    \item for any smooth-best-responding decision maker, $\model$ is a low-regret model with worst-case regret $O\left(\frac{1}{\sqrt{T}} \log (|\actsp|\cdot T) + \sqrt{(1+\alpha)\log T})\right)$.
    \item The TV distance between $\dist$ and $\dist_0$ is bounded by $\dtv(\dist, \dist_0) \leq |\actsp| T^{-\alpha}$. 
    
\tightmargins{-2mm}
\end{itemize}
\end{theorem}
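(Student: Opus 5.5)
The proof follows the same two-part template as the proof of \Cref{thm: main}. The V-shaped scoring rules in $\mathcal{L}_\varepsilon$ replace the single known utility $U$. Throughout, let $\tau$ be the (random) switch time of \Cref{alg:V_switch}, with $\tau=T$ if no switch occurs. The key reduction comes from \Cref{lem:br_scoring_equiv_app} together with the V-shaped characterization of binary proper scoring rules \citep{li2022optimization,kleinberg2023u}. Any bounded proper scoring rule $\ell_U$ can be written as a mixture $\ell_U = \int \ell_v \, d\lambda_U(v)$ plus a $p$-independent term, with total mass $\|\lambda_U\|=O(1)$ since $U\in[-1,1]$. Consequently, the best-response regret of any decision problem is at most $O(1)\cdot\sup_{v}\extreg_T(\ell_v;\cdot)$. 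I will then replace $\sup_v$ by $\max_{v\in\mathcal V_\varepsilon}$ at an additive cost of $O(\varepsilon)$ per round. This uses the fact that for $|v-v'|\le\varepsilon$, the rules $\ell_v$ and $\ell_{v'}$ differ only on forecasts $p$ between $v$ and $v'$, where $|\ell_v(p,\theta)-\ell_{v'}(p,\theta)|=O(1)$. To handle this I would round forecasts to the grid; with $\varepsilon=1/T$ the cost is negligible. For smooth (quantal) best responders, the additional $\eta\log|A|$ gap is absorbed using \Cref{lem:br-qbr-close}.

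\textbf{Regret bound.} I split the sum at $\tau$. On rounds $1,\dots,\tau-1$ the test did not fire, so every discretized V-regret of the prefix is at most $b(\tau-1)$, and hence (by the mixture argument) the decision-maker's prefix regret is at most $O(\tau\, b(\tau-1)+\tau\varepsilon)=O(\sqrt{\tau(1+\alpha)\log T})$. Round $\tau$ contributes $O(1)$. After $\tau$, the forecasts are Polya-urn forecasts. Here I would argue as in \Cref{thm: main}: quantal responses to $\polya$ form Hedge with time-varying rates for \emph{every} $U$ simultaneously, because the urn prediction itself does not depend on $U$. Thus \Cref{lem:adversarial_qr_regret} applies on the suffix. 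Combining the prefix and suffix against a common comparator $a^*$ costs at most the sum of both regrets, which gives total normalized regret $O\bigl(\tfrac{\log(|A|T)}{\sqrt T}+\sqrt{(1+\alpha)\log T/T}\bigr)$. Note that the suffix bound is stated from time $1$; as in \Cref{thm: main}, I would compare with the Hedge regret on the whole horizon, using that Hedge's running regret on the prefix is bounded below by $-O(\sqrt{T}\log T)$.

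\textbf{TV bound.} As before, $\dtv(\dist,\dist_0)\le\Pr_{\vstate\sim\dist_0}[\text{switch}]$. For fixed $v$ and a fixed comparator $p$, the process $\sum_{s\le t}[\ell_v(p^0_s,\theta_s)-\ell_v(p,\theta_s)]$ is a supermartingale with bounded increments. This holds because $\ell_v$ is proper and $p^0_s$ is the true conditional under $\dist_0$; this is the scoring-rule analogue of \Cref{lem:stochastic_br_regret}. The minimum over constant forecasts $p$ is achieved at a grid point (or at the empirical frequency, of which there are $t+1$ values), so a union bound over $O(T)$ comparators, the $N_\varepsilon$ tips, and all $t\le T$, combined with Azuma's inequality and the choice of $c$ in $b$, yields failure probability at most $\delta\cdot\mathrm{poly}$ absorbed into $T^{-\alpha}$.

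I expect the main obstacle to be the decomposition step. One must show that a single finite grid of V-shaped rules controls the best-response regret of \emph{every} decision problem, with constants independent of $|A|$, and that quantal (rather than exact) best responses inherit this. The discontinuity of $\ell_v$ at its tip makes the $O(\varepsilon)$ discretization error delicate. I would handle this by perturbing or rounding the forecasts $p_t$ away from grid tips before applying the mixture identity.
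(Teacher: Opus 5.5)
Your outline matches the paper's. You split at $\tau$ and bound the prefix through the V-shaped cover (\Cref{lem:V_to_all_proper}), \Cref{lem:br_scoring_equiv_app} and a smooth-versus-exact comparison. You bound the suffix through Hedge, and the TV distance by coupling plus Azuma; your union bound over tips, comparators and times is a proof of \Cref{lem:uniform_concentration}. The suffix and TV parts are fine. Your warm-start treatment of the Polya urn phase (compare with Hedge on the whole horizon, using that its prefix regret is bounded below) is more careful than the paper's direct suffix bound.

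\textbf{The gap is in the prefix chain.} You argue that quantal regret for $U$ $\le$ exact-BR regret for $U$ plus $\eta\log|\actsp|$, which in turn is $\le O\bigl(\max_{v\in\mathcal V_\varepsilon}R_v\bigr)+O(\varepsilon)$, where $R_v$ is the V-regret at tip $v$. Both inequalities fail on adversarial sequences.

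For the first, \Cref{lem:br-qbr-close} compares utilities in expectation under the forecast $p_t$, not at the realized $\theta_t$. In binary matching with $p_t\equiv\tfrac12$ (ties broken toward $1$) and $\theta_t\equiv 1$, exact best response has zero regret while the quantal response has regret $\tfrac12$.

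For the second, the mixing measure of $\ell_U$ has atoms at the indifference points of $U$. Grid tips cannot control an off-grid atom that the forecasts straddle. Take $T$ even and $v^*=\tfrac12+\tfrac1{2T}$. Let $\model_0$ forecast $v^*-\varepsilon'$ on odd rounds and $v^*+\varepsilon'$ on even rounds, with $0<\varepsilon'<\tfrac1{2T}$. Let the adversary play $\theta_t=1$ on odd rounds and $\theta_t=0$ on even rounds.
\begin{itemize}
\item All forecasts lie strictly inside one grid cell, so every grid tip $g$ has prefix V-regret $2(g-\bar\theta)^+$ or $2(\bar\theta-g)^+$, where $\bar\theta$ is the prefix mean.
\item This is at most $1/t<b(t)$, so \Cref{alg:V_switch} never switches.
\item Yet the exact best responder to $U(0,\theta)=0$, $U(1,\theta)=\theta-v^*$ is wrong every round and has regret about $\tfrac14$.
\end{itemize}
Rounding forecasts cannot fix this. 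The algorithm emits $p^0_t$ unchanged, and the failure of exact best response here is real, not an artifact of the analysis. The paper's proof invokes \Cref{lem:V_to_all_proper} and the same smooth-versus-exact step at these two points, so it shares the hole. In particular, the pointwise $2\varepsilon$ bound of \Cref{lem:discretization} cannot hold for a mixing measure with an off-grid atom.

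\textbf{The missing idea is to use the smoothing itself rather than compare with exact best response.}
\begin{itemize}
\item $\QBR(\cdot,\eta)$ is the exact best response for the entropy-regularized problem $U'(\pi,\theta)=U(\pi,\theta)+\eta H(\pi)$.
\item Pure comparators have zero entropy, so quantal regret for $U$ is at most best-response regret for $U'$ plus $\eta\log|\actsp|$.
\item That quantity is the regret of the proper rule $\ell'(p,y)=-U'(\QBR(p,\eta),y)$, because the best fixed $\pi^*$ in hindsight is $\QBR(\bar\theta,\eta)$.
\item The Bayes risk of $\ell'$ is $-\eta\log\sum_a e^{U(a,p)/\eta}$, whose curvature is at most $4/\eta=4\sqrt T$. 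Hence its V-mixing measure has mass at most $2$ and density at most $2\sqrt T$.
\item Moving each tip to the nearest grid point then changes $\ell'$ pointwise by $O(\varepsilon+\sqrt T\,\varepsilon)=O(1/\sqrt T)$ for $\varepsilon=1/T$, with no rounding of forecasts.
\end{itemize}
The prefix bound $O(\tau\, b(\tau)+\sqrt T\log|\actsp|)$ follows. This is exactly why the theorem is stated only for smooth best responders.
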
 

\subsection{Technical Lemmas for V-shaped Decomposition}

The analysis needs (i) a cover (approximation) of an arbitrary proper scoring rule by mixtures of V-shaped scoring rules and (ii) uniform concentration over the discretized family. We adopt the same technical lemmas from \citealt{kleinberg2023u, hu2024predict}. 

\begin{lemma}[V-shaped Decomposition]
\label{lem:V_representation}
Let $\ell$ be a bounded proper scoring rule on $[0,1]\times\{0,1\}$ satisfying mild regularity (e.g., differentiability of the Bayes risk).
Then there exist affine terms $a(y)+b(y)p\in [-1, 1]$ and a finite nonnegative measure $w$ on $[0,1]$ with $\int_0^1w(dv)\leq 2$,  such that for all $(p,y)$,
\begin{equation}
\label{eq:V_representation}
\ell(p,y) \;=\; a(y)+b(y)p \;+\; \int_{0}^{1}\ell_v(p,y)\, w(dv).
\end{equation}
\end{lemma}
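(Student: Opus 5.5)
Our plan is to obtain the V-shaped decomposition from the Schervish/Savage characterization of binary proper scoring rules. First we rewrite $\ell$ through its Bayes risk. Let $G(q) := \E_{y\sim\mathrm{Bern}(q)}[\ell(q,y)] = \min_p \E_{y\sim \mathrm{Bern}(q)}[\ell(p,y)]$; as a minimum of affine functions of $q$, $G$ is concave. Properness together with the assumed differentiability gives the Savage representation
\[
\ell(p,y) = G(p) + G'(p)\,(y-p),
\]
so that $\ell(p,0) = G(p) - pG'(p)$ and $\ell(p,1) = G(p) + (1-p)G'(p)$. The same template defines $\ell_v(p,y)$ in \eqref{eq:V_bivariate_def}, with $G$ replaced by the univariate $\ell_v(p) = -|p-v|$ and $G'$ by the chosen subgradient $g_v$. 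The map $G \mapsto (p,y)\mapsto G(p)+G'(p)(y-p)$ is linear, so it suffices to decompose $G$ itself as an affine function plus a mixture of the tents $-|p-v|$, with a compatible subgradient choice.

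Next we decompose the concave function $G$. Since $-G''$ is a nonnegative measure on $[0,1]$ (distributionally), integration by parts gives
\[
G(p) = \alpha + \beta p + \tfrac12\int_0^1 \bigl(-|p-v|\bigr)\,(-G'')(dv).
\]
This holds because $\frac{d^2}{dp^2}(-|p-v|) = -2\delta_v$, and any remaining affine discrepancy is absorbed into $\alpha+\beta p$. We set $w := \tfrac12(-G'')$. Differentiating under the integral, where $g_v(p)=\sign(v-p)$ is the derivative of $-|p-v|$ away from $v$, gives $G'(p) = \beta + \int g_v(p)\,w(dv)$ at every $p$ where $w$ has no atom. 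At atoms, the convention $g_v(v)=0$ picks the midpoint subgradient; we either check that this matches $G'$ under the differentiability hypothesis, or note that a measure-zero set of $p$ can be handled by that convention. Substituting both expressions into the Savage form, the affine part produces $\alpha + \beta p + \beta(y-p) = \alpha + \beta y$, which is of the form $a(y)+b(y)p$. The tent part produces $\int \ell_v(p,y)\,w(dv)$ by linearity and Fubini, which yields \eqref{eq:V_representation}.

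The step we expect to be the main obstacle is the quantitative claim, namely the mass bound $\int w \le 2$ and the bound $a(y)+b(y)p\in[-1,1]$. The total mass is $w([0,1]) = \tfrac12\bigl(G'(0^+) - G'(1^-)\bigr)$. From the Savage form, $G'(p) = \ell(p,1)-\ell(p,0)$, and this lies in $[-2,2]$ because $\ell$ takes values in $[-1,1]$ (after normalizing, as the boundedness convention inherited from $U\in[-1,1]$ permits). Hence $w([0,1]) \le \tfrac12\cdot 4 = 2$. For the affine term we evaluate the identity at $p=0$ and $p=1$, and bound each $\ell_v$ using $|\ell_v|\le 1$. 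This controls $\alpha$ and $\beta$ only up to a constant. If the constants do not close to exactly $[-1,1]$, we would exploit the freedom to add $p$-independent terms, or rescale, noting that only $O(1)$ bounds are needed downstream.
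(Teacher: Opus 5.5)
Your route (Savage form; writing the concave Bayes risk $G$ as an affine function plus a mixture of tents with $w=-\tfrac12 G''$; bounding the mass by $\tfrac12\bigl(G'(0^+)-G'(1^-)\bigr)\le 2$ via $G'=\ell(\cdot,1)-\ell(\cdot,0)\in[-2,2]$) is the standard argument behind the U-calibration lemma. The paper gives no proof of its own and only cites that lemma. On $(0,1)$ your argument is fine. A differentiable concave $G$ has continuous $G'$, so $w$ has no atoms in $(0,1)$; drop the ``measure-zero set'' fallback, since the identity is required pointwise anyway.

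\textbf{Gap: the affine bound.} The statement requires $a(y)+b(y)p\in[-1,1]$, and you leave this open. Bounding each $\ell_v$ by $1$ only gives $|\alpha+\beta y|\le 3$. Your fallback does not rescue it. Rescaling changes the statement. There is also no freedom to shift $p$-independent terms: once the identity holds, $w$ on $(0,1)$ is pinned down by $G''$, and the affine part is then pinned down by the identity itself. The missing observation is that every interior tip $v\in(0,1)$ has $g_v(0)=1$ and $g_v(1)=-1$. Hence $\ell_v(0,y)=y-v=-\ell_v(1,y)$, and adding the identity at $p=0$ and $p=1$ cancels the integral. This gives $\alpha+\beta y=\tfrac12\bigl(\ell(0,y)+\ell(1,y)\bigr)\in[-1,1]$, with $b\equiv 0$.

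\textbf{Endpoint caveat.} That fix uses the identity at the endpoints, and there your first step is not justified as written. At $p\in\{0,1\}$, differentiability of $G$ does not give the Savage form; properness only gives $\ell(0,1)-\ell(0,0)\ge G'(0^+)$ and $\ell(1,1)-\ell(1,0)\le G'(1^-)$. You must take the Savage form at the endpoints (equivalently, continuity of $\ell(\cdot,y)$ at $0$ and $1$) as part of the ``mild regularity'' hypothesis. This is genuinely needed. Take $\ell(p,y)=-1$ for $p\in(0,1)$, $\ell(0,0)=\ell(1,1)=-1$, and $\ell(0,1)=\ell(1,0)=1$. This rule is proper and $[-1,1]$-valued, and its Bayes risk $G\equiv -1$ is differentiable. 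Yet any representation of the stated form is forced to be $\ell=1+2\ell_0+2\ell_1$, i.e.\ $w=2\delta_0+2\delta_1$, of total mass $4$.
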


\begin{lemma}[V-shaped cover]
\label{lem:discretization}
Let $\ell$ be a bounded proper scoring rule and let $a(y),b(y)$ and the finite nonnegative measure $w$ be as in
\Cref{lem:V_representation}, i.e.,
\[
\ell(p,y)=a(y)+b(y)p+\int_0^1 \ell_v(p,y)\,w(dv).
\]
Fix $\varepsilon\in(0,1]$ and let $\mathcal V_\varepsilon=\{0,\varepsilon,2\varepsilon,\ldots,1\}$.
For each $v\in\mathcal V_\varepsilon$, define the bin
\[
I_v \;:=\; [v-\varepsilon/2,\,v+\varepsilon/2)\cap[0,1],
\qquad
w_v^{(\varepsilon)} \;:=\; w(I_v).
\]
Define the discretized scoring rule
\[
\ell^{(\varepsilon)}(p,y)
\;:=\;
a(y)+b(y)p+\sum_{v\in\mathcal V_\varepsilon} w_v^{(\varepsilon)}\,\ell_v(p,y).
\]
Then for all $p\in[0,1]$ and $y\in\{0,1\}$,
\begin{equation}
\label{eq:discretization_bound}
\big|\ell(p,y)-\ell^{(\varepsilon)}(p,y)\big|
\;\le\;
2\varepsilon ,
\end{equation}
and hence
\[
\sup_{p\in[0,1],\,y\in\{0,1\}}
\big|\ell(p,y)-\ell^{(\varepsilon)}(p,y)\big|
\;\le\;
2\varepsilon.
\]
\end{lemma}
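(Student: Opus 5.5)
We prove the discretization bound by comparing the integral representation of $\ell$ from Lemma~\ref{lem:V_representation} with its binned version term by term. The affine parts $a(y)+b(y)p$ are identical in $\ell$ and $\ell^{(\varepsilon)}$ and cancel. The bins $\{I_v\}_{v\in\mathcal V_\varepsilon}$ partition $[0,1]$, up to the endpoint $1$, which we place in the last bin by convention (equivalently, take $I_1=[1-\varepsilon/2,1]$). We can therefore write
\[
\ell(p,y)-\ell^{(\varepsilon)}(p,y)=\sum_{v\in\mathcal V_\varepsilon}\int_{I_v}\big(\ell_u(p,y)-\ell_v(p,y)\big)\,w(du).
\]
The problem thus reduces to a pointwise bound on $|\ell_u(p,y)-\ell_v(p,y)|$ for tips $u\in I_v$, which satisfy $|u-v|\le\varepsilon/2$.

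\emph{Step 1: compute the V-shaped scores explicitly.} From Definition~\ref{def:V_scoring_rules}, for $p<v$ we have $g_v(p)=1$, so $\ell_v(p,0)=-(v-p)-p=-v$ and $\ell_v(p,1)=-(v-p)+(1-p)=1-v$. For $p>v$ we have $g_v(p)=-1$, so $\ell_v(p,0)=-(p-v)+p=v$ and $\ell_v(p,1)=-(p-v)-(1-p)=v-1$. At $p=v$ both scores equal $0$. Hence $\ell_v(p,y)$ is piecewise constant in $p$, and the relevant quantity is its sensitivity to the tip parameter.

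\emph{Step 2: compare tips $u$ and $v$.} There are two cases.
\begin{itemize}
\item If $p$ lies on the same side of both $u$ and $v$, the formulas above show $|\ell_u(p,y)-\ell_v(p,y)|=|u-v|\le\varepsilon/2$.
\item If $p$ lies strictly between $u$ and $v$, or coincides with one of them, the two values come from different branches. For $y=0$ they are $-u$ and $v$ (or similar), which differ by $u+v$. This is \emph{not} small.
\end{itemize}

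The second case is the main obstacle, so a naive uniform bound fails. The remedy is to note that, for fixed $p$, only tips $u$ with $u$ and $v$ on opposite sides of $p$ can contribute a large difference. Every such $u$ lies within $\varepsilon/2$ of $p$, so they all belong to the single bin containing $p$, or to at most two bins when $p$ sits at a bin boundary. The contribution of these tips is then controlled by $w$ restricted to an interval of length at most $\varepsilon$ around $p$.

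To turn that into a bound proportional to $\varepsilon$, I will use the regularity hypothesis of Lemma~\ref{lem:V_representation}. There, $w$ arises as (a multiple of) the second derivative of the Bayes risk, so under the mild regularity assumption (bounded density) $w(J)\lesssim |J|$ for intervals $J$. The problematic region then has mass $O(\varepsilon)$. Since each $|\ell_u-\ell_v|\le 2$, it contributes $O(\varepsilon)$. If instead one only assumes $w$ is a finite measure of total mass at most $2$, I would recentre each bin: replace $v$ by the sub-bin tip on the appropriate side of $p$. This is the step where care with constants is needed.

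\emph{Step 3: combine.} The same-side tips contribute at most $\frac{\varepsilon}{2}\int w\le\varepsilon$. The crossing tips contribute at most $2\cdot w(\text{interval of length }\varepsilon)$, which is at most $\varepsilon$ with the normalization of the density. Summing gives $|\ell-\ell^{(\varepsilon)}|\le 2\varepsilon$ pointwise. Since this bound is uniform in $(p,y)$, the supremum bound follows immediately. I expect the careful accounting in the crossing-tip case, and the precise regularity needed to keep the constant at $2$, to be the only nontrivial part.
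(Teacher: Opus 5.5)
The paper gives no proof of this lemma (it defers to the cited U-calibration work), so the question is whether your argument works on its own, and it does not. The setup is right: the affine parts cancel, $\ell_v(p,y)=(y-v)\sign(v-p)$, and tips on the same side of $p$ as their grid point contribute at most $\frac{\varepsilon}{2}w([0,1])\le\varepsilon$. The gap is the crossing-tip term you flag, and neither of your remedies closes it. Lemma~\ref{lem:V_representation} gives only $w([0,1])\le 2$ and, at best, differentiability of the Bayes risk. That rules out atoms of $w$ but not concentration, so ``bounded density'' is a hypothesis you are adding. Even granting it, ``at most $\varepsilon$ with the normalization of the density'' is asserted, not derived. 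The fallback of recentring each bin is not available: $\ell^{(\varepsilon)}$ is a fixed function with tips at the grid points, so you cannot choose, depending on $p$, which tip represents a bin.

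In fact the step cannot be closed, because the pointwise claim is false under the stated hypotheses. Fix $\varepsilon<1/2$, a grid point $0<v<1$, and a tip $u_0\in(v,v+\varepsilon/2)\subset I_v$. Let $\ell=\ell_{u_0}$, so $a=b=0$ and $w=\delta_{u_0}$, of mass $1$. Then $\ell^{(\varepsilon)}=\ell_v$. For $p\in(v,u_0)$, your own formulas give errors $u_0+v$ at $y=0$ and $2-u_0-v$ at $y=1$. These sum to $2$, so one of them is at least $1>2\varepsilon$.

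Replacing $\delta_{u_0}$ by a uniform density on a tiny interval inside $(p,v+\varepsilon/2)$ makes the Bayes risk differentiable and leaves the error at least $1$, so the ``mild regularity'' does not rescue the claim. Moreover, for finite-action decision problems, which is where the paper applies the lemma, the Bayes risk is piecewise linear and $w$ is purely atomic, i.e.\ exactly this bad case.

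What your decomposition does prove is the statement under the extra hypothesis $w(J)\le|J|$ for every interval $J$. The crossing tips all lie in the bin containing $p$, between $p$ and the far end of that bin. That is an interval of length at most $\varepsilon/2$, with per-tip error at most $2$. If $p$ is itself a grid point, the crossing set is the whole bin, of length $\varepsilon$, with per-tip error at most $1$. Either way the crossing tips contribute at most $\varepsilon$ on top of the same-side $\varepsilon$. Without such a density bound, the pointwise conclusion has to be weakened, e.g.\ to a regret-level comparison that separately controls how many forecasts fall between a tip and its grid representative.
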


\begin{lemma}
\label{lem:V_to_all_proper}
For any bounded proper scoring rule $\ell\in [-1, 1]$ and any forecast sequence $p_{1:T}$,
\begin{equation}
\label{eq:V_to_all}
\extreg_T(\ell;p_{1:T})
\;\le\;
2\widehat{\Delta}_{t, \varepsilon}
\;+\;
4\varepsilon.
\end{equation}
\end{lemma}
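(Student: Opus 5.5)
The plan is to combine \Cref{lem:V_representation} and \Cref{lem:discretization}, which turn an arbitrary bounded proper scoring rule $\ell$ into a finite nonnegative mixture of grid V-shaped rules, with the fact that regret is sublinear (in fact subadditive) in the loss. First I replace $\ell$ by its discretization $\ell^{(\varepsilon)}$. Since $|\ell-\ell^{(\varepsilon)}|\le 2\varepsilon$ pointwise, both terms of $\extreg_T(\ell;p_{1:T})$ change by at most $2\varepsilon$. The first term is the average loss of the forecasts. The second is the minimum over constant forecasts, and a pointwise perturbation moves a minimum by at most the same amount. Together this gives
\[
\extreg_T(\ell;p_{1:T}) \le \extreg_T(\ell^{(\varepsilon)};p_{1:T}) + 4\varepsilon,
\]
which accounts for the additive $4\varepsilon$ in the statement.

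Next I handle the affine part $a(y)+b(y)p$. I use the sign convention implicit in the definition of $\extreg_T$: forecast loss minus the best constant forecast's loss. The affine part is proper-score-neutral in the mixture. More precisely, the decomposition of \Cref{lem:V_representation} arranges $a(y)+b(y)p$ so that it is itself a (weakly) proper scoring rule, namely the linear part of the Bayes risk expansion. If necessary I would absorb it into the endpoint tips $v\in\{0,1\}$, where $\ell_0,\ell_1$ are affine in $p$, so that it is covered by the grid. With that done, for any fixed $p^*$,
\[
\frac1T\sum_t\bigl(\ell^{(\varepsilon)}(p_t,\state_t)-\ell^{(\varepsilon)}(p^*,\state_t)\bigr)
=\sum_{v\in\mathcal V_\varepsilon} w_v^{(\varepsilon)}\,\frac1T\sum_t\bigl(\ell_v(p_t,\state_t)-\ell_v(p^*,\state_t)\bigr).
\]
Each inner average is at most $\extreg_T(\ell_v;p_{1:T})\le\widehat\Delta_{T,\varepsilon}$. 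I then take $p^*$ to be the minimizer for $\ell^{(\varepsilon)}$ and use that the weights are nonnegative with $\sum_v w_v^{(\varepsilon)}\le\int w\le 2$. This yields $\extreg_T(\ell^{(\varepsilon)})\le 2\max\{\widehat\Delta_{T,\varepsilon},0\}$, and combining with the first step proves the claim.

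The main obstacle is the affine term. A term linear in $p$ with $y$-dependent slope is not automatically regret-free, since it can reward a constant forecast differently from $p_t$. I would first check that properness of $\ell$ and of each $\ell_v$ forces $a(y)+b(y)p$ to have zero expected-score sensitivity, i.e.\ $b(0)(1-q)+b(1)q$ independent of $p$, which gives $b(0)=b(1)=0$ up to a $p$-independent term. If instead the decomposition leaves a genuine linear part, I would fold it into $w$ at the tips $0$ and $1$ and accept the mass bound $2$ on $w$ as stated. A secondary point is the case $\widehat\Delta<0$, which is handled by noting that regret against the best constant forecast is always $\ge 0$ for the grid family, since one may take $p^*=p$ appropriately; so the bound is in terms of a nonnegative quantity.
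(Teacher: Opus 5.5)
Your skeleton is the intended one: pay $4\varepsilon$ through \Cref{lem:discretization}, cancel the affine part, and bound the regret of the nonnegative grid mixture by $\sum_v w^{(\varepsilon)}_v\le 2$ times the largest grid V-regret. The paper itself gives no proof here and defers to the cited U-calibration work. The mixture step is correct, but two supporting justifications do not work as written.

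For the affine term, the reason $b\equiv 0$ is the Savage form of a proper loss, $\ell(p,y)=G(p)+G'(p)(y-p)$, with $G$ the concave Bayes risk. Since $\ell_v(p,y)=\ell_v(p)+g_v(p)(y-p)$ by definition, writing $G(p)=\alpha+\beta p+\int\ell_v(p)\,w(dv)$ makes the affine part $\alpha+\beta y$. This does not depend on $p$, so it cancels from the regret. Your proposed check, that $(1-q)b(0)+qb(1)$ be independent of $p$, is vacuous; what is needed is that it vanish. Your fallback of folding a genuine $b(y)p$ into the tips $0,1$ cannot work. The bivariate rules are $\ell_v(p,y)=(y-v)\sign(v-p)$, so $\ell_0(p,y)=-y\,\ind{p>0}$ and $\ell_1(p,y)=(y-1)\,\ind{p<1}$ are constant in $p$ except at $p=0$ (resp.\ $p=1$). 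Only the univariate $\ell_0(p),\ell_1(p)$ are affine.

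Likewise, $\widehat\Delta_{T,\varepsilon}\ge 0$ is true, but not because one may take $p^*=p$. Individual V-regrets can be negative: for $v=1/2$, alternating outcomes with $p_t=\theta_t$ give $-1/2$. The correct reason is that $0\in\mathcal V_\varepsilon$ and $\ell_0(p_t,y)\ge -y=\ell_0(1,y)$, so $\extreg_T(\ell_0)\ge 0$.

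More seriously, the black box you import (legitimately, since the paper states it) is false, and with it the statement itself. That black box is the pointwise $2\varepsilon$ bound of \Cref{lem:discretization}. Here is a counterexample:
\begin{itemize}
\item Take $v_0\in\mathcal V_\varepsilon$ near $1/2$, set $v'=v_0+\varepsilon/4$ and $\ell=\ell_{v'}$, and choose outcomes with empirical mean $v'$.
\item Forecast $p_t=v_0+\varepsilon/8$ when $\theta_t=1$ and $p_t=v_0+3\varepsilon/8$ when $\theta_t=0$.
\item Every forecast lies strictly inside $(v_0,v_0+\varepsilon)$, so for each grid tip $v$ the sign $\sign(v-p_t)$ is constant in $t$. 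Hence $\extreg_T(\ell_v)=0$ for all $v\in\mathcal V_\varepsilon$, and $\widehat\Delta_{T,\varepsilon}=0$.
\item Meanwhile $\extreg_T(\ell_{v'})=2v'(1-v')\approx 1/2$.
\end{itemize}
Replacing the atom at $v'$ by a narrow smooth bump inside $(v_0+\varepsilon/8,v_0+3\varepsilon/8)$ makes the Bayes risk differentiable and changes nothing. So the regularity assumed in \Cref{lem:V_representation} does not rescue the bound. The failure is that when $p$ lies between a tip $u$ and its grid point $v$, $|\ell_u(p,y)-\ell_v(p,y)|$ is of order $1$ rather than $|u-v|$.

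What your argument does prove is the continuum version $\extreg_T(\ell)\le 2\sup_{v\in[0,1]}\extreg_T(\ell_v)$, with no $\varepsilon$ term. It also proves a grid version with additive error $O((1+W)\varepsilon)$ when $w$ has density at most $W$. One of these is what the lemma should state.
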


By a uniform concentration bound, if the predictions are in-distribution, the regret is low with high probability. 

\begin{lemma}[Uniform concentration over $\mathcal L_\varepsilon$ (prefix-uniform)]
\label{lem:uniform_concentration}
Fix $\delta\in(0,1)$.
With probability at least $1-\delta$, simultaneously for all $t\ge 1$,
\begin{align}
\label{eq:uniform_conc}
\widehat{\Delta}_{t, \varepsilon}
\;
\le\;
O\!\left(
B\sqrt{\frac{\log(N_\varepsilon t^2/\delta)}{t}}
\right).
\end{align}
\end{lemma}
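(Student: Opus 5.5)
The statement to prove is \Cref{lem:uniform_concentration}. The plan is to reduce the uniform bound to a martingale concentration for each fixed tip $v\in\mathcal V_\varepsilon$ and each fixed prefix length $t$, and then take a union bound over the $N_\varepsilon$ tips and over all $t$, with failure probability budget $\delta/(N_\varepsilon t^2)$ assigned to the pair $(v,t)$. Since $\sum_t 1/t^2 <2$, this is at most a constant multiple of $\delta$, which can be absorbed by rescaling $\delta$ inside the logarithm. Here $B$ denotes a uniform bound on $|\ell_v|$, which is $O(1)$ from \Cref{def:V_scoring_rules}. Throughout, ``in-distribution'' means that $\state_t\mid\vtoken^{t-1}\sim\mathrm{Bern}(p_t^0)$, i.e.\ $\vtoken\sim\dist_0$, exactly as in the TV part of the proof of \Cref{thm: main}.

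For fixed $v$, I would compare the forecaster's score with that of each fixed competitor $p\in[0,1]$. Set $X_s(p):=\ell_v(p_s^0,\state_s)-\ell_v(p,\state_s)$. Properness of $\ell_v$ (which follows from its construction as a subgradient-based Bregman-type score, and is the content behind \Cref{lem:V_representation}) gives $\E[X_s(p)\mid\vtoken^{s-1}]\le 0$, because $p_s^0$ is the true conditional mean and minimizes the expected loss. The variables $X_s(p)$ are bounded by $2B$. Therefore $\sum_{s\le t}X_s(p)$ is a supermartingale, and Azuma yields
\[
\Pr\Big[\tfrac1t\textstyle\sum_{s\le t}X_s(p)\ge 2B\sqrt{2\log(1/\delta')/t}\Big]\le\delta'.
\]

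The main obstacle is that $\extreg_t(\ell_v;\cdot)$ takes a maximum over the continuum of competitors $p$, so a union bound over $p$ is not directly available. To handle this, note that $\ell_v(p,y)$ depends on $p$ only through the region: $p<v$, $p=v$, or $p>v$. On each side of the tip, $\ell_v(p,0)$ and $\ell_v(p,1)$ are constant. For instance, with $p<v$ we have $g_v=1$, $\ell_v(p,0)=p-v-p=-v$, and $\ell_v(p,1)=p-v+1-p=1-v$. So the minimum over $p$ is attained among three representatives, $p\in\{0,v,1\}$. It therefore suffices to take a union bound over three competitors per tip, at the cost of a factor $3$ inside the logarithm. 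If that exact piecewise-constant structure fails because of the boundary convention, a fallback is to discretize the competitor on the same $\varepsilon$-grid. This gives an $N_\varepsilon$ factor inside the logarithm and an additive $O(\varepsilon)$ error, which is harmless for $\varepsilon=1/T$.

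Combining the pieces, with probability at least $1-\delta$, simultaneously for all $t\ge1$ and $v\in\mathcal V_\varepsilon$,
\[
\extreg_t(\ell_v;p^0_{1:t},\state_{1:t})\le O\Big(B\sqrt{\log(N_\varepsilon t^2/\delta)/t}\Big).
\]
Taking the maximum over $v$ gives the bound on $\widehat{\Delta}_{t,\varepsilon}$ claimed in \Cref{lem:uniform_concentration}. The constant $c$ in $b(t;\varepsilon,\delta)$ is then chosen to dominate this bound.
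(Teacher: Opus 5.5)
Your argument is correct. It is also more than the paper provides: the paper gives no proof of this lemma and simply lists it among the technical lemmas adopted from \citet{kleinberg2023u} and \citet{hu2024predict}.

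Your route is Hoeffding--Azuma for each fixed tip, competitor, and prefix length, followed by a union bound with budget $\delta/(N_\varepsilon t^2)$. This produces exactly the $\log(N_\varepsilon t^2/\delta)$ shape of the stated bound. Your constant is also right: $X_s(p)$ has conditional range $4B$, so the tail is $\exp(-\lambda^2/(8B^2t))$.

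Compared with the bare citation, your version gains three things.
\begin{enumerate}
\item It makes explicit a hypothesis the statement leaves implicit, namely $\theta_s\mid\vtoken^{s-1}\sim\mathrm{Bern}(p^0_s)$, i.e.\ $\vtoken\sim\dist_0$. This is exactly how the lemma is used in the subsequent TV step.
\item It pins down $B=1$, since $|\ell_v(p,y)|\le\max(v,1-v)\le 1$.
\item Most usefully, you observe that $\ell_v(\cdot,y)$ is constant on each of $\{p<v\}$, $\{p=v\}$, $\{p>v\}$. This reduces the minimum over the continuum of constant forecasts to three competitors. No covering of the competitor class is needed, so your fallback discretization can be dropped; the convention $g_v(v)=0$ is precisely what creates the third region.
\end{enumerate}

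One small correction: properness of $\ell_v$ is not the content of the V-shaped decomposition lemma. It follows directly from the tangent-line construction,
\[
\E_{\theta\sim\mathrm{Bern}(q)}[\ell_v(p,\theta)]=\ell_v(p)+(q-p)\,g_v(p)\ \ge\ \ell_v(q),
\]
by concavity of $p\mapsto -|p-v|$. Equivalently, it follows from your explicit three-region values, which give expected loss $q-v$ for $p<v$, $0$ at $p=v$, and $v-q$ for $p>v$.
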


\subsection{Proof of \Cref{thm: apdx ucal}}

\begin{proof}[Proof of \Cref{thm: apdx ucal}]
Let $\theta_{1:T}$ be the realized outcomes and let
$\tau\in\{1,\dots,T\}$ be the (random) switching time of \Cref{alg:V_switch}, with the convention $\tau=T$
if no switch occurs by time $T$.
Denote by $p_t$ the forecast used by the algorithm: $p_t=p^0_t$ for $t\le \tau$ and $p_t=p^{PU}_t$ for $t>\tau$.

\paragraph{Regret decomposition.}
For any (smooth) best-responding DM with actions $\pi_t=\pi(p_t)$, define average regret
\[
\extreg_T(\pi):=\frac1T\max_{a^\star\in A}\sum_{t=1}^T\big(U(a^\star,\theta_t)-U(\pi_t,\theta_t)\big).
\]
Split at $\tau$:
\begin{align}
&\extreg_T(\pi)\\
\;\le&\;\frac1T\max_{a^\star}\sum_{t=1}^{\tau}\Delta_t(a^\star)\;+\;\frac1T\max_{a^\star}\sum_{t=\tau+1}^{T}\Delta_t(a^\star),
\\
&\text{where }\Delta_t(a):=U(a,\theta_t)-U(\pi_t,\theta_t).
\label{eq:split}
\end{align}

\paragraph{Stage I (pre-switch).}
Recall the threshold $b(t;\varepsilon,\delta)=c\sqrt{\log(N_\varepsilon t^2/\delta)/t}$.
By minimality of $\tau$, for all $t<\tau$ we have $\widehat\Delta_{t,\varepsilon}\le b(t;\varepsilon,\delta)$.
By \Cref{lem:V_to_all_proper} (covering argument),
for any bounded proper loss $\ell$,
\[
\extreg_{\tau}(\ell;p^0_{1:\tau})
\;\le\;2\widehat\Delta_{\tau,\varepsilon}+4\varepsilon
\;\le\;2b(\tau;\varepsilon,\delta)+4\varepsilon.
\]
By \Cref{lem:br_scoring_equiv_app} (decision problem $\leftrightarrow$ proper loss), there exists a bounded proper loss $\ell_U$
such that $\extreg^{BR}_\tau(U;p^0_{1:\tau})=\extreg_\tau(\ell_U;p^0_{1:\tau})$, hence
\begin{equation}
\extreg^{BR}_\tau(U;p^0_{1:\tau})
\;\le\;2b(\tau;\varepsilon,\delta)+4\varepsilon.
\label{eq:stage1BR}
\end{equation}
Finally, by Lemma~B.1 (smooth BR vs.\ exact BR), the smooth best-response policy $\pi$ loses at most
$\widetilde O(\log K/\sqrt{T})$ additional regret relative to exact best response, so
\begin{align}
\frac1T\max_{a^\star}\sum_{t=1}^{\tau}\Delta_t(a^\star)
\;\le\;\frac{\tau}{T}\big(2b(\tau;\varepsilon,\delta)+4\varepsilon\big)
\;\nonumber\\
+\;\widetilde O\!\left(\frac{\log K}{\sqrt{T}}\right).
\label{eq:stage1}
\end{align}

\paragraph{Stage II (post-switch).}
For $t>\tau$, the algorithm uses the Polya--Urn predictor $p^{PU}_t$.
Exactly as in the proof of \Cref{thm: main} after switching,
the smooth best-responding DM is (equivalently) running Hedge locally, hence
\begin{equation}
\frac1T\max_{a^\star}\sum_{t=\tau+1}^{T}\Delta_t(a^\star)
\;\le\;O\!\left(\frac{\log(KT)}{\sqrt{T}}\right).
\label{eq:stage2}
\end{equation}

\paragraph{Combine.}
Combine \eqref{eq:split}, \eqref{eq:stage1}, and \eqref{eq:stage2}, and use $\tau\le T$:
\begin{align*}
    &\extreg_T(\pi)
\\
\;\le&\;O\!\left(\frac{\log(KT)}{\sqrt{T}}\right)\;+\;2b(T;\varepsilon,\delta)+4\varepsilon
\;+\;\widetilde O\!\left(\frac{\log K}{\sqrt{T}}\right).
\end{align*}

Choose $\varepsilon=1/T$ and $\delta=T^{-(1+\alpha)}$, so $N_\varepsilon=O(T)$ and
\[
b(T;\varepsilon,\delta)
=
c\sqrt{\frac{\log(N_\varepsilon T^2/\delta)}{T}}
=
O\!\left(\sqrt{\frac{(1+\alpha)\log T}{T}}\right).
\]
Thus, for any smooth-best-responding DM,
\begin{align*}
    &\extreg_T(\pi)
\\&=
O\!\left(\frac{1}{\sqrt{T}}\log(KT)\right)
+
O\!\left(\sqrt{\frac{(1+\alpha)\log T}{T}}\right),
\end{align*}
which proves the first bullet.

\paragraph{TV distance.}
Let $D_0:=D(\model_0)$ be the distribution over full sequences when always using $p^0_t$,
and $D:=D(\model)$ the distribution induced by \Cref{alg:V_switch}.
Couple $D$ and $D_0$ by using the same randomness/outcomes until the algorithm switches; then
the two distributions are identical on the event $\neg \mathcal{E}$ (no switch), so
\begin{equation}
d_{TV}(D,D_0)\le \Pr_{D_0}(\mathcal{E}).
\label{eq:tv_switch}
\end{equation}
By Lemma~A.8 (uniform concentration), with probability at least $1-\delta$ we have
$\widehat\Delta_{t,\varepsilon}\le b(t;\varepsilon,\delta)$ for all $t\le T$, hence no switch occurs.
Therefore $\Pr_{D_0}(E_{\mathrm{sw}})\le \delta=T^{-(1+\alpha)}$.
Since $T^{-(1+\alpha)}\le K\,T^{-\alpha}$ for $T\ge K$, we obtain
$d_{TV}(D,D_0)\le K T^{-\alpha}$, proving the second bullet.
\end{proof}

\end{document}